\documentclass[11pt]{article}

\usepackage{epsfig,epsf,fancybox}
\usepackage{amsmath}
\usepackage{mathrsfs}
\usepackage{amssymb}
\usepackage{graphicx}
\usepackage{color}
\usepackage{multirow}
\usepackage{paralist}
\usepackage{verbatim}
\usepackage{galois}
\usepackage{algorithm}
\usepackage[noend]{algorithmic}
\usepackage{boxedminipage}
\usepackage{booktabs}
\usepackage{accents}
\usepackage{stmaryrd}
\usepackage[table]{xcolor}
\usepackage{hhline}

\usepackage{subfig}

\usepackage{natbib}

\usepackage{url}% for url's in bib
% for theorem hyperlink colors
\usepackage[colorlinks,linkcolor=magenta,citecolor=blue, pagebackref=true,backref=true]{hyperref}
\renewcommand*{\backrefalt}[4]{%
    \ifcase #1 \footnotesize{(Not cited.)}%
    \or        \footnotesize{(Cited on page~#2.)}%
    \else      \footnotesize{(Cited on pages~#2.)}%
    \fi}

\textheight 8.5truein
\topmargin 0.25in
\headheight 0in
\headsep 0in
\textwidth 6.8truein
\oddsidemargin  0in
\evensidemargin 0in

\newtheorem{theorem}{Theorem}[section]

\newtheorem{lemma}[theorem]{Lemma}
\newtheorem{proposition}[theorem]{Proposition}

\newtheorem{definition}[theorem]{Definition}

\newtheorem{remark}[theorem]{Remark}
\newtheorem{assumption}[theorem]{Assumption}

\numberwithin{equation}{section}

\newcommand{\EE}{\mathbb{E}}

\newcommand{\grad}{\nabla}
\newcommand{\gradx}{\nabla_{\mathbf x}}
\newcommand{\grady}{\nabla_{\mathbf y}}
\newcommand{\subg}{\partial}
\newcommand{\subgx}{\partial_{\mathbf x}}
\newcommand{\subgy}{\partial_{\mathbf y}}
\newcommand{\gx}{\textbf{g}_{\mathbf x}}
\newcommand{\gy}{\textbf{g}_{\mathbf y}}
\newcommand{\stocgx}{\hat{\bf g}_{\mathbf x}}
\newcommand{\stocgy}{\hat{\bf g}_{\mathbf y}}

\newcommand{\proj}{\mathcal{P}}

\newcommand{\x}{\mathbf x}
\newcommand{\y}{\mathbf y}

\newcommand{\sa}{\mathbf a}
\newcommand{\g}{\mathbf g}

\newcommand{\w}{\mathbf w}

\newcommand{\argmin}{\mathop{\rm argmin}}
\newcommand{\argmax}{\mathop{\rm argmax}}

\newcommand{\OCal}{\mathcal{O}}

\newcommand{\XCal}{\mathcal{X}}
\newcommand{\YCal}{\mathcal{Y}}

\newcommand{\prox}{\textnormal{prox}}

\newcommand{\br}{\mathbb{R}}

\newcommand{\ba}{\begin{array}}
\newcommand{\ea}{\end{array}}

\begin{document}

\begin{center}

{\bf{\LARGE{Two-Timescale Gradient Descent Ascent Algorithms for \\ [.2cm] Nonconvex Minimax Optimization}}}

\vspace*{.2in}
{\large{
\begin{tabular}{c}
Tianyi Lin$^\ddagger$ \and Chi Jin$^\square$ \and Michael I. Jordan$^{\diamond, \dagger}$ \\
\end{tabular}
}}

\vspace*{.2in}

\begin{tabular}{c}
Department of Electrical Engineering and Computer Sciences$^\diamond$ \\
Department of Statistics$^\dagger$ \\ 
University of California, Berkeley \\
Department of Industrial Engineering and Operations Research, Columbia University$^\ddagger$ \\
Department of Electrical and Computer Engineering, Princeton University$^\square$
\end{tabular}

\vspace*{.2in}

\today

\vspace*{.2in}

\begin{abstract}
We provide a unified analysis of \textit{two-timescale} gradient descent ascent (TTGDA) for solving structured nonconvex minimax optimization problems in the form of $\min_\x \max_{\y \in \YCal} f(\x, \y)$, where the objective function $f(\x, \y)$ is nonconvex in $\x$ and concave in $\y$, and the constraint set $\YCal \subseteq \br^n$ is convex and bounded. In the convex-concave setting, the single-timescale gradient descent ascent (GDA) algorithm is widely used in applications and has been shown to have strong convergence guarantees. In more general settings, however, it can fail to converge. Our contribution is to design TTGDA algorithms that are effective beyond the convex-concave setting, efficiently finding a stationary point of the function $\Phi(\cdot) := \max_{\y\in\YCal} f(\cdot, \y)$. We also establish theoretical bounds on the complexity of solving both \emph{smooth} and \emph{nonsmooth} nonconvex-concave minimax optimization problems. To the best of our knowledge, this is the first systematic analysis of TTGDA for nonconvex minimax optimization, shedding light on its superior performance in training generative adversarial networks (GANs) and in other real-world application problems. 
\end{abstract}
\end{center}

%!TEX root = paper.tex
\section{Introduction}
In the wake of von Neumann's seminal work~\citep{Neumann-1928-Theorie}, the problem of finding a saddle-point solution to minimax optimization problems, under various assumptions on the objective functions, has been a major focus of research in many areas, including economics, control theory, operations research and computer science~\citep{Basar-1999-Dynamic,Nisan-2007-Algorithmic,Von-2007-Theory}. Recently, minimax optimization theory has begun to see applications in machine learning, with examples including generative adversarial networks (GANs)~\citep{Goodfellow-2014-Generative}, robust statistics~\citep{Xu-2009-Robustness, Abadeh-2015-Distributionally}, online adversarial learning~\citep{Cesa-2006-Prediction}, robust training of deep neural networks~\citep{Sinha-2018-Certifiable} and distributed computing over networks~\citep{Shamma-2008-Cooperative, Mateos-2010-Distributed}. Current trends in multi-agent learning with large-scale economic systems seem likely to further increase the need for efficient algorithms for computing minimax optima and related equilibrium solutions.  
\begin{table}[!t]
\begin{tabular}{cc}
\begin{minipage}{.48\textwidth}
\begin{algorithm}[H]\small
\caption{TTGDA}\label{Algorithm:TTGDA}
\begin{algorithmic}
\STATE \textbf{Input:} $(\x_0, \y_0)$, $(\eta_\x^t, \eta_\y^t)_{t \geq 0}$ with $\eta_\x^t \ll \eta_\y^t$. 
\FOR{$t = 1, 2, \ldots, T$}
\STATE $(\gx^{t-1}, \gy^{t-1}) \in \subg f(\x_{t-1}, \y_{t-1})$. 
\STATE $\x_t \leftarrow \x_{t-1} - \eta_\x^{t-1} \gx^{t-1}$,
\STATE $\y_t \leftarrow \proj_\YCal(\y_{t-1} + \eta_\y^{t-1} \gy^{t-1})$. 
\ENDFOR
\STATE Randomly draw $\hat{\x}$ from $\{\x_t\}_{t=0}^T$ at uniform. 
\STATE \textbf{Return:} $\hat{\x}$. 
\end{algorithmic}
\end{algorithm}
\end{minipage} &
\begin{minipage}{.52\textwidth}
\begin{algorithm}[H]\small
\caption{TTSGDA}\label{Algorithm:TTSGDA}
\begin{algorithmic}
\STATE \textbf{Input:} $(\x_0, \y_0)$, $M$, $(\eta_\x^t, \eta_\y^t)_{t \geq 0}$ with $\eta_\x^t \ll \eta_\y^t$.  
\FOR{$t = 1, 2, \ldots, T$}
\STATE $(\stocgx^{t-1}, \stocgy^{t-1}) = \textsf{SG}(G, \{\xi_i^{t-1}\}_{i=1}^M, \x_{t-1}, \y_{t-1})$. 
\STATE $\x_t \leftarrow \x_{t-1} - \eta_\x^{t-1} \stocgx^{t-1}$. 
\STATE $\y_t \leftarrow \proj_\YCal(\y_{t-1} + \eta_\y^{t-1} \stocgy^{t-1})$. 
\ENDFOR
\STATE Randomly draw $\hat{\x}$ from $\{\x_t\}_{t=0}^T$ at uniform. 
\STATE \textbf{Return:} $\hat{\x}$. 
\end{algorithmic}
\end{algorithm}
\end{minipage}
\end{tabular}
\end{table}

Formally, we study the following minimax optimization problem:
\begin{equation}\label{prob:main}
\min_{\x \in \br^m} \max_{\y \in \YCal} \ f(\x, \y), 
\end{equation}
where $f: \br^m \times \br^n \mapsto \br$ is the objective function and $\YCal$ is a constraint set. One of simplest methods for solving the minimax optimization problems is to generalize gradient descent (GD) so that each loop comprises both a descent step and an ascent step. The algorithm, referred to as \emph{gradient descent ascent} (GDA), performs one-step gradient descent over the min variable $\x$ with a learning rate $\eta_\x > 0$ and one-step gradient ascent over the max variable $\y$ with another learning rate $\eta_\y > 0$. 

On the positive side, when $f$ is convex in $\x$ and concave in $\y$, there is a vast literature establishing asymptotic and nonasymptotic convergence for average iterates generated by classical single-timescale GDA (i.e., $\eta_\x = \eta_\y$)~\citep[see, e.g.,][]{Korpelevich-1976-Extragradient, Chen-1997-Convergence, Nemirovski-2004-Prox, Auslender-2009-Projected, Nedic-2009-Subgradient}. In addition, local linear convergence can be established under an additional assumption that $f$ is locally strongly convex in $\x$ and locally strongly concave in $\y$~\citep{Cherukuri-2017-Saddle, Adolphs-2019-Local}. In contrast, there has been no shortage of negative results that highlight the fact that in the general settings, the single-timescale GDA algorithm might converge to limit cycles or even diverge~\citep{Mertikopoulos-2018-Cycles}.  

A recent line of research has focused on alternative gradient-based algorithms that have guarantees beyond the convex-concave setting~\citep{Daskalakis-2018-Training, Heusel-2017-Gans, Mertikopoulos-2019-Optimistic}. Notably, the two-timescale GDA (TTGDA), which incorporates unequal learning rates ($\eta_\x \neq \eta_\y$),  has been shown empirically to alleviate problems with limit circles; indeed, it attains theoretical support in terms of local asymptotic convergence (see~\citet[Theorem~2]{Heusel-2017-Gans} for the formal statements and proofs). For sequential problems such as robust learning, where the natural order of min-max is important (i.e., min-max is not equal to max-min), practitioners often prefer the faster convergence for the max problem. As such, there can be certain practical motivations for choosing $\eta_\x \ll \eta_\y$ in addition to those relating to nonasymptotic convergence guarantees.

We provide detailed pseudocode for TTGDA and its stochastic counterpart (TTSGDA) in Algorithms~\ref{Algorithm:TTGDA} and~\ref{Algorithm:TTSGDA}. Note that Algorithm~\ref{Algorithm:TTSGDA} repeatedly samples while remaining at the current iterate and the subroutine $(\stocgx^t, \stocgy^t) = \textsf{SG}(G, \{\xi_i^t\}_{i=1}^M, \x_t, \y_t)$ can be implemented in general by the updates 
\begin{equation}
\stocgx^t = \tfrac{1}{M}\left(\sum_{i=1}^M G_\x(\x_t, \y_t, \xi_i^t)\right), \quad \stocgy^t = \tfrac{1}{M}\left(\sum_{i=1}^M G_\y(\x_t, \y_t, \xi_i^t)\right), 
\end{equation}
where $G = (G_\x, G_\y)$ is unbiased and has bounded variance such that
\begin{equation}\label{Assumption:SGDA}
\EE[G(\x, \y, \xi) \mid \x, \y] \in \subg f(\x, \y), \qquad \EE[\|G(\x, \y, \xi) - \EE[G(\x, \y, \xi)]\|^2 \mid \x, \y] \leq \sigma^2. 
\end{equation}
In the current paper, we aim to provide a theoretical analysis of these algorithms, extending the asymptotic analysis in~\citet[Theorem~2]{Heusel-2017-Gans} to a nonasymptotic analysis that characterizes algorithmic efficiency.  Notably, we pursue our analysis in a general setting in which $f(\x, \cdot)$ is concave for any $\x$ and $\YCal$ is a convex and bounded set. This setting arises in many practical applications, including robust training of a deep neural network and robust learning of a classifier from multiple distributions. Both of these models can be reformulated as nonconvex-concave minimax optimization problems. 

Our main results in this paper are presented in four theorems. In particular, the first and second theorems (Theorems~\ref{Thm:nsc-smooth} and~\ref{Thm:nc-smooth}) provide a complete characterization of TTGDA and TTSGDA when applied to solve \textit{smooth} minimax optimization problems. In the nonconvex-strongly-concave setting, TTGDA and TTSGDA require $O(\kappa^2\epsilon^{-2})$ and $O(\kappa^3\epsilon^{-4})$ (stochastic) gradient evaluations to find an $\epsilon$-stationary point (cf. Definition~\ref{def:notion-smooth}) where $\kappa$ is a condition number. In the nonconvex-concave setting, TTGDA and TTSGDA require $O(\epsilon^{-6})$ and $O(\epsilon^{-8})$ (stochastic) gradient evaluations to find an $\epsilon$-stationary point (cf. Definition~\ref{def:notion-nonsmooth}). While these results have appeared in our preliminary work~\citep{Lin-2020-Gradient}, the analysis that we provide here is simpler and more direct. 

Another two theorems (Theorems~\ref{Thm:nsc-nonsmooth} and~\ref{Thm:nc-nonsmooth}) are the more substantive contributions of this paper. These theorems show that TTGDA and TTSGDA can efficiently find an $\epsilon$-stationary point (cf. Definition~\ref{def:notion-nonsmooth}) even when applied to solve \textit{nonsmooth} minimax optimization problems. The complexity bounds of $\tilde{O}(\epsilon^{-6})$ and $O(\epsilon^{-8})$ in terms of (stochastic) gradient evaluations can be established for the nonconvex-strongly-concave and nonconvex-concave settings. These results are useful since the aforementioned machine learning applications often have the nonsmooth objective functions (e.g., the use of ReLU neural network in GANs). Our theorems demonstrate that, when lack of smoothness occurs, TTGDA and TTSGDA may slow down, but the nonasymptotic complexity bounds will still be attained as long as a certain Lipschitz continuity property is retained and adaptive stepsizes are chosen carefully (i.e., our algorithms use $(\eta_\x^t, \eta_\y^t)_{t \geq 0}$ instead of $(\eta_\x, \eta_\y)$). 

Our proof technique in this paper is worth being highlighted. It is different from existing techniques for analyzing nested-loop algorithms as exemplified in~\citet{Nouiehed-2019-Solving} and~\citet{Jin-2020-Local}. Indeed, the previous frameworks are based on inexact gradient descent algorithms for minimizing a function $\Phi(\cdot) = \max_{\y \in \YCal} f(\cdot, \y)$, where a subroutine is designed for the inner problem of $\max_{\y \in \YCal} f(\x, \y)$ and which requires $\y_t$ to be close to $\y^\star(\x_t) = \argmax_{\y \in \YCal} f(\x_t, \y)$ at each iteration. Applying this approach to analyze TTGDA and TTSGDA is challenging since $\y_t$ might not be close enough to $\y^\star(\x_t)$, so that $\gradx f(\x_t, \y_t)$ does not approximate a descent direction well. In order to overcome this issue, we have developed a new proof technique that directly analyzes the concave optimization problem with a slowly changing objective function. 

Subsequent to our work in~\citet{Lin-2020-Gradient}, several new algorithms were developed for solving smooth and nonconvex-(strongly)-concave minimax optimization problems~\citep{Lin-2020-Near, Yang-2020-Catalyst, Ostrovskii-2021-Efficient, Yan-2020-Optimal, Zhang-2020-Single, Zhang-2021-Complexity, Li-2021-Complexity, Yang-2022-Faster, Zhao-2023-Primal, Boct-2023-Alternating, Xu-2023-Unified}. In the smooth and nonconvex-strongly-concave setting, a lower bound has been derived for gradient-based algorithms~\citep{Zhang-2021-Complexity, Li-2021-Complexity} and there exist various nested-loop \textit{near-optimal} deterministic algorithms~\citep{Lin-2020-Near, Ostrovskii-2021-Efficient, Kong-2021-Accelerated, Zhang-2021-Complexity}. In the smooth and nonconvex-concave setting, no lower bound is known, and the best-known upper bound has been achieved by various nested-loop algorithms~\citep{Thekumparampil-2019-Efficient, Lin-2020-Near, Yang-2020-Catalyst, Ostrovskii-2021-Efficient, Kong-2021-Accelerated, Zhao-2023-Primal}. Compared to deterministic counterparts, investigations of stochastic algorithms are scarce; indeed, there exists a gap between the best-known lower bound~\citep{Li-2021-Complexity} and upper bound~\citep{Yan-2020-Optimal}. 

A growing body of research has focused on \emph{single-loop} algorithms for nonconvex minimax optimization problems~\citep{Zhang-2020-Single, Yang-2022-Faster, Boct-2023-Alternating, Xu-2023-Unified}. For example,~\citet{Xu-2023-Unified} proposed an alternating GDA which converges in both nonconvex-concave and convex-nonconcave settings and showed that the required number of (stochastic) gradient evaluations to find an $\epsilon$-stationary point of $f$ is the best among all single-loop algorithms.~\citet{Boct-2023-Alternating} studied the alternating TTGDA and TTSGDA in a more general setting where $f(\cdot, \y)$ is Lipschitz and weakly convex but $f(\x, \cdot)$ is smooth. They showed that the algorithms can achieve the same complexity bound in this setting as in the smooth setting. These two papers extended our proof techniques to analyze the alternating TTGDA and TTSGDA but still assume the smoothness of $f(\x, \cdot)$. By contrast, the results in the current paper are the first to demonstrate the efficiency of TTGDA and TTSGDA in the nonsmooth setting. Both~\citet{Zhang-2020-Single} and~\citet{Yang-2022-Faster} proposed single-loop algorithms for solving other structured nonconvex-concave minimax optimization problems with theoretical guarantee. 

\paragraph{Notation.}  For $n \geq 2$, we let $[n]$ denote the set $\{1, 2, \ldots, n\}$ and $\br_+^n$ denote the set of all vectors in $\br^n$ with nonnegative coordinates. We also use bold lower-case letters to denote vectors (i.e., $\x$ and $\y$), and calligraphic upper-case letters to denote sets (i.e., $\XCal$ and $\YCal$). For a Lipschitz function $f$, we let $\subg f(\cdot)$ denote its Clarke subdifferential~\citep{Clarke-1990-Optimization}. For a differentiable function $f$, we let $\grad f(\cdot)$ denote its gradient. For a Lipschitz function $f(\cdot, \cdot)$ of two variables, we let $\subgx f(\cdot, \cdot)$ and $\subgy f(\cdot, \cdot)$ denote its partial Clarke subdifferentials with respect to the first and second variables. For a differentiable function $f(\cdot, \cdot)$, we let $\gradx f(\cdot, \cdot)$ and $\grady f(\cdot, \cdot)$ denote the partial gradients of $f$ with respect to the first and second variables. For a vector $\x$ and a matrix $X$, we let $\|\x\|$ and $\|X\|$ denote the $\ell_2$-norm of $\x$ and the spectral norm of $X$. For a set $\XCal$, we let $D = \max_{\x, \x' \in \XCal} \|\x - \x'\|$ denote its diameter and $\proj_\XCal$ denote its orthogonal projection. Lastly, we use $O(\cdot)$, $\Omega(\cdot)$ and $\Theta(\cdot)$ to hide absolute constants which do not depend on any problem parameter, and $\tilde{O}(\cdot)$, $\tilde{\Omega}(\cdot)$ and $\tilde{\Theta}(\cdot)$ to hide both absolute constants and log factors.

%!TEX root = paper.tex
\section{Related Work}
Historically, an early concrete instantiation of minimax optimization involved computing a pair of probability vectors $\left(\x, \y\right)$ for solving the problem $\min_{\x \in \Delta^m} \max_{\y \in \Delta^n} \x^\top A \y$ for $A \in \br^{m \times n}$ and simplices $\Delta^m \subseteq \br^m$ and $\Delta^n \subseteq \br^n$. This bilinear minimax problem together with von Neumann's minimax theorem~\citep{Neumann-1928-Theorie} were a cornerstone in the development of game theory. A simple and generic algorithm was developed for solving this problem where the min and max players implemented a learning procedure in tandem~\citep{Robinson-1951-Iterative}. Subsequently,~\citet{Sion-1958-General} generalized von Neumann's result from bilinear games to general convex-concave games, i.e., $\min_\x \max_\y f(\x, \y) = \max_\y \min_\x f(\x, \y)$, and triggered a line of research on algorithmic design for convex-concave minimax optimization in continuous time~\citep{Kose-1956-Solutions, Cherukuri-2017-Saddle} and discrete time~\citep{Uzawa-1958-Iterative, Golshtein-1974-Generalized, Korpelevich-1976-Extragradient, Nemirovski-2004-Prox, Auslender-2009-Projected, Nedic-2009-Subgradient, Liang-2019-Interaction, Mokhtari-2020-Convergence, Mokhtari-2020-Unified, Azizian-2020-Tight}. This line of research has yielded two important conclusions: single-timescale GDA, with decreasing learning rates, can find an $\epsilon$-approximate saddle point within $O(\epsilon^{-2})$ iterations in the convex-concave setting~\citep{Auslender-2009-Projected}, and $O(\kappa^2 \log (1/\epsilon))$ iterations in the strongly-convex-strongly-concave setting~\citep{Liang-2019-Interaction}.

\paragraph{Nonconvex-concave setting.} Smooth and nonconvex-concave minimax optimization problems are a class of computationally tractable problems in the form of Eq.~\eqref{prob:main}. They have emerged as a focus in the optimization and machine learning communities~\citep{Sinha-2018-Certifiable, Rafique-2021-Weakly, Sanjabi-2018-Convergence, Grnarova-2018-An, Lu-2020-Hybrid, Nouiehed-2019-Solving, Thekumparampil-2019-Efficient, Ostrovskii-2021-Efficient, Kong-2021-Accelerated, Zhao-2023-Primal}. Among the existing works, we highlight the work of~\citet{Grnarova-2018-An}, who proposed a variant of GDA for this class of nonconvex minimax optimization problems and the works of~\citet{Sinha-2018-Certifiable} and~\citet{Sanjabi-2018-Convergence}, who studied inexact SGD algorithms (i.e., variants of SGDmax) and established a convergence guarantee in terms of iteration numbers. Subsequently,~\citet{Jin-2020-Local} proved the refined convergence results for GDmax and SGDmax in terms of (stochastic) gradient evaluations. 

There are some other algorithms developed for solving smooth and nonconvex-concave minimax optimization problems. For example,~\citet{Rafique-2021-Weakly} proposed two stochastic algorithms and prove that they efficiently find a stationary point of $\Phi(\cdot) = \max_{\y\in \YCal} f(\cdot, \y)$. However, these algorithms are nested-loop algorithms and the convergence guarantees hold only when $f(\x, \cdot)$ is linear and $\YCal = \br^n$ (see~\citet[Assumption 2]{Rafique-2021-Weakly}). \citet{Nouiehed-2019-Solving} have designed a multistep variant of GDA by incorporating momentum terms into the scheme.\footnote{\citet{Nouiehed-2019-Solving} considered a setting where (i) $\x$ is constrained to a convex set $\XCal$; (ii) $\YCal=\br^n$; and (iii) $f$ is smooth in $\x$ over $\XCal$ and satisfies the Polyak-\L{}ojasiewicz condition in $\y$.} In this context, the best-known gradient complexity is $\tilde{O}(\epsilon^{-3})$ and has been achieved by the ProxDIAG algorithm~\citep{Thekumparampil-2019-Efficient}. This complexity result was also achieved by an accelerated inexact proximal point algorithm. Indeed,~\citet{Kong-2021-Accelerated} showed that their algorithm finds one $\epsilon$-stationary point of $\Phi$ within $O(\epsilon^{-3})$ iterations.\footnote{\citet{Kong-2021-Accelerated} considered a more general setting than ours, where $\x$ is constrained to a convex set $\XCal$ and $f$ is smooth in $\x$ over $\XCal$. Their analysis and results are valid in our setting.} Solving the subproblem at each iteration is equivalent to minimizing a smooth and strongly convex function and can be approximately done using $\tilde{O}(1)$ gradient evaluations. Combining these two facts yields the desired claim. Subsequent to~\citet{Lin-2020-Gradient}, this complexity bound was also achieved by other algorithms~\citep{Lin-2020-Near, Yang-2020-Catalyst, Ostrovskii-2021-Efficient, Zhao-2023-Primal}. Despite the appealing theoretical convergence guarantee, the above algorithms are all nested-loop algorithms and thus can be complex to implement. One would like to understand whether the nested-loop structure is necessary or whether a single-loop algorithm provably converges in smooth and nonconvex-concave settings. An example of such an investigation is~\citet{Lu-2020-Hybrid}, where a single-loop algorithm was developed for solving smooth and nonconvex-concave minimax optimization problems. However, their analysis requires some restrictive conditions; e.g., that $f(\cdot, \cdot)$ is lower bounded. In contrast, we only require that $\max_{\y \in \YCal} f(\cdot, \y)$ is lower bounded. An example which meets our conditions rather than theirs is $\min_{\x \in \br}\max_{\y \in [-1, 1]} \x^\top\y$. Our less-restrictive conditions make the problem more challenging and our analysis accordingly differs significantly from theirs. 

To our knowledge, we are the first to analyze TTGDA and TTSGDA in the nonsmooth and nonconvex-concave setting where $f$ is Lipschitz and $f(\cdot, \y)$ is weakly convex. As already noted,~\citet{Rafique-2021-Weakly} considered the nonsmooth setting but focused on nested-loop algorithms.~\citet{Barazandeh-2020-Solving} and~\citet{Huang-2021-Efficient} proposed to study the composite nonconvex-concave setting with nonsmooth regularization terms which are however structured such that the associated proximal mappings can be computed efficiently. The closest work to ours is~\citet{Boct-2023-Alternating} who studied the alternating TTGDA and TTSGDA in the nonsmooth and nonconvex-concave setting. However, their setting is different from ours---they assume that $f(\cdot, \y)$ is Lipschitz and weakly convex but $f(\x, \cdot)$ is smooth. Their results demonstrate that the algorithms can achieve the same complexity bound in this setting as in the smooth setting. In contrast, as we will show in this paper, relaxing the smoothness of $f$ makes the problem substantially more difficult, especially for deterministic algorithms. 

\paragraph{Nonconvex-nonconcave setting.} The investigation of nonconvex-nonconcave minimax optimization problems has become a central topic in the machine learning community, inspired by the advent of generative adversarial networks (GANs)~\citep{Goodfellow-2014-Generative} and adversarial learning~\citep{Madry-2018-Towards, Sinha-2018-Certifiable}. In this context, most works focus on defining a notion of goodness or the development of new procedures for reducing oscillations~\citep{Daskalakis-2018-Limit, Adolphs-2019-Local, Jin-2020-Local} and speeding up the convergence of gradient dynamics~\citep{Heusel-2017-Gans, Balduzzi-2018-Mechanics, Mertikopoulos-2019-Optimistic}. For example,~\citet{Daskalakis-2018-Limit} showed that the stable fixed points of GDA are not necessarily saddle points, while~\citet{Adolphs-2019-Local} proposed Hessian-based algorithms where their stable fixed points are saddle points. In this context,~\citet{Balduzzi-2018-Mechanics} also developed a symplectic gradient adjustment algorithm for finding stable fixed points in potential games and Hamiltonian games. Notably,~\citet{Heusel-2017-Gans} have proposed TTGDA and TTSGDA formally for training GANs and showed that the saddle points are also stable fixed points of a continuous limit of these algorithms under strong conditions. All of these convergence results are either local or asymptotic and thus can not cover our nonasymptotic results for TTGDA and TTSGDA in the nonconvex-concave settings.~\citet{Mertikopoulos-2019-Optimistic} has provided nonasymptotic guarantees for a special class of nonconvex-nonconcave minimax optimization problems under variational stability and/or the so-called Minty condition. However, while these conditions must hold in convex-concave setting, they do not necessarily hold in the nonconvex-concave setting. Subsequent to our earlier work~\citep{Lin-2020-Gradient}, we are aware of significant effort that has been devoted to understanding the nonconvex-nonconcave minimax optimization problem~\citep{Yang-2020-Global, Diakonikolas-2021-Efficient, Liu-2021-First, Lee-2021-Semi, Ostrovskii-2021-Nonconvex, Fiez-2022-Minimax, Jiang-2023-Optimality, Grimmer-2023-Landscape, Daskalakis-2023-Stay, Zheng-2023-Universal, Cai-2024-Accelerated}. Some of their results are valid for the nonconvex-concave setting but can not cover our results in this paper. This is an active research area and we refer the interested reader to the aforementioned articles for more details.

\paragraph{Other settings.} Analysis in the online learning perspective aims at understanding whether an algorithm enjoys the no-regret property. For example, the optimistic GDA from~\citet{Daskalakis-2019-Last} is a no-regret learning algorithm, while the extragradient algorithm from~\citet{Mertikopoulos-2019-Optimistic} is not. Moreover, by carefully comparing limit behavior of algorithm dynamics,~\citet{Bailey-2018-Multiplicative} have shown that the multiplicative weights update behaves similarly to single-timescale GDA, thus justifying the need for introducing the optimistic update for achieving the last-iterate convergence guarantee. There are also several works on finite-sum minimax optimization and variance reduction~\citep{Luo-2020-Stochastic, Alacaoglu-2022-Stochastic, Huang-2022-Accelerated, Han-2024-Lower}. 

%!TEX root = paper.tex
\section{Preliminaries and Technical Background}\label{sec:prelim}
We present standard definitions for Lipschitz functions and smooth functions.
\begin{definition}
$f$ is $L$-Lipschitz if for all $\x, \x'$, we have $\| f(\x) -  f (\x')\| \leq L\|\x-\x'\|$.
\end{definition} 
\begin{definition}
$f$ is $\ell$-smooth if for all $\x, \x'$, we have $\|\grad f(\x) - \grad f(\x')\| \leq \ell\|\x-\x'\|$.
\end{definition}
The minimax optimization problem in Eq.~\eqref{prob:main} can be rewritten as $\min_{\x \in \br^m} \Phi(\x)$ where $\Phi(\cdot) = \max_{\y \in \YCal} f(\cdot, \y)$. If $f(\x, \cdot)$ is concave for each $\x \in \br^m$, the problem of $\max_{\y \in \YCal} f(\x, \y)$ can be approximately solved efficiently, thus providing useful information about $\Phi$. However, it is impossible to find any reasonable approximation of the global minimum of $\Phi$ in general since $\Phi$ can be nonconvex and nonsmooth~\citep{Murty-1987-Some, Zhang-2020-Complexity}. This motivates us to pursue a local surrogate for the global minimum of $\Phi$. 

A common surrogate in nonconvex optimization is the notion of stationarity, which is appropriate if $\Phi$ is differentiable.
\begin{definition}\label{def:notion-smooth}
A point $\x$ is an $\epsilon$-stationary point of a differentiable function $\Phi$ for some tolerance $\epsilon \geq 0$ if $\|\grad\Phi(\x)\| \leq \epsilon$. If $\epsilon = 0$, we have that $\x$ is a stationary point. 
\end{definition}
Definition~\ref{def:notion-smooth} is suitable in the smooth and nonconvex-strongly-concave setting since $\Phi$ is differentiable. However, it is worth remarking that $\Phi$ is not necessarily differentiable in the other three settings: (i) smooth and nonconvex-concave; (ii) nonsmooth and nonconvex-strongly-concave; and (iii) nonsmooth and nonconvex-concave. Thus, we need to introduce a weaker condition as follows.
\begin{definition}
A function $\Phi$ is $\rho$-weakly convex if the function $\Phi(\cdot) + 0.5\rho\|\cdot\|^2$ is convex.
\end{definition}
For a $\rho$-weakly convex function $\Phi$, we know that its Clarke subdifferential $\partial\Phi$ is uniquely determined by the subdifferential of the convex function $\Phi(\cdot) + 0.5\rho\|\cdot\|^2$. This then implies that a naive measure of $\epsilon$-approximate stationarity can be defined as a point $\x$ satisfying $\min_{\xi \in \partial \Phi(\x)} \|\xi\| \leq \epsilon$ (i.e., at least one subgradient is small). However, this notion is restrictive for nonsmooth optimization. For example, when $\Phi(\cdot) = |\cdot|$ is a one-dimensional function, an $\epsilon$-stationary point is 0 for all $\epsilon \in [0, 1)$. Therefore, we see that finding an $\epsilon$-approximate stationary point of $\Phi$ under this notion is as hard as finding an exact stationary point even when $\epsilon > 0$ is not very small. 

In respond to this issue,~\citet{Davis-2019-Stochastic} employed an alternative notion of stationarity based on the Moreau envelope. Their notion can be suitable for our settings for two reasons: (i) the problem of $\max_{\y \in \YCal} f(\x, \y)$ can be approximately solved efficiently, thus providing useful information about $\Phi$ and its Moreau envelope; (ii) a stationary point based on the Moreau envelope has practical implications in real machine learning applications. For example, $\x$ and $\y$ stand for a classifier and an adversarial example arising from adversarial machine learning models. In general, practitioners are interested in finding a robust classifier $\x$ rather than recovering the adversarial example $\y$. 
\begin{definition}
A function $\Phi_\lambda$ is the Moreau envelope of $\Phi$ with the parameter $\lambda>0$ if $\Phi_\lambda(\cdot) = \min_\w \Phi(\w) + \frac{1}{2\lambda}\|\w - \cdot\|^2$. 
\end{definition}
In what follows, we prove two lemmas for the nonconvex-concave settings. For simplicity, we use $\Phi(\cdot) = \max_{\y \in \YCal} f(\cdot, \y)$ throughout in the subsequent statements. 
\begin{lemma}\label{Lemma:ME-smooth}
If $f(\x, \y)$ is $\ell$-smooth, concave in $\y$, and $\YCal$ is convex and bounded, we have that $\Phi$ is $\ell$-weakly convex and the following statements hold true, 
\begin{enumerate}
\item $\Phi(\prox_{\Phi/2\ell}(\x)) \leq \Phi(\x)$ where $\prox_{\Phi/2\ell}(\cdot) = \argmin_\w \Phi(\w) + \ell\|\w - \cdot\|^2$. 
\item $\Phi_{1/2\ell}$ is $\ell$-smooth with $\grad\Phi_{1/2\ell}(\x) = 2\ell(\x - \prox_{\Phi/2\ell}(\x))$. 
\end{enumerate} 
\end{lemma}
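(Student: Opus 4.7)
The plan is to handle the three assertions in order, leaning on two elementary facts: $\ell$-smoothness of $f(\cdot,\y)$ implies $f(\cdot,\y) + (\ell/2)\|\cdot\|^2$ is convex, and weak convexity is preserved under pointwise maxima. For the weak convexity of $\Phi$, I would write $\Phi(\x) + (\ell/2)\|\x\|^2 = \max_{\y\in\YCal}\bigl[f(\x,\y) + (\ell/2)\|\x\|^2\bigr]$ and note that each inner term is convex in $\x$, so the pointwise maximum is convex, giving $\ell$-weak convexity of $\Phi$. For the first item, introduce $h_\x(\w) := \Phi(\w) + \ell\|\w-\x\|^2$; expanding the quadratic shows $h_\x$ is the sum of the convex function $\Phi + (\ell/2)\|\cdot\|^2$ and an $\ell$-strongly convex quadratic, hence $h_\x$ is $\ell$-strongly convex and coercive, so it admits a unique minimizer $\prox_{\Phi/2\ell}(\x)$. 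Optimality then yields $h_\x(\prox_{\Phi/2\ell}(\x)) \le h_\x(\x) = \Phi(\x)$, which rearranges to $\Phi(\prox_{\Phi/2\ell}(\x)) \le \Phi(\x) - \ell\|\prox_{\Phi/2\ell}(\x)-\x\|^2 \le \Phi(\x)$.

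For the second item, I would first derive the gradient formula using a Danskin-style comparison. Setting $\hat\w(\x) := \prox_{\Phi/2\ell}(\x)$, combining the two inequalities $\Phi_{1/2\ell}(\x') \le \Phi(\hat\w(\x)) + \ell\|\hat\w(\x)-\x'\|^2$ and the symmetric one with the roles of $\x,\x'$ swapped, and expanding $\|\hat\w(\x) - \x'\|^2 = \|\hat\w(\x)-\x\|^2 - 2\langle\hat\w(\x)-\x,\x'-\x\rangle + \|\x'-\x\|^2$, shows after letting $\x'\to\x$ that the Fréchet derivative exists and equals $2\ell(\x - \hat\w(\x))$.

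For the Lipschitz property of $\grad \Phi_{1/2\ell}$, the first-order optimality condition for the strongly convex subproblem reads $2\ell(\x - \hat\w(\x)) \in \partial\Phi(\hat\w(\x))$; rewriting it via the convex function $\tilde\Phi := \Phi + (\ell/2)\|\cdot\|^2$ gives $2\ell\x - \ell\hat\w(\x) \in \partial\tilde\Phi(\hat\w(\x))$. Applying monotonicity of $\partial\tilde\Phi$ at $\hat\w_1 := \hat\w(\x_1)$ and $\hat\w_2 := \hat\w(\x_2)$ produces $2\langle\x_1-\x_2,\hat\w_1-\hat\w_2\rangle \ge \|\hat\w_1-\hat\w_2\|^2$. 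Expanding
\[
\|(\x_1-\x_2)-(\hat\w_1-\hat\w_2)\|^2 = \|\x_1-\x_2\|^2 - 2\langle\x_1-\x_2,\hat\w_1-\hat\w_2\rangle + \|\hat\w_1-\hat\w_2\|^2
\]
and absorbing the cross term with the monotonicity inequality gives $\|(\x_1-\x_2)-(\hat\w_1-\hat\w_2)\|^2 \le \|\x_1-\x_2\|^2 - \|\hat\w_1-\hat\w_2\|^2 \le \|\x_1-\x_2\|^2$, so multiplying by $2\ell$ yields the desired Lipschitz bound on $\grad\Phi_{1/2\ell}$.

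The main subtlety is this last step: because $\Phi$ is only weakly convex, one cannot directly invoke firm nonexpansiveness of the prox operator for a convex function, and the key trick is to transfer the optimality condition onto the genuinely convex surrogate $\tilde\Phi$ and use monotonicity of its subdifferential. All other steps are routine manipulations of the definitions together with pointwise-max preservation of convexity.
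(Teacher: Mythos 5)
Your argument for the weak convexity of $\Phi$ and for the first item coincides with the paper's: write $\Phi(\cdot)+\tfrac{\ell}{2}\|\cdot\|^2$ (the paper centers the quadratic at $\x$, which is equivalent) as a pointwise maximum of convex functions, conclude that the prox subproblem is strongly convex with a unique minimizer, and read off $\Phi(\prox_{\Phi/2\ell}(\x)) \leq \Phi(\prox_{\Phi/2\ell}(\x)) + \ell\|\prox_{\Phi/2\ell}(\x)-\x\|^2 \leq \Phi(\x)$. Where you genuinely diverge is the second item: the paper simply cites \citet[Lemma~2.2]{Davis-2019-Stochastic} for the smoothness and the gradient formula, whereas you reprove it from scratch via a sandwich argument for differentiability plus monotonicity of $\partial\tilde\Phi$ with $\tilde\Phi = \Phi + \tfrac{\ell}{2}\|\cdot\|^2$. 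This is essentially the standard proof behind the cited lemma, so your version buys self-containedness at the cost of length; the transfer of the optimality condition $2\ell\x - \ell\hat\w(\x) \in \partial\tilde\Phi(\hat\w(\x))$ and the resulting inequality $2\langle \x_1-\x_2, \hat\w_1-\hat\w_2\rangle \geq \|\hat\w_1-\hat\w_2\|^2$ are correct.

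Three caveats on your part-2 sketch. First, the ``let $\x'\to\x$'' step in the Danskin-style comparison needs continuity of $\hat\w(\cdot)$, since the symmetric inequality involves $\hat\w(\x')$; this follows from your monotonicity inequality (which gives $\|\hat\w_1-\hat\w_2\| \leq 2\|\x_1-\x_2\|$), so you should establish that inequality before the differentiability claim rather than after. Second, your intermediate bound $\|(\x_1-\x_2)-(\hat\w_1-\hat\w_2)\|^2 \leq \|\x_1-\x_2\|^2 - \|\hat\w_1-\hat\w_2\|^2$ does not follow from $2\langle \x_1-\x_2,\hat\w_1-\hat\w_2\rangle \geq \|\hat\w_1-\hat\w_2\|^2$ and is false in general (take $\Phi(\w) = -\tfrac{\ell}{2}\|\w\|^2$, where $\hat\w(\x)=2\x$); only the weaker bound $\|(\x_1-\x_2)-(\hat\w_1-\hat\w_2)\|^2 \leq \|\x_1-\x_2\|^2$ survives, which is what your final step actually uses. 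Third, that final step yields $\|\grad\Phi_{1/2\ell}(\x_1)-\grad\Phi_{1/2\ell}(\x_2)\| \leq 2\ell\|\x_1-\x_2\|$, i.e.\ the Lipschitz constant $2\ell$ rather than the $\ell$ stated in the lemma; the same example shows $2\ell$ is sharp, so this discrepancy traces to the paper's loose constant (obtained by citation) rather than to a defect in your argument, and the constant is never used quantitatively elsewhere in the paper.
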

\begin{lemma}\label{Lemma:ME-nonsmooth}
If $f(\x, \y)$ is $\rho$-weakly convex in $\x$, concave in $\y$, and $\YCal$ is convex and bounded, we have that $\Phi$ is $\rho$-weakly convex and the following statements hold true, 
\begin{enumerate}
\item $\Phi(\prox_{\Phi/2\rho}(\x)) \leq \Phi(\x)$ where $\prox_{\Phi/2\rho}(\cdot) = \argmin_\w \Phi(\w) + \rho\|\w - \cdot\|^2$. 
\item $\Phi_{1/2\rho}$ is $\rho$-smooth with $\grad\Phi_{1/2\rho}(\x) = 2\rho(\x - \prox_{\Phi/2\rho}(\x))$. 
\end{enumerate} 
\end{lemma}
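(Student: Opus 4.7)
The plan is to mirror the proof of Lemma~\ref{Lemma:ME-smooth}, substituting its smoothness-driven arguments with ones relying only on $\rho$-weak convexity in $\x$. Because $f(\cdot, \y)$ is $\rho$-weakly convex for every fixed $\y \in \YCal$, the map $\x \mapsto f(\x, \y) + \tfrac{\rho}{2}\|\x\|^2$ is convex, and hence
\[
\Phi(\x) + \tfrac{\rho}{2}\|\x\|^2 \;=\; \max_{\y \in \YCal}\Bigl(f(\x, \y) + \tfrac{\rho}{2}\|\x\|^2\Bigr)
\]
is convex as a pointwise supremum of convex functions, establishing $\rho$-weak convexity of $\Phi$. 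This in turn makes the objective $\w \mapsto \Phi(\w) + \rho\|\w - \x\|^2$ strongly convex, so its minimizer $\prox_{\Phi/2\rho}(\x)$ is uniquely defined. Claim~1 then follows by evaluating the proximal objective at the feasible point $\w = \x$ and discarding the nonnegative squared-norm term.

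For the gradient formula in claim~2, I would apply Danskin's envelope theorem to $F(\x, \w) := \Phi(\w) + \rho\|\w - \x\|^2$; the unique inner minimizer $\w^\star(\x) = \prox_{\Phi/2\rho}(\x)$ combined with the envelope identity yields
\[
\grad\Phi_{1/2\rho}(\x) \;=\; \grad_\x F(\x, \w^\star(\x)) \;=\; 2\rho\bigl(\x - \prox_{\Phi/2\rho}(\x)\bigr).
\]

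The main obstacle is the Lipschitz bound on $\grad\Phi_{1/2\rho}$, since $\Phi$ itself is nondifferentiable in this setting and one cannot directly quote a standard smooth composition argument. Writing $\w_i := \prox_{\Phi/2\rho}(\x_i)$, the first-order optimality condition reads $2\rho(\x_i - \w_i) \in \partial \Phi(\w_i)$, which means the vector $2\rho(\x_i - \w_i) + \rho \w_i$ lies in the subdifferential of the convex function $\Phi + \tfrac{\rho}{2}\|\cdot\|^2$ at $\w_i$. Monotonicity of this convex subdifferential together with a short algebraic manipulation yields
\[
2\langle \x_1 - \x_2,\, \w_1 - \w_2 \rangle \;\geq\; \|\w_1 - \w_2\|^2.
\]
Expanding $\|(\x_1 - \x_2) - (\w_1 - \w_2)\|^2$ and substituting this inequality bounds the quantity by $\|\x_1 - \x_2\|^2$; multiplying by $2\rho$ then delivers the Lipschitz bound on $\grad\Phi_{1/2\rho}$. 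I expect this subdifferential-monotonicity step, and the bookkeeping needed to convert it into a bound on $\|\w_1 - \w_2\|$ and ultimately on the gradient difference, to be the trickiest part of the proof, since it is the only place where we cannot simply mimic the smooth argument of Lemma~\ref{Lemma:ME-smooth}.
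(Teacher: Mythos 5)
Your treatment of the weak convexity of $\Phi$ and of claim~1 is exactly the paper's: pointwise supremum of the convex functions $f(\cdot,\y)+\tfrac{\rho}{2}\|\cdot\|^2$, then evaluation of the proximal objective at $\w=\x$. Where you genuinely diverge is claim~2: the paper disposes of it in one line by saying "apply the same argument as Lemma~\ref{Lemma:ME-smooth}," which in turn just cites \citet[Lemma~2.2]{Davis-2019-Stochastic} for differentiability of the Moreau envelope and the gradient formula, whereas you give a self-contained proof — an envelope-theorem step for $\grad\Phi_{1/2\rho}(\x)=2\rho(\x-\prox_{\Phi/2\rho}(\x))$ plus monotonicity of $\partial(\Phi+\tfrac{\rho}{2}\|\cdot\|^2)$ to get $2\langle\x_1-\x_2,\w_1-\w_2\rangle\ge\|\w_1-\w_2\|^2$ and hence Lipschitzness of the gradient. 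Your monotonicity computation is correct and is essentially the standard proof of the cited lemma, so your route buys self-containedness (and, as a byproduct, the explicit bound $\|\w_1-\w_2\|\le 2\|\x_1-\x_2\|$ on the prox map), at the cost of length. Two caveats: first, Danskin's theorem in its usual form (e.g., the version the paper invokes elsewhere) assumes a compact inner feasible set and a max of differentiable functions, so it does not literally apply to $\min_{\w\in\br^m}\{\Phi(\w)+\rho\|\w-\x\|^2\}$ with $\Phi$ nondifferentiable; to make the differentiability step rigorous you should either run a two-sided sandwich argument using the upper bound $\Phi_{1/2\rho}(\x)\le\Phi(\w(\x_0))+\rho\|\w(\x_0)-\x\|^2$ together with the Lipschitz continuity of $\w(\cdot)$ that your monotonicity inequality already supplies, or write $\Phi_{1/2\rho}(\x)=\rho\|\x\|^2-g^*(2\rho\x)$ with $g=\Phi+\rho\|\cdot\|^2$ strongly convex and use smoothness of $g^*$. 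Second, your estimate yields a Lipschitz constant $2\rho$ for $\grad\Phi_{1/2\rho}$ rather than the lemma's literal "$\rho$-smooth"; this is in fact the correct standard constant (the example $\Phi=-\tfrac{\rho}{2}\|\cdot\|^2$ shows $2\rho$ is tight), the paper's citation does not deliver $\rho$ either, and the constant plays no quantitative role downstream, where only the gradient formula and the definition of the envelope are used.
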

Based on Lemmas~\ref{Lemma:ME-smooth} and~\ref{Lemma:ME-nonsmooth}, an alternative measure of approximate stationarity of a $\rho$-weakly convex function $\Phi$ can be defined as a point $\x$ satisfying $\|\grad\Phi_{1/2\rho}(\x)\| \leq \epsilon$. 
\begin{definition}\label{def:notion-nonsmooth}
A point $\x$ is an $\epsilon$-stationary point of a $\rho$-weakly convex function $\Phi$ for some tolerance $\epsilon \geq 0$ if $\|\grad\Phi_{1/2\rho}(\x)\| \leq \epsilon$. If $\epsilon = 0$, we have that $\x$ is a stationary point.  
\end{definition}
Although Definition~\ref{def:notion-nonsmooth} uses the Moreau envelope $\Phi_{1/2\rho}$ to define stationarity, it connects to approximate stationarity of $\Phi$ in terms of small subgradients as follows. 
\begin{lemma}\label{Lemma:notion-nonsmooth}
If $\x$ is an $\epsilon$-stationary point of a $\rho$-weakly convex function $\Phi$, there exists $\hat{\x}$ such that $\min_{\xi \in \partial \Phi(\hat{\x})} \|\xi\| \leq \epsilon$ and $\|\x - \hat{\x}\| \leq \frac{\epsilon}{2\rho}$.  
\end{lemma}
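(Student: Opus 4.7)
The plan is to take $\hat{\x} := \prox_{\Phi/2\rho}(\x)$, namely the unique minimizer appearing in the definition of the Moreau envelope $\Phi_{1/2\rho}$, and then verify the two claimed bounds directly.

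For the distance bound, I will simply invoke the second item of Lemma~\ref{Lemma:ME-nonsmooth}, which gives the explicit formula $\grad\Phi_{1/2\rho}(\x) = 2\rho(\x - \prox_{\Phi/2\rho}(\x)) = 2\rho(\x-\hat{\x})$. Rearranging and taking norms gives
\[
\|\x - \hat{\x}\| \;=\; \tfrac{1}{2\rho}\|\grad\Phi_{1/2\rho}(\x)\| \;\leq\; \tfrac{\epsilon}{2\rho},
\]
which is the second part of the conclusion.

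For the subgradient bound, I will apply first-order optimality to the proximal subproblem $\min_\w \{\Phi(\w) + \rho\|\w - \x\|^2\}$. Because $\Phi$ is $\rho$-weakly convex, the regularized objective $\Phi(\cdot) + \rho\|\cdot - \x\|^2$ is convex (in fact $\rho$-strongly convex), so its Clarke subdifferential agrees with the convex subdifferential and admits the sum rule. Since $\hat{\x}$ is its minimizer, we have $0 \in \partial\Phi(\hat{\x}) + 2\rho(\hat{\x}-\x)$, i.e., $2\rho(\x-\hat{\x}) \in \partial\Phi(\hat{\x})$. Using the identity from the first step, this particular subgradient has norm
\[
\|2\rho(\x-\hat{\x})\| \;=\; \|\grad\Phi_{1/2\rho}(\x)\| \;\leq\; \epsilon,
\]
which yields $\min_{\xi \in \partial\Phi(\hat{\x})}\|\xi\| \leq \epsilon$.

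There is essentially no hard step here; the only subtlety worth flagging is the justification that the subdifferential sum rule applies to $\Phi + \rho\|\cdot-\x\|^2$ despite $\Phi$ itself being only weakly convex and potentially nonsmooth. This is exactly the reason the parameter $1/(2\rho)$ was chosen in the Moreau envelope: the regularizer is strong enough to convexify the problem, so the classical convex first-order optimality condition is available and gives the inclusion $2\rho(\x-\hat{\x}) \in \partial\Phi(\hat{\x})$ without appealing to any more delicate nonsmooth calculus.
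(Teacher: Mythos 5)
Your proposal is correct and matches the paper's proof essentially verbatim: both take $\hat{\x} = \prox_{\Phi/2\rho}(\x)$, use the identity $\grad\Phi_{1/2\rho}(\x) = 2\rho(\x - \hat{\x})$ from Lemma~\ref{Lemma:ME-nonsmooth} (i.e., Davis--Drusvyatskiy) for the distance bound, and use first-order optimality of the proximal subproblem to get $2\rho(\x - \hat{\x}) \in \partial\Phi(\hat{\x})$. Your extra remark justifying the optimality condition via convexity of $\Phi(\cdot) + \rho\|\cdot - \x\|^2$ is just an explicit spelling-out of the step the paper attributes to "the definition of $\prox_{\Phi/2\rho}$."
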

Lemma~\ref{Lemma:notion-nonsmooth} shows that Definition~\ref{def:notion-nonsmooth} gives an intuitive surrogate for the naive measure of $\epsilon$-approximate stationarity. Indeed, if $\x$ is an $\epsilon$-stationary point of a $\rho$-weakly convex function $\Phi$, it must be $\frac{\epsilon}{2\rho}$-close to a point $\hat{\x}$ which has at least one small subgradient. 

There is another notion of stationarity based on $\grad f$ in smooth and nonconvex-concave setting~\citep{Nouiehed-2019-Solving, Lu-2020-Hybrid, Lin-2020-Near, Xu-2023-Unified}. A pair of points $(\x, \y) \in \br^m \times \YCal$ is an $\epsilon$-stationary point of $f$ if $\|\gradx f(\x, \y)\| \leq \epsilon$ and $\|\y^+ - \y\| \leq \frac{\epsilon}{\ell}$ where $\y^+$ is defined by 
\begin{equation*}
\y^+ = \proj_\YCal\left(\y+\tfrac{1}{\ell}\grady f(\x, \y)\right). 
\end{equation*}
Intuitively, the term $\ell\|\y^+ - \y\|$ stands for the distance between $\y \in \YCal$ and $\y^+$ obtained by performing a one-step projected partial gradient ascent at $(\x, \y)$ starting from $\y$. This is akin to the norm of the gradient mapping~\citep{Nesterov-2013-Gradient}. As highlighted by~\citet{Thekumparampil-2019-Efficient},  this notion is weaker than Definitions~\ref{def:notion-smooth} and~\ref{def:notion-nonsmooth}. For the sake of completeness, we provide a complete characterization of the relationship between these notions by showing that they can be translated in both directions with the extra computational cost. To pursue this goal, we introduce a new notion of stationarity based on $\grad f$ as follows.
\begin{definition}\label{def:stationary}
A pair of points $(\x, \y) \in \br^m \times \YCal$ is an $\epsilon$-stationary point of a function $f$ for some tolerance $\epsilon \geq 0$ if $\|\gradx f(\x, \y^+)\| \leq \epsilon$ and $\|\y^+ - \y\| \leq \tfrac{\epsilon}{\ell}$. 
\end{definition}
The stationarity of $f$ is in fact equivalent to Definition~\ref{def:stationary} up to a constant. Indeed, if $(\x, \y) \in \br^m \times \YCal$ is an $\epsilon$-stationary point as per Definition~\ref{def:stationary}, we have $\|\gradx f(\x, \y)\| \leq \|\gradx f(\x, \y^+)\| + \ell\|\y^+ - \y\| \leq 2\epsilon$. Conversely, if $(\x, \y) \in \br^m \times \YCal$ satisfies $\|\gradx f(\x, \y)\| \leq \epsilon$ and $\|\y^+ - \y\| \leq \frac{\epsilon}{\ell}$, we have $\|\gradx f(\x, \y^+)\| \leq \|\gradx f(\x, \y)\| + \ell\|\y^+ - \y\| \leq 2\epsilon$.

We focus on the notion in Definition~\ref{def:stationary} and relate it to the stationarity of $\Phi$ (cf. Definition~\ref{def:notion-smooth} and~\ref{def:notion-nonsmooth}). The following propositions summarize the details. 
\begin{proposition}\label{Prop:notion-smooth} 
Suppose that $f$ is $\ell$-smooth, $\mu$-strongly concave in $\y$, and $\YCal$ is convex and bounded. Given that $\hat{\x}$ is an $\epsilon$-stationary point as per Definition~\ref{def:notion-smooth}, we can obtain a $2\epsilon$-stationary point $(\x', \y')$ as per Definition~\ref{def:stationary} using $O(\kappa\log(1/\epsilon))$ gradient evaluations or $O(\epsilon^{-2})$ stochastic gradient evaluations. Given that $(\hat{\x}, \hat{\y})$ is a $\frac{\epsilon}{2\kappa}$-stationary point as per Definition~\ref{def:stationary}, we have that $\hat{\x}$ is an $2\epsilon$-stationary point as per Definition~\ref{def:notion-smooth}. 
\end{proposition}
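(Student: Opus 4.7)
The plan is to prove the two directions separately, exploiting Danskin's theorem, which under the stated hypotheses yields $\nabla\Phi(\x) = \gradx f(\x, \y^\star(\x))$ for the unique maximizer $\y^\star(\x) := \argmax_{\y \in \YCal} f(\x, \y)$, together with $\ell$-smoothness of $f$ and standard convergence rates for projected (stochastic) gradient ascent on smooth strongly concave objectives.

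For the forward direction, I would set $\x' = \hat{\x}$ and run a subroutine for $\max_{\y \in \YCal} f(\hat{\x}, \y)$ until the iterate $\y'$ achieves $\|\y' - \y^\star(\hat{\x})\| \leq \epsilon/(2\ell)$. Because $f(\hat{\x}, \cdot)$ is $\ell$-smooth and $\mu$-strongly concave, projected gradient ascent with step $1/\ell$ gives a $(1-1/\kappa)$-contraction and therefore achieves this tolerance in $O(\kappa\log(1/\epsilon))$ gradient evaluations, while projected SGD with decaying stepsize $O(1/(\mu t))$ achieves it in expectation in $O(\epsilon^{-2})$ stochastic gradient evaluations (with $\kappa$ and $\sigma^2$ absorbed into the constant). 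Defining $\y'^+ := \proj_\YCal(\y' + (1/\ell)\grady f(\hat{\x}, \y'))$, the fixed-point identity $\y^\star(\hat{\x}) = \proj_\YCal(\y^\star(\hat{\x}) + (1/\ell)\grady f(\hat{\x}, \y^\star(\hat{\x})))$, nonexpansiveness of $\proj_\YCal$, and $\ell$-smoothness together give $\|\y'^+ - \y^\star(\hat{\x})\| \leq 2\|\y' - \y^\star(\hat{\x})\|$, so $\|\y'^+ - \y'\| \leq 3\|\y' - \y^\star(\hat{\x})\| \leq 2\epsilon/\ell$, and $\|\gradx f(\hat{\x}, \y'^+)\| \leq \|\nabla\Phi(\hat{\x})\| + \ell\|\y'^+ - \y^\star(\hat{\x})\| \leq \epsilon + 2\ell\|\y' - \y^\star(\hat{\x})\| \leq 2\epsilon$, which verifies Definition~\ref{def:stationary} at tolerance $2\epsilon$.

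For the reverse direction, the key step is the projected-gradient-mapping estimate $\|\hat{\y} - \y^\star(\hat{\x})\| \leq (2\ell/\mu)\|\hat{\y}^+ - \hat{\y}\| = 2\kappa\|\hat{\y}^+ - \hat{\y}\|$. I derive this by adding the variational inequality $\langle \ell(\hat{\y} - \hat{\y}^+) + \grady f(\hat{\x}, \hat{\y}), \y^\star(\hat{\x}) - \hat{\y}^+\rangle \leq 0$ (from optimality of the projection at $\hat{\y}^+$) and $\langle \grady f(\hat{\x}, \y^\star(\hat{\x})), \hat{\y}^+ - \y^\star(\hat{\x})\rangle \leq 0$ (from optimality of $\y^\star(\hat{\x})$), splitting $\y^\star(\hat{\x}) - \hat{\y}^+ = (\y^\star(\hat{\x}) - \hat{\y}) + (\hat{\y} - \hat{\y}^+)$, and applying $\mu$-strong monotonicity of $-\grady f(\hat{\x}, \cdot)$ to extract $\mu\|\hat{\y} - \y^\star(\hat{\x})\|^2$ on one side balanced against a Cauchy--Schwarz bound $2\ell\|\hat{\y} - \y^\star(\hat{\x})\|\|\hat{\y}^+ - \hat{\y}\|$ on the other. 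Plugging in the hypothesis $\|\hat{\y}^+ - \hat{\y}\| \leq \epsilon/(2\kappa\ell)$ yields $\|\hat{\y} - \y^\star(\hat{\x})\| \leq \epsilon/\ell$, so $\|\hat{\y}^+ - \y^\star(\hat{\x})\| \leq (1 + 1/(2\kappa))\epsilon/\ell \leq (3/2)\epsilon/\ell$ for $\kappa \geq 1$, and Danskin with $\ell$-smoothness gives $\|\nabla\Phi(\hat{\x})\| \leq \|\gradx f(\hat{\x}, \hat{\y}^+)\| + \ell\|\hat{\y}^+ - \y^\star(\hat{\x})\| \leq \epsilon/(2\kappa) + (3/2)\epsilon \leq 2\epsilon$.

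The main obstacle is the gradient-mapping-to-distance estimate used in the reverse direction: the variational-inequality manipulation must be carried out so that the contraction factor is exactly $2\kappa$, matching the $\epsilon/(2\kappa)$ slack in the hypothesis; a looser Cauchy--Schwarz step would either blow up the constant on the right-hand side or weaken the final $2\epsilon$ conclusion. Everything else reduces to Danskin's theorem, $\ell$-smoothness, nonexpansiveness of $\proj_\YCal$, and black-box convergence rates for projected (stochastic) gradient ascent on smooth strongly concave objectives, which I invoke as standard.
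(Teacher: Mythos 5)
Your proposal is correct and takes essentially the same approach as the paper: the forward direction runs projected (stochastic) gradient ascent on the strongly concave inner problem and transfers the guarantee via Danskin's theorem and $\ell$-smoothness, and the reverse direction derives the $2\kappa$ gradient-mapping error bound by combining the projection and optimality variational inequalities with strong concavity. The only cosmetic difference is that the paper obtains $\|\hat{\y}^+ - \y^\star(\hat{\x})\| \leq 2\kappa\|\hat{\y}^+ - \hat{\y}\|$ through function-value (smoothness/strong-concavity) inequalities, whereas you bound $\|\hat{\y} - \y^\star(\hat{\x})\|$ via strong monotonicity and finish with a triangle inequality; both yield the stated $2\epsilon$ conclusion.
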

\begin{proposition}\label{Prop:notion-nonsmooth} 
Suppose that $f$ is $\ell$-smooth, concave in $\y$, and $\YCal$ is convex and bounded. Given that $\hat{\x}$ is an $\epsilon$-stationary point as per Definition~\ref{def:notion-nonsmooth}, we can obtain a $4\epsilon$-stationary point $(\x', \y')$ as per Definition~\ref{def:stationary} using  $O(\epsilon^{-2})$ gradient evaluations or $O(\epsilon^{-4})$ stochastic gradient evaluations. Given that $(\hat{\x}, \hat{\y})$ is a $\frac{\epsilon^2}{\max\{1, \ell D\}}$-stationary point as per Definition~\ref{def:stationary}, we have that $\hat{\x}$ is a $(4\epsilon^2 + 4\epsilon)$-stationary point as per Definition~\ref{def:notion-nonsmooth}. 
\end{proposition}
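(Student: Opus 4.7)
The plan is to handle the two implications of Proposition~\ref{Prop:notion-nonsmooth} separately. For the reverse direction (from Definition~\ref{def:stationary} to Definition~\ref{def:notion-nonsmooth}), set $\delta := \epsilon^2/\max\{1,\ell D\}$ and $\tilde{\x} := \prox_{\Phi/(2\ell)}(\hat{\x})$, so that by Lemma~\ref{Lemma:ME-smooth} it suffices to bound $2\ell\|\hat{\x}-\tilde{\x}\|$. First I would combine the gradient-mapping bound $\|\hat{\y}^+-\hat{\y}\| \le \delta/\ell$ with the first-order condition defining the projection and the $\ell$-smoothness of $f(\hat{\x},\cdot)$ in $\y$ to obtain the near-optimality estimate $\Phi(\hat{\x}) - f(\hat{\x},\hat{\y}^+) \le 2\delta D$. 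Next, using the $\ell$-smoothness of $f(\cdot,\hat{\y}^+)$ in $\x$ together with $\|\gradx f(\hat{\x},\hat{\y}^+)\| \le \delta$, I would write $\Phi(\tilde{\x}) \ge f(\tilde{\x},\hat{\y}^+) \ge f(\hat{\x},\hat{\y}^+) + \gradx f(\hat{\x},\hat{\y}^+)^\top(\tilde{\x}-\hat{\x}) - (\ell/2)\|\tilde{\x}-\hat{\x}\|^2$ and combine it with $\Phi(\tilde{\x}) + \ell\|\tilde{\x}-\hat{\x}\|^2 \le \Phi(\hat{\x})$ (from the definition of the proximal map) to derive the quadratic inequality $(\ell/2)\|\tilde{\x}-\hat{\x}\|^2 \le 2\delta D + \delta\|\tilde{\x}-\hat{\x}\|$. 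Solving this and case-splitting on whether $\ell D \ge 1$ (to absorb the two regimes in the definition of $\delta$) would yield $2\ell\|\hat{\x}-\tilde{\x}\| \le 4\epsilon^2 + 4\epsilon$.

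For the forward direction I would exploit the proximal structure via the auxiliary regularized minimax
\begin{equation*}
\min_{\x}\max_{\y\in\YCal}\; g(\x,\y),\qquad g(\x,\y) := f(\x,\y) + \ell\|\x-\hat{\x}\|^2,
\end{equation*}
which is $\ell$-strongly convex in $\x$, concave in $\y$, and $3\ell$-smooth. Its saddle $(\tilde{\x},\tilde{\y})$ satisfies $\tilde{\x}=\prox_{\Phi/(2\ell)}(\hat{\x})$ (so $\|\hat{\x}-\tilde{\x}\| \le \epsilon/(2\ell)$ by hypothesis), while $\gradx g(\tilde{\x},\tilde{\y}) = 0$ gives $\gradx f(\tilde{\x},\tilde{\y}) = 2\ell(\hat{\x}-\tilde{\x})$ of norm at most $\epsilon$, and $\tilde{\y}^+=\tilde{\y}$ since $\tilde{\y}$ maximizes $f(\tilde{\x},\cdot)$. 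Thus the exact saddle is already an $\epsilon$-stationary point of $f$ per Definition~\ref{def:stationary}, and it only remains to approximate it. I would run a strongly-convex-concave minimax solver (say, projected extragradient) on $g$ starting from $(\hat{\x},\y_0)$ for $T = O(\epsilon^{-2})$ deterministic (respectively $O(\epsilon^{-4})$ stochastic) gradient evaluations, then transfer the $\epsilon$-stationarity of the saddle to the computed $(\x',\y')$ via the $\ell$-smoothness of $f$, paying only the factor of $4$ claimed.

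The hard part, I expect, is the forward direction. Without strong concavity in $\y$, the maximizer set $\argmax_{\y\in\YCal} f(\tilde{\x},\y)$ may be nonsingleton and $\tilde{\y}$ is a particular element of it, so a generic duality-gap certificate only guarantees that $\y'$ is approximately value-optimal, not close to $\tilde{\y}$ itself; hence a naive smoothness transfer through $\|\y'^+-\tilde{\y}\|$ is not available. I plan to sidestep this by using the $\ell$-strong convexity of $g$ in $\x$ to extract iterate-level control $\|\x'-\tilde{\x}\| = O(\epsilon/\ell)$ from the duality gap, following the solve by a short projected-gradient ascent on $f(\x',\cdot)$ to drive $\|\y'^+-\y'\|$ below $4\epsilon/\ell$, and then bounding $\|\gradx f(\x',\y'^+)\|$ via the decomposition $\gradx f(\x',\y'^+) = \gradx g(\x',\y'^+) - 2\ell(\x'-\hat{\x})$, which replaces the unavailable $\tilde{\y}$-comparison with quantities the solver directly controls.
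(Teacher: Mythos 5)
Your reverse direction is correct and essentially the paper's argument in a slightly reorganized form: the paper lower-bounds the proximal gap by $\tfrac{\ell}{4}\|\hat{\x}-\x^\star(\hat{\x})\|^2$ using the $0.5\ell$-strong convexity of $\Phi(\cdot)+\ell\|\cdot-\hat{\x}\|^2$ and upper-bounds it through the near-optimality of $\hat{\y}^+$ (via the projection inequality, giving $f(\hat{\x},\y)-f(\hat{\x},\hat{\y}^+)\leq \ell D\|\hat{\y}^+-\hat{\y}\|$) together with weak convexity in $\x$, which is the same quadratic inequality in $\|\hat{\x}-\prox_{\Phi/2\ell}(\hat{\x})\|$ that you derive from the smoothness lower bound plus the prox inequality; both routes give $4\epsilon^2+4\epsilon$.

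The forward direction has a genuine gap in the certification step. After solving the regularized problem $g(\x,\y)=f(\x,\y)+\ell\|\x-\hat{\x}\|^2$, your plan controls only (i) $\|\x'-\tilde{\x}\|=O(\epsilon/\ell)$ extracted from the duality gap via strong convexity in $\x$, and (ii) $\|\y'^+-\y'\|\leq 4\epsilon/\ell$ obtained by extra ascent on $f(\x',\cdot)$. But your final decomposition $\gradx f(\x',\y'^+)=\gradx g(\x',\y'^+)-2\ell(\x'-\hat{\x})$ still requires $\|\gradx g(\x',\y'^+)\|$ to be small, and neither (i) nor (ii) gives this: without strong concavity, value near-optimality of $\y'^+$ in the inner maximization does not pin down $\gradx f(\x',\y'^+)$, and you cannot compare to $\tilde{\y}$ (as you yourself note). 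Concretely, take $f(\x,\y)=\x\y$ with $\YCal=[-1,1]$ and $\hat{\x}$ small, so $\tilde{\x}=0$ and the saddle component is $\tilde{\y}=2\ell\hat{\x}$: a point with $\x'$ a tiny nonzero multiple of $\epsilon/\ell$ and $\y'=1$ has zero $\y$-gradient mapping for $f(\x',\cdot)$ and $\x'$ arbitrarily close to $\tilde{\x}$ (so the duality-gap control is satisfied), yet $\|\gradx f(\x',\y'^+)\|=1$; worse, your extra ascent phase on $f(\x',\cdot)$ actively pushes $\y$ toward such a wrong maximizer whenever $\x'\neq\tilde{\x}$. The paper avoids this by using a stronger guarantee for the solver itself: the extragradient method applied to $g$ is asserted (via last-iterate/operator-residual convergence, cf.\ the cited Gorbunov and Cai results) to return, within $O(\epsilon^{-2})$ gradient evaluations, a pair $(\x',\y')$ satisfying $\|\gradx f(\x',\y^+)+2\ell(\x'-\hat{\x})\|\leq\epsilon$, $\|\x'-\x^\star(\hat{\x})\|\leq\tfrac{\epsilon}{2\ell}$ and $\|\y^+-\y'\|\leq\tfrac{\epsilon}{\ell}$ --- i.e., the $\y$-iterate produced by the solver is already coupled to the saddle so that the $\x$-gradient of $g$ at $(\x',\y^+)$ is small --- and then concludes $\|\gradx f(\x',\y^+)\|\leq\epsilon+2\ell\|\x'-\x^\star(\hat{\x})\|+2\ell\|\x^\star(\hat{\x})-\hat{\x}\|\leq 4\epsilon$. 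To repair your argument you would need such a residual-type (not duality-gap-type) guarantee for the solver, and you should drop the separate ascent-on-$f(\x',\cdot)$ phase, which can destroy exactly the property you need.
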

Propositions~\ref{Prop:notion-smooth} and~\ref{Prop:notion-nonsmooth} show that one needs to pay an additional $O(\kappa\log(1/\epsilon))$ or $O(\epsilon^{-2})$ gradient evaluations for translating the stationarity of $f$ to that of $\Phi$ in nonconvex-strongly-concave and nonconvex-concave settings. In this sense, the notions of stationarity as per Definitions~\ref{def:notion-smooth} and~\ref{def:notion-nonsmooth} are stronger than that as per Definition~\ref{def:stationary}. 
\begin{remark}
\citet{Ostrovskii-2021-Efficient} identified an error in the proof of~\citet[Proposition~4.12]{Lin-2020-Gradient} and proposed a new notion of stationarity based on $\grad f$ which is stronger than the existing stationarity of $f$. After communicating with the authors of~\citet{Ostrovskii-2021-Efficient}, we agree that~\citet[Proposition~4.12]{Lin-2020-Gradient} (i.e., the precursor of Proposition~\ref{Prop:notion-nonsmooth}) is still valid and the error can be fixed by using the notion in Definition~\ref{def:stationary}. We also acknowledge that their notion is strictly stronger than ours in Definition~\ref{def:stationary} when $\YCal \neq \br^n$ while being tractable from a computational viewpoint. 
\end{remark}

%!TEX root = paper.tex
\section{Smooth Minimax Optimization}\label{sec:smooth}
We prove the bounds on the complexity of solving smooth and nonconvex-concave minimax optimization problems. Our focus is TTGDA and TTSGDA (see Algorithms~\ref{Algorithm:TTGDA} and~\ref{Algorithm:TTSGDA}) with a proper choice of $(\eta_\x^t, \eta_\y^t)_{t \geq 0}$; recall that these methods were shown to converge locally and are useful in practice~\citep{Heusel-2017-Gans}. Intuitively, the reason for this phenomenon is that the choice of $\eta_\x^t \ll \eta_\y^t$ matches the asymmetric structure of nonconvex-concave minimax optimization problems. 

\subsection{Main results}
We make the following assumptions throughout this subsection. 
\begin{assumption}\label{Assumption:nsc-smooth} 
The function $f(\x, \y)$ is $\ell$-smooth and $\mu$-strongly concave in $\y$, and the constraint set $\YCal$ is convex and bounded with a diameter $D > 0$.
\end{assumption}
\begin{assumption}\label{Assumption:nc-smooth}
The function $f(\x, \y)$ is $\ell$-smooth, $L$-Lipschitz in $\x$ and concave in $\y$, and the constraint set $\YCal$ is convex and bounded with a diameter $D > 0$.
\end{assumption}  
In addition to smoothness condition, the function $f(\x, \y)$ is assumed to be concave in $\y$ but nonconvex in $\x$.  Imposing such one-side concavity is necessary since~\citet{Daskalakis-2021-Complexity} have shown that it is NP-hard to determine if an approximate min-max point exists in a general smooth and nonconvex-nonconcave setting. Imposing the boundedness condition for $\YCal$ is an assumption of convenience that reflects the fact that the ambiguity sets are generally bounded in real applications~\citep{Sinha-2018-Certifiable}. Relaxing this condition requires more complicated arguments and we leave it to future work.

For the ease of presentation, we define 
\begin{equation*}
\Phi(\x) = \max_{\y \in \YCal} f(\x, \y), \quad \YCal^\star(\x) = \argmax_{\y \in \YCal} f(\x, \y), \quad \textnormal{for each } \x \in \br^m. 
\end{equation*}
We present a key lemma on the structure of the function $\Phi(\cdot)$ and the set function $\YCal^\star(\cdot)$. 
\begin{lemma}\label{Lemma:smooth} 
The following statements hold true, 
\begin{enumerate}
\item Under Assumption~\ref{Assumption:nsc-smooth}, we have that $\YCal^\star(\x)$ is a singleton for each $\x$ such that $\YCal^\star(\x)=\{\y^\star(\x)\}$ where $\y^\star(\cdot)$ is $\kappa$-Lipschitz. Also, $\Phi(\cdot)$ is $(2\kappa\ell)$-smooth and $\grad\Phi(\cdot) = \gradx f(\cdot, \y^\star(\cdot))$. 
\item Under Assumption~\ref{Assumption:nc-smooth}, we have that $\YCal^\star(\x)$ contains many entries for each $\x$ such that $\gradx f(\x, \y) \in \partial\Phi(\x)$ where $\y \in \YCal^\star(\x)$. Also, $\Phi(\cdot)$ is $\ell$-weakly convex and $L$-Lipschitz. 
\end{enumerate}
\end{lemma}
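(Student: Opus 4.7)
The plan is to handle the two parts separately. Part 1 rests on a standard variational inequality argument for the Lipschitzness of $\y^\star$ together with a Danskin-type identity for $\grad\Phi$; Part 2 rests on the descent lemma for $\ell$-smooth functions combined with the structure of the maximum operation.

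For Part 1, uniqueness of $\YCal^\star(\x)$ is immediate from $\mu$-strong concavity of $f(\x, \cdot)$. To prove that $\y^\star$ is $\kappa$-Lipschitz with $\kappa = \ell/\mu$, I would write the first-order optimality condition $\langle \grady f(\x_i, \y^\star(\x_i)), \y - \y^\star(\x_i)\rangle \leq 0$ for every $\y \in \YCal$ and each $i \in \{1,2\}$, instantiate $\y \leftarrow \y^\star(\x_{3-i})$, and add the two resulting inequalities. Splitting $\grady f(\x_1, \y^\star(\x_1)) - \grady f(\x_2, \y^\star(\x_2))$ through the midpoint $(\x_1, \y^\star(\x_2))$, the $\y$-direction difference contributes $\mu\|\y^\star(\x_1) - \y^\star(\x_2)\|^2$ by strong concavity, while the $\x$-direction difference is controlled by $\ell\|\x_1 - \x_2\|\cdot\|\y^\star(\x_1) - \y^\star(\x_2)\|$ via smoothness and Cauchy--Schwarz; dividing through yields the Lipschitz constant $\kappa$. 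Danskin's theorem (applicable thanks to uniqueness, continuity of $\y^\star$, and $\YCal$ compact and convex) then gives $\grad \Phi(\x) = \gradx f(\x, \y^\star(\x))$. The $(2\kappa\ell)$-smoothness of $\Phi$ follows by a triangle-inequality split of $\grad \Phi(\x_1) - \grad \Phi(\x_2)$ into $\gradx f$ evaluated at $(\x_1, \y^\star(\x_1))$ versus $(\x_2, \y^\star(\x_1))$ and then versus $(\x_2, \y^\star(\x_2))$, using $\ell$-smoothness of $f$ together with the just-established $\kappa$-Lipschitzness of $\y^\star$, and the fact that $\kappa \geq 1$.

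For Part 2, I would derive the three claims in order. Weak convexity follows because $\ell$-smoothness of $f(\cdot, \y)$ implies that $\x \mapsto f(\x, \y) + \tfrac{\ell}{2}\|\x\|^2$ is convex for each fixed $\y$, so $\Phi(\cdot) + \tfrac{\ell}{2}\|\cdot\|^2 = \max_{\y \in \YCal}\bigl(f(\cdot, \y) + \tfrac{\ell}{2}\|\cdot\|^2\bigr)$ is a supremum of convex functions and hence convex. The $L$-Lipschitz property of $\Phi$ follows from the usual maximum trick: for $\y_i \in \YCal^\star(\x_i)$, $\Phi(\x_1) - \Phi(\x_2) \leq f(\x_1, \y_1) - f(\x_2, \y_1) \leq L\|\x_1 - \x_2\|$, and symmetry closes the bound. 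For the subgradient inclusion, fix $\x_0$ and $\y \in \YCal^\star(\x_0)$. The descent lemma for $f(\cdot, \y)$ combined with $\Phi(\x) \geq f(\x, \y)$ and $\Phi(\x_0) = f(\x_0, \y)$ gives
\begin{equation*}
\Phi(\x) \;\geq\; \Phi(\x_0) + \langle \gradx f(\x_0, \y), \x - \x_0\rangle - \tfrac{\ell}{2}\|\x - \x_0\|^2.
\end{equation*}
Adding $\tfrac{\ell}{2}\|\x\|^2$ to both sides and using $\|\x\|^2 - \|\x - \x_0\|^2 = \|\x_0\|^2 + 2\langle \x_0, \x - \x_0\rangle$ shows that $\gradx f(\x_0, \y) + \ell\x_0$ is a convex subgradient of $\Phi + \tfrac{\ell}{2}\|\cdot\|^2$ at $\x_0$. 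Since $\partial\Phi(\x_0) = \partial\bigl(\Phi + \tfrac{\ell}{2}\|\cdot\|^2\bigr)(\x_0) - \ell\x_0$ for a weakly convex $\Phi$, the desired inclusion $\gradx f(\x_0, \y) \in \partial\Phi(\x_0)$ follows.

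The main obstacle I anticipate is the $\kappa$-Lipschitz bound on $\y^\star$ in Part 1: the constraint $\y \in \YCal$ prevents a direct implicit-function-theorem argument and forces one to combine the two variational inequalities carefully and to telescope through an intermediate point so that both strong monotonicity and Lipschitzness of $\grady f$ can be applied. Once this is in hand, the $(2\kappa\ell)$-smoothness of $\Phi$ is bookkeeping, and the Part 2 conclusions follow essentially from the descent lemma and the supremum structure of $\Phi$.
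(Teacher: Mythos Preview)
Your proposal is correct and follows essentially the same route as the paper: the variational-inequality argument for $\kappa$-Lipschitzness of $\y^\star$, Danskin's theorem for $\grad\Phi$, and the triangle-inequality split for $(2\kappa\ell)$-smoothness in Part~1 all match exactly, as do the supremum-of-convex-functions argument for weak convexity and the maximum trick for $L$-Lipschitzness in Part~2. The only cosmetic difference is the subgradient inclusion in Part~2: the paper passes to the convexified function $\Psi(\x)=\Phi(\x)+\tfrac{\ell}{2}\|\x\|^2=\max_{\y\in\YCal}\bigl(f(\x,\y)+\tfrac{\ell}{2}\|\x\|^2\bigr)$ and appeals to the standard Danskin-type subdifferential rule for a pointwise maximum of convex functions, whereas you verify the same subgradient inequality by hand via the descent lemma---these are the same argument unpacked at different levels.
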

Lemma~\ref{Lemma:smooth} shows that the function $\Phi(\cdot)$ is smooth under Assumption~\ref{Assumption:nsc-smooth}, implying that the stationarity notion in Definition~\ref{def:notion-smooth} is our target. Denoting $\Delta_\Phi = \Phi(\x_0) - \min_\x \Phi(\x)$, we summarize our results for Algorithm~\ref{Algorithm:TTGDA} and~\ref{Algorithm:TTSGDA} in the following theorem. 
\begin{theorem}\label{Thm:nsc-smooth}
Under Assumption~\ref{Assumption:nsc-smooth}, the following statements hold true, 
\begin{enumerate}
\item If we set 
\begin{equation}\label{rule:stepsize-smooth}
\eta_\x^t = \tfrac{1}{16(\kappa + 1)^2\ell}, \quad \eta_\y^t = \tfrac{1}{\ell}, 
\end{equation}
in the TTGDA scheme (see Algorithm~\ref{Algorithm:TTGDA}), the required number of gradient evaluations to return an $\epsilon$-stationary point is
\begin{equation*}
O\left(\tfrac{\kappa^2\ell\Delta_\Phi + \kappa\ell^2 D^2}{\epsilon^2}\right). 
\end{equation*}
\item If we set $\eta_\x^t$ and $\eta_\y^t$ as per Eq.~\eqref{rule:stepsize-smooth}, $M = \max\{1, 48\kappa\sigma^2\epsilon^{-2}\}$ and $G$ as per Eq.~\eqref{Assumption:SGDA} in the TTSGDA scheme (see Algorithm~\ref{Algorithm:TTSGDA}), the required number of stochastic gradient evaluations to return an $\epsilon$-stationary point is
\begin{equation*}
O\left(\tfrac{\kappa^2\ell\Delta_\Phi + \kappa\ell^2 D^2}{\epsilon^2}\max\left\{1, \tfrac{\kappa\sigma^2}{\epsilon^2}\right\}\right). 
\end{equation*}
\end{enumerate}
\end{theorem}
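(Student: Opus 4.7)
The plan is to reduce the analysis to a one-step decrease of a Lyapunov function
$V_t := \Phi(\x_t) + \lambda\,\|\y_t - \y^\star(\x_t)\|^2$
for a tuning parameter $\lambda > 0$. By Lemma~\ref{Lemma:smooth}, $\Phi$ is $(2\kappa\ell)$-smooth and $\grad\Phi(\x) = \gradx f(\x,\y^\star(\x))$, while $\y^\star(\cdot)$ is $\kappa$-Lipschitz. These two facts are exactly what is needed to track, respectively, a descent for $\Phi(\x_t)$ and a contraction for the tracking error $\delta_t := \|\y_t - \y^\star(\x_t)\|^2$. Once the Lyapunov function is shown to decrease by at least $\Omega(\eta_\x)\|\grad\Phi(\x_{t-1})\|^2$ per step, telescoping over $t=1,\dots,T$ and dividing by $T$ delivers a bound on $\min_t \|\grad\Phi(\x_t)\|^2$, which by the random draw of $\hat\x$ certifies an $\epsilon$-stationary point of $\Phi$ in Definition~\ref{def:notion-smooth}.

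First I would apply the smooth descent inequality to $\Phi$ at $\x_{t-1}$ and $\x_t$, then substitute $\x_t - \x_{t-1} = -\eta_\x \gx^{t-1}$, use $\grad\Phi(\x_{t-1}) = \gradx f(\x_{t-1},\y^\star(\x_{t-1}))$, and bound the resulting inner product by Young's inequality together with $\|\gx^{t-1} - \grad\Phi(\x_{t-1})\| \leq \ell\sqrt{\delta_{t-1}}$ (the $\ell$-smoothness of $f$ in $\y$). This yields roughly
\begin{equation*}
\Phi(\x_t) \leq \Phi(\x_{t-1}) - \tfrac{\eta_\x}{2}\|\grad\Phi(\x_{t-1})\|^2 + O(\eta_\x \ell^2)\,\delta_{t-1} + O(\kappa\ell\eta_\x^2)\|\gx^{t-1}\|^2.
\end{equation*}
Next I would derive the recursion for $\delta_t$. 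With $\eta_\y = 1/\ell$, standard projected-gradient analysis on an $\ell$-smooth, $\mu$-strongly-concave objective gives $\|\y_t - \y^\star(\x_{t-1})\|^2 \leq (1 - 1/\kappa)\,\delta_{t-1}$. A Young's inequality coupling with $\|\y^\star(\x_t) - \y^\star(\x_{t-1})\| \leq \kappa\eta_\x\|\gx^{t-1}\|$ (Lemma~\ref{Lemma:smooth}) then yields
\begin{equation*}
\delta_t \leq \bigl(1 - \tfrac{1}{2\kappa}\bigr)\delta_{t-1} + O(\kappa^3 \eta_\x^2)\,\|\gx^{t-1}\|^2.
\end{equation*}

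Combining these with $\|\gx^{t-1}\|^2 \leq 2\|\grad\Phi(\x_{t-1})\|^2 + 2\ell^2 \delta_{t-1}$ and the Lyapunov weight $\lambda = \Theta(\kappa\ell)$, the stepsize choice $\eta_\x = 1/(16(\kappa+1)^2\ell)$ is precisely tight enough to ensure that the coefficient of $\delta_{t-1}$ in $V_t - V_{t-1}$ is negative and the coefficient of $\|\grad\Phi(\x_{t-1})\|^2$ remains at least $c\eta_\x$ for an absolute constant $c>0$. Telescoping gives $\min_{0\leq t<T}\|\grad\Phi(\x_t)\|^2 \leq (V_0 - V_T)/(cT\eta_\x)$, where $V_0 - V_T \leq \Delta_\Phi + \lambda\,\delta_0 \leq \Delta_\Phi + O(\kappa\ell D^2)$ using $\YCal$ has diameter $D$. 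Substituting $\eta_\x^{-1} = \Theta(\kappa^2\ell)$ and solving for $T$ produces $T = O((\kappa^2\ell \Delta_\Phi + \kappa\ell^2 D^2)\epsilon^{-2})$, which is the first claim. The main obstacle is precisely this balancing act: the strong-concavity contraction rate is only $1/\kappa$ while $\y^\star(\cdot)$ can drift by up to $\kappa\eta_\x\|\gx\|$ per step, so two factors of $\kappa$ must be paid for in the cross terms, and $\eta_\x = O(1/(\kappa^2\ell))$ is the threshold below which the potential truly decreases; getting the constants right to guarantee a positive descent coefficient is the most error-prone part of the argument.

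For the stochastic statement, I would take conditional expectations of both recursions using $\EE[\stocgx^{t-1}\mid\x_{t-1},\y_{t-1}] \in \subgx f(\x_{t-1},\y_{t-1})$ and $\EE[\|\stocgx^{t-1} - \EE\stocgx^{t-1}\|^2\mid\cdot] \leq \sigma^2/M$ from Eq.~\eqref{Assumption:SGDA}. The algebra is identical except each step picks up an extra $O((\eta_\x + \lambda\kappa^2\eta_\x^2)\sigma^2/M)$ term in the expected Lyapunov decrement. Choosing $M = \Theta(\kappa\sigma^2/\epsilon^2)$ as in the theorem statement makes this extra contribution at most $O(\eta_\x \epsilon^2)$ per step, which is absorbed into the $(c\eta_\x)\|\grad\Phi\|^2$ term once we target accuracy $\epsilon$. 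Consequently the iteration count $T$ remains of the same order as in the deterministic case, and multiplying by the batch size $M$ yields the claimed $O((\kappa^2\ell\Delta_\Phi + \kappa\ell^2D^2)\epsilon^{-2}\max\{1,\kappa\sigma^2/\epsilon^2\})$ bound on the total number of stochastic gradient queries.
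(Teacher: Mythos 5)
Your overall strategy (a descent inequality for $\Phi$ plus a contraction recursion for $\delta_t=\|\y_t-\y^\star(\x_t)\|^2$, combined into a single decreasing quantity) is the same as the paper's, which establishes exactly your two ingredient inequalities (Lemmas~\ref{Lemma:nsc-smooth-descent}--\ref{Lemma:nsc-smooth-obj}) and then combines them by unrolling the geometric recursion $\delta_t\le\gamma\delta_{t-1}+4\kappa^3\eta_\x^2\|\grad\Phi(\x_{t-1})\|^2$ with $\gamma\le 1-\tfrac{1}{4\kappa}$ and summing. However, your explicit Lyapunov weight $\lambda=\Theta(\kappa\ell)$ is off by a factor of $\kappa^2$, and with it the key step of your argument fails. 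In $V_t-V_{t-1}$ the recursion for $\delta_t$ contributes $\lambda\cdot O(\kappa^3\eta_\x^2)\|\gx^{t-1}\|^2$, hence (after $\|\gx^{t-1}\|^2\le 2\|\grad\Phi(\x_{t-1})\|^2+2\ell^2\delta_{t-1}$) a term $\Theta(\lambda\kappa^3\eta_\x^2)\|\grad\Phi(\x_{t-1})\|^2$. With $\lambda=\Theta(\kappa\ell)$ and $\eta_\x=\tfrac{1}{16(\kappa+1)^2\ell}$ this is $\Theta(\kappa^2)\,\eta_\x\|\grad\Phi(\x_{t-1})\|^2$, which swamps the $-\tfrac{\eta_\x}{2}\|\grad\Phi(\x_{t-1})\|^2$ descent term, so the coefficient of $\|\grad\Phi(\x_{t-1})\|^2$ is not $-c\eta_\x$ but positive: the claimed per-step decrease of $V_t$ does not hold at the stated stepsize. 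Moreover, even granting the descent, your initialization bound $\lambda\delta_0=\Theta(\kappa\ell D^2)$ would yield a $\kappa^3\ell^2D^2\epsilon^{-2}$ term after dividing by $c\eta_\x=\Theta(1/(\kappa^2\ell))$, not the claimed $\kappa\ell^2D^2\epsilon^{-2}$; your final arithmetic silently uses a much smaller $\lambda$ than the one you declared.

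The fix is to take $\lambda=\Theta(\ell/\kappa)$ (for instance $\lambda=\ell/\kappa$, or equivalently $\lambda=\Theta(\kappa\ell^2\eta_\x)$). Then the $\delta_{t-1}$ coefficient $\tfrac{9\eta_\x\ell^2}{16}-\lambda\bigl(\tfrac{1}{2\kappa}-4\kappa^3\ell^2\eta_\x^2\bigr)$ is nonpositive, the gradient coefficient stays at $-c\eta_\x$ with an absolute constant $c>0$, and $V_0-\min_t V_t\le\Delta_\Phi+O(\ell D^2/\kappa)$ gives exactly $T=O\bigl((\kappa^2\ell\Delta_\Phi+\kappa\ell^2D^2)\epsilon^{-2}\bigr)$. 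This is precisely the effective weight the paper obtains implicitly: unrolling the $\delta$-recursion and using $\sum_{t\ge 0}\gamma^t\le 4\kappa$ weights each $\delta$ term by $\tfrac{9\eta_\x\ell^2}{16}\cdot O(\kappa)=\Theta(\ell/\kappa)$. With this correction, your stochastic accounting also needs the $\y$-side noise term of order $\sigma^2/(\ell^2M)$ in the $\delta_t$ recursion (it is not $O(\eta_\x^2)$), whose weighted contribution $\Theta(\sigma^2/(\kappa\ell M))$ per step is what forces $M=\Theta(\kappa\sigma^2\epsilon^{-2})$, consistent with the theorem.
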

\begin{remark}
Theorem~\ref{Thm:nsc-smooth} provides the complexity results for TTGDA and TTSGDA when applied to solve the smooth and nonconvex-strongly-concave minimax optimization problems. The ratio $\frac{\eta_\y^t}{\eta_\x^t} = \Theta(\kappa^2)$ arises due to asymmetric structure and the order of $O(\kappa^2)$ reflects an algorithmic efficiency trade-off. Furthermore, our algorithms are only guaranteed to visit an $\epsilon$-stationary point within a certain number of iterations. Such the convergence guarantees are standard in nonconvex optimization and practitioners usually return an iterate at which the learning curve stops changing significantly. Finally, the batch size $M = \Theta(\kappa\sigma^2\epsilon^{-2})$ is necessary for proving Theorem~\ref{Thm:nsc-smooth}. While our proof technique can be extended to the case of $M=1$, the complexity bound becomes $O(\kappa^3\epsilon^{-5})$. This gap has been closed using other new single-loop algorithms~\citep{Yang-2022-Faster} but remains open for TTSGDA. 
\end{remark}
Lemma~\ref{Lemma:smooth} shows that the function $\Phi(\cdot)$ is weakly convex under Assumption~\ref{Assumption:nc-smooth}, implying that the stationarity notion in Definition~\ref{def:notion-nonsmooth} is our target. We summarize our results for Algorithm~\ref{Algorithm:TTGDA} and~\ref{Algorithm:TTSGDA} in the following theorem. 
\begin{theorem}\label{Thm:nc-smooth}
Under Assumption~\ref{Assumption:nc-smooth}, the following statements hold true, 
\begin{enumerate}
\item If we set 
\begin{equation*}
\eta_\x^t = \min\left\{\tfrac{\epsilon^2}{80\ell L^2}, \tfrac{\epsilon^4}{4096\ell^3 L^2 D^2}\right\}, \quad \eta_\y^t = \tfrac{1}{\ell}, 
\end{equation*}
in the TTGDA scheme (see Algorithm~\ref{Algorithm:TTGDA}), the required number of gradient evaluations to return an $\epsilon$-stationary point is
\begin{equation*}
O\left(\left(\tfrac{\ell L^2\Delta_\Phi}{\epsilon^4} + \tfrac{\ell\Delta_0}{\epsilon^2}\right)\max\left\{1, \tfrac{\ell^2D^2}{\epsilon^2}\right\}\right). 
\end{equation*}
\item If we set $M = 1$, $G$ as per Eq.~\eqref{Assumption:SGDA} and  
\begin{equation*}
\eta_\x^t = \min\left\{\tfrac{\epsilon^2}{80\ell(L^2 + \sigma^2)}, \tfrac{\epsilon^4}{8192\ell^3 (L^2+\sigma^2)D^2}, \tfrac{\epsilon^6}{131072\ell^3 (L^2+\sigma^2) D^2\sigma^2}\right\}, \quad \eta_\y^t = \min\left\{\tfrac{1}{2\ell}, \tfrac{\epsilon^2}{32\ell\sigma^2}\right\}, 
\end{equation*}
in the TTSGDA scheme (see Algorithm~\ref{Algorithm:TTSGDA}), the required number of stochastic gradient evaluations to return an $\epsilon$-stationary point is
\begin{equation*}
O\left(\left(\tfrac{\ell\left(L^2 + \sigma^2\right)\Delta_\Phi}{\epsilon^4} + \tfrac{\ell\Delta_0}{\epsilon^2}\right)\max\left\{1, \tfrac{\ell^2D^2}{\epsilon^2}, \tfrac{\ell^2 D^2\sigma^2}{\epsilon^4}\right\}\right). 
\end{equation*}
\end{enumerate}
\end{theorem}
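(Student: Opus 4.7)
My plan is to work throughout with the Moreau envelope $\Phi_{1/(2\ell)}$, which is well-defined and $\ell$-smooth because $\Phi$ is $\ell$-weakly convex in the nonconvex-concave setting (Lemma~\ref{Lemma:smooth} together with Lemma~\ref{Lemma:ME-smooth}). The target is a bound on $\tfrac{1}{T}\sum_{t=0}^{T-1}\|\grad\Phi_{1/(2\ell)}(\x_t)\|^2$, which yields an $\epsilon$-stationary point in the sense of Definition~\ref{def:notion-nonsmooth} after the uniform sampling step at the end of Algorithms~\ref{Algorithm:TTGDA}--\ref{Algorithm:TTSGDA}. The central difficulty, as emphasized in the introduction, is that $\gx^t = \gradx f(\x_t, \y_t)$ only coincides with a subgradient of $\Phi$ when $\y_t \in \YCal^\star(\x_t)$, and since $\y^\star(\cdot)$ need not be single-valued or Lipschitz in the nonconvex-concave regime, the inexact-gradient-descent template of~\citet{Jin-2020-Local} does not apply.

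First, I would derive a single-step Moreau descent inequality. Letting $\tilde{\x}_t = \prox_{\Phi/(2\ell)}(\x_t)$, so that $\Phi_{1/(2\ell)}(\x_t) = \Phi(\tilde{\x}_t) + \ell\|\tilde{\x}_t-\x_t\|^2$ and $\grad\Phi_{1/(2\ell)}(\x_t) = 2\ell(\x_t-\tilde{\x}_t)$, I expand $\Phi_{1/(2\ell)}(\x_{t+1}) \le \Phi(\tilde{\x}_t) + \ell\|\x_{t+1}-\tilde{\x}_t\|^2$. Combining $\ell$-smoothness of $f$ in $\x$ with the elementary inequality $\Phi(\x_t)-\Phi(\tilde{\x}_t) \ge \ell\|\tilde{\x}_t-\x_t\|^2$ lower bounds the cross term by $\langle \gx^t, \x_t-\tilde{\x}_t\rangle \ge \tfrac{\ell}{2}\|\tilde{\x}_t-\x_t\|^2 - \delta_t$ with $\delta_t := \Phi(\x_t) - f(\x_t,\y_t)$, and rearranging gives
\begin{equation*}
\Phi_{1/(2\ell)}(\x_{t+1}) \;\le\; \Phi_{1/(2\ell)}(\x_t) \;-\; \tfrac{\eta_\x}{4}\|\grad\Phi_{1/(2\ell)}(\x_t)\|^2 \;+\; 2\ell\eta_\x\,\delta_t \;+\; \ell\eta_\x^2 L^2.
\end{equation*}
Telescoping $t=0,\ldots,T-1$ absorbs the left side into $\Delta_0 := \Phi_{1/(2\ell)}(\x_0) - \min_\x \Phi_{1/(2\ell)}(\x) \le \Delta_\Phi$ and reduces the theorem to bounding the aggregated primal--dual gap $\sum_t \delta_t$.

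Second, I would control $\sum_t \delta_t$ by a block analysis tailored to the slow drift of $\x$ enforced by $\eta_\x \ll \eta_\y$. I partition the $T$ iterations into $K=T/B$ consecutive blocks of length $B$. Within block $[kB,(k+1)B)$ the $\x$-iterates move by at most $B\eta_\x L$, so the $\y$-updates are essentially projected gradient ascent on the $\ell$-smooth concave function $f(\x_{kB},\cdot)$, and the standard projected-gradient regret bound against the fixed comparator $\y^\star(\x_{kB})$ yields $\sum_{t=kB}^{(k+1)B-1}[f(\x_t,\y^\star(\x_{kB}))-f(\x_t,\y_{t+1})]\le\tfrac{\ell D^2}{2}$. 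Lipschitzness of $\Phi$ and of $f(\cdot,\y)$ (Lemma~\ref{Lemma:smooth}) lets me swap $\y^\star(\x_{kB})$ for the pointwise maximizer $\y^\star(\x_t)$ at an extra cost of $O(B^2\eta_\x L^2)$ per block, and a one-step index shift from $\y_{t+1}$ back to $\y_t$ costs a further $O(T\eta_\x L^2)$. Summing across blocks gives $\sum_t\delta_t\lesssim TB\eta_\x L^2 + \tfrac{T\ell D^2}{B} + T\eta_\x L^2$, and optimizing $B\asymp D\sqrt{\ell/(L^2\eta_\x)}$ produces $\tfrac{1}{T}\sum_t\delta_t\lesssim LD\sqrt{\ell\eta_\x} + \eta_\x L^2$. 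Plugging this into the Moreau recursion and forcing each of the three error contributions below $\epsilon^2/3$ reproduces the stepsize constraints $\eta_\x \lesssim \min\{\epsilon^2/(\ell L^2), \epsilon^4/(\ell^3 L^2 D^2)\}$ and the claimed $O((\ell L^2\Delta_\Phi/\epsilon^4)\max\{1,\ell^2 D^2/\epsilon^2\})$ complexity for TTGDA.

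Third, for TTSGDA I adapt this blueprint to the stochastic oracle~\eqref{Assumption:SGDA} by taking conditional expectations at each step. The Moreau descent acquires an $O(\ell\eta_\x^2\sigma^2)$ variance term (since $M=1$), and the block-wise projected stochastic ascent regret acquires an $O(\eta_\y\sigma^2)$ per-iteration term. Shrinking $\eta_\y$ to $\min\{1/(2\ell),\epsilon^2/(\ell\sigma^2)\}$ absorbs the latter, while an additional branch $\eta_\x\lesssim \epsilon^6/(\ell^3 L^2 D^2\sigma^2)$ absorbs the former after the same $B$-optimization is carried through, explaining the third term in the stochastic stepsize rule. The step I expect to be hardest is the block analysis itself: the comparator swap between $\y^\star(\x_{kB})$ and $\y^\star(\x_t)$ via Lipschitz drift, together with the index shift from $\y_{t+1}$ to $\y_t$, must be balanced against the regret-versus-drift tradeoff that fixes $B$, with four quantities $(\eta_\x,\eta_\y,B,\sigma^2)$ tuned simultaneously so that each contribution to $\EE\|\grad\Phi_{1/(2\ell)}(\hat{\x})\|^2$ sits at $\epsilon^2$. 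Executing this multi-scale balancing cleanly in expectation is precisely what produces the somewhat intricate stepsize formulas stated in Theorem~\ref{Thm:nc-smooth}.
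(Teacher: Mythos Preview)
Your proposal is correct and follows essentially the same route as the paper: a Moreau-envelope descent inequality (the paper's Lemma~\ref{Lemma:nc-smooth-descent}) reduces matters to bounding $\sum_t \delta_t$, which is handled via a block decomposition with fixed comparator $\y^\star(\x_{jB})$ per block and Lipschitz drift (Lemmas~\ref{Lemma:nc-smooth-neighbor}--\ref{Lemma:nc-smooth-obj}), followed by optimizing $B \asymp D\sqrt{\ell/(\eta_\x L^2)}$. One small correction: the $\Delta_0$ in the theorem is the initial primal--dual gap $\Phi(\x_0)-f(\x_0,\y_0)$, not the Moreau-envelope gap you define; it emerges from the telescoping residual $f(\x_{T+1},\y_{T+1})-f(\x_0,\y_0)$ that your ``index shift'' implicitly generates but your stated bound $\tfrac{1}{T}\sum_t\delta_t\lesssim LD\sqrt{\ell\eta_\x}+\eta_\x L^2$ omits.
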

\begin{remark}
Theorem~\ref{Thm:nc-smooth} is the analog of Theorem~\ref{Thm:nsc-smooth} in the nonconvex-concave setting but with two different results. First, the order of the ratio becomes $\eta_\y/\eta_\x = \Theta(\epsilon^4)$ which highlights the highly asymmetric structure here. Second, the theoretical results are valid for the case of $M=1$ and in that case they are different from that derived in Theorem~\ref{Thm:nsc-smooth}. 
\end{remark}

\subsection{Discussions} 
We remark that we have not shown that TTGDA and TTSGDA are optimal in any sense, nor that acceleration might be achieved by using momentum for updating the variable $\y$. 

In smooth and nonconvex-strongly-concave setting, the lower bound is $\Omega(\sqrt{\kappa}\epsilon^{-2})$ in terms of gradient evaluations~\citep{Zhang-2021-Complexity, Li-2021-Complexity} and was already achieved by several nested-loop algorithms~\citep{Kong-2021-Accelerated, Lin-2020-Near, Ostrovskii-2021-Efficient, Zhao-2023-Primal} up to logarithmic factors. In the context of single-loop algorithms,~\citet{Yang-2022-Faster} has extended the smoothed GDA~\citep{Zhang-2020-Single} to more general settings where the function $f(\x, \y)$ is nonconvex in $\x$ and satisfies the Polyak-\L{}ojasiewicz condition in $\y$ and proved a bound of $O(\kappa\epsilon^{-2})$. This result demonstrates that the proper adjustments of TTGDA leads to tighter bounds in smooth and nonconvex-strongly-concave setting. In addition,~\citet{Li-2022-Convergence} provided a local analysis for TTGDA and showed that the stepsize ratio of $\Theta(\kappa)$ is necessary and sufficient for local convergence to a stationary point. 

In smooth and nonconvex-concave setting, the lower bound remains open and the best-known upper bound is presented by recent works~\citep{Thekumparampil-2019-Efficient, Kong-2021-Accelerated, Lin-2020-Near, Ostrovskii-2021-Efficient, Zhao-2023-Primal}. All of the analyses require $\tilde{O}(\epsilon^{-3})$ gradient evaluations for returning an $\epsilon$-stationary point but the proposed algorithmic schemes are complex. The general question of constructing lower bounds and optimal algorithms in this setting is an important topic for future research.

Our complexity results do not contradict the classical results on the divergence of GDA with \textit{fixed stepsize} in the convex-concave setting. Indeed, there are a few key distinctions: (1) our results guarantee that the proposed algorithm visits an $\epsilon$-stationary point at some iterate, not the last iterate; (2) our results guarantee the stationarity of some iterate $\x_t$ rather than $(\x_t, \y_t)$. As such, our proof permits the possibility of significant changes in $\y_t$ even when $\x_t$ is very close to stationarity. This, together with our choice $\eta_\x^t \ll \eta_\y^t$, removes the contradiction. Based on the above fact, we highlight that our algorithms can be used to obtain an approximate saddle point in the smooth and convex-concave setting (i.e., optimality for both $\x$ and $\y$). Instead of averaging, we can run two passes of TTGDA for minimax and maximin optimization problems separately. We use $\eta_\x^t \ll \eta_\y^t$ in the first pass while we use $\eta_\x^t \gg \eta_\y^t$ in the second pass. Either pass returns an approximate stationary point for each variable $\x$ and/or $\y$, which is defined using the corresponding Moreau envelope and which jointly forms an approximate saddle point.

\subsection{Proof sketch}
We sketch the proofs for TTGDA by highlighting the ideas for proving Theorems~\ref{Thm:nsc-smooth} and~\ref{Thm:nc-smooth}. 

\paragraph{Nonconvex-strongly-concave setting.} We have that the function $\y^\star(\cdot)$ is $\kappa$-Lipschitz (cf. Lemma~\ref{Lemma:smooth}), which guarantees that $\{\y^\star(\x_t)\}_{t \geq 1}$ moves slowly if $\{\x_t\}_{t \geq 1}$ moves slowly. This implies that one-step gradient ascent over $\y$ at each iteration is sufficient for minimizing this slowly changing sequence of strongly concave functions $\{f(\x_t, \cdot)\}_{t > 1}$. Indeed, we define $\delta_t = \|\y^\star(\x_t) - \y_t\|^2$ and prove that $\sum_{t=0}^T \delta_t$ can be controlled using the relations as follows, 
\begin{equation*}
\delta_t \leq (1-\tfrac{1}{2\kappa}+4\kappa^3\ell^2(\eta_\x^t)^2)\delta_{t-1} + 4\kappa^3(\eta_\x^t)^2\|\grad\Phi(\x_{t-1})\|^2,
\end{equation*}
where $\eta_\x^t$ can be chosen such that $\sum_{t=0}^T \delta_t$ is controlled by $\sum_{t=0}^T \|\grad \Phi(\x_t)\|^2$. 

By viewing TTGDA as inexact gradient descent for minimizing the function $\Phi(\cdot)$, we derive the key descent inequality as follows, 
\begin{equation*}
\Phi(\x_{T+1}) - \Phi(\x_0) \leq -\tfrac{7\eta_\x^t}{16}\left(\sum_{t=0}^T \|\grad \Phi(\x_t)\|^2\right) + \tfrac{9(\eta_\x^t)\ell^2}{16}\left(\sum_{t=0}^T \delta_t\right).
\end{equation*}
Putting these pieces together yields the desired results in Theorem~\ref{Thm:nsc-smooth}.  

\paragraph{Nonconvex-concave setting.} We note that $\YCal^\star(\x)$ will not be a singleton for each $\x$ and $\YCal^\star(\x_1)$ can be dramatically different from $\YCal^\star(\x_2)$ even when $\x_1, \x_2$ are close to each other. This implies that $\min_{\y \in \YCal^\star(\x_t)} \|\y_t - \y\|$ is no longer a viable error to control.

However, we still have that one-step gradient ascent over $\y$ at each iteration is sufficient for minimizing this slowly changing sequence of concave functions $\{f(\x_t, \cdot)\}_{t > 1}$. Indeed, we have that the function $\Phi(\cdot)$ is $L$-Lipschitz (cf.\ Lemma~\ref{Lemma:nonsmooth}), which guarantees that $\{\Phi(\x_t)\}_{t \geq 1}$ moves slowly if $\{\x_t\}_{t \geq 1}$ moves slowly. We can define $\Delta_t = \Phi(\x_t) - f(\x_t, \y_t)$ and prove that $\frac{1}{T+1}(\sum_{t=0}^T \Delta_t)$ can be controlled using the relations as follows, 
\begin{equation*}
\tfrac{1}{T+1}\left(\sum_{t=0}^T \Delta_t\right) \leq \eta_\x^t (B+1) L^2 + \tfrac{\ell D^2}{2B} + \tfrac{\Delta_0}{T+1}, 
\end{equation*}
where $B = \left\lceil\frac{D}{2L}\sqrt{\frac{\ell}{\eta_\x^t}}\right\rceil$ and $\eta_\x^t$ can be chosen such that $\tfrac{1}{T+1}(\sum_{t=0}^T \Delta_t)$ is well controlled. 

By viewing TTGDA as an inexact subgradient descent for minimizing the function $\Phi(\cdot)$, we follow~\citet{Davis-2019-Stochastic} and derive the key descent inequality as follows, 
\begin{equation*}
\Phi_{1/2\ell}(\x_{T+1}) - \Phi_{1/2\ell}(\x_0) \leq 2\eta_\x\ell\left(\sum_{t=0}^T \Delta_t\right) + \eta_\x^2 \ell L^2 (T+1) - \tfrac{\eta_\x}{4} \left(\sum_{t=0}^T \|\grad \Phi_{1/2\ell}(\x_t)\|^2\right). 
\end{equation*}
Putting these pieces together yields the desired results in Theorem~\ref{Thm:nc-smooth}.  

%!TEX root = paper.tex
\section{Nonsmooth Minimax Optimization}
We prove the bounds on the complexity of solving nonsmooth and nonconvex-concave minimax optimization problems. Indeed, we assume that $f(\x, \y)$ is $L$-Lipschitz and $\rho$-weakly convex in $\x$. Our focus is TTGDA and TTSGDA with a proper choice of $(\eta_\x^t, \eta_\y^t)_{t \geq 0}$. 

\subsection{Main results}
We make the following assumptions throughout this subsection. 
\begin{assumption}\label{Assumption:nsc-nonsmooth} 
The function $f(\x, \y)$ is $L$-Lipschitz, $\rho$-weakly convex in $\x$ and $\mu$-strongly concave in $\y$, and the constraint set $\YCal$ is convex and bounded with a diameter $D > 0$.  
\end{assumption}
\begin{assumption}\label{Assumption:nc-nonsmooth}
The function $f(\x, \y)$ is $L$-Lipschitz, $\rho$-weakly convex in $\x$ and concave in $\y$, and the constraint set $\YCal$ is convex and bounded with a diameter $D > 0$.  
\end{assumption} 
The setting defined by Assumptions~\ref{Assumption:nsc-nonsmooth} and~\ref{Assumption:nc-nonsmooth} generalizes the nonsmooth optimization and are intuitively more challenging than smooth minimax optimization studied in Section~\ref{sec:smooth}. The following lemma is an analog of Lemma~\ref{Lemma:smooth}. 
\begin{lemma}\label{Lemma:nonsmooth} 
The following statements hold true, 
\begin{enumerate}
\item Under Assumption~\ref{Assumption:nsc-nonsmooth}, we have that $\YCal^\star(\x)$ is a singleton for each $\x$ such that $\YCal^\star(\x)=\{\y^\star(\x)\}$ and $\subgx f(\x, \y^\star(\x)) \subseteq \partial\Phi(\x)$. Also, $\Phi(\cdot)$ is $L$-Lipschitz and $\rho$-weakly convex. 
\item Under Assumption~\ref{Assumption:nc-nonsmooth}, we have that $\YCal^\star(\x)$ contains many entries for each $\x$ such that $\subgx f(\x, \y) \subseteq \partial\Phi(\x)$ where $\y \in \YCal^\star(\x)$. Also, $\Phi(\cdot)$ is $L$-Lipschitz and $\rho$-weakly convex. 
\end{enumerate}
\end{lemma}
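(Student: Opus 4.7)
The plan is to establish the four claims (structure of $\YCal^\star(\x)$, Clarke subdifferential inclusion, $L$-Lipschitzness, and $\rho$-weak convexity of $\Phi$) in parallel for the two assumptions, since the arguments differ only in whether $f(\x, \cdot)$ is strongly concave or merely concave. The overall template mirrors Lemma~\ref{Lemma:smooth}: the Danskin-type envelope results for a smooth maximum are replaced by their Clarke-subdifferential analogues, and smoothness of $\Phi$ is replaced everywhere by $\rho$-weak convexity.

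First I would dispose of the $L$-Lipschitz property. For any $\x_1, \x_2 \in \br^m$, continuity of $f$ and compactness of $\YCal$ give $\y_i \in \YCal^\star(\x_i)$. Then, using the optimality of $\y_2$ and the Lipschitz assumption on $f$,
\begin{equation*}
\Phi(\x_1) - \Phi(\x_2) = f(\x_1,\y_1) - f(\x_2,\y_2) \leq f(\x_1,\y_1) - f(\x_1,\y_2) + L\|\x_1 - \x_2\| \leq L\|\x_1 - \x_2\|,
\end{equation*}
and symmetry completes the claim. Next, I would verify weak convexity: since $f(\cdot,\y)$ is $\rho$-weakly convex, the shifted map $\x\mapsto f(\x,\y) + \tfrac{\rho}{2}\|\x\|^2$ is convex for each $\y\in\YCal$, and a pointwise supremum of convex functions is convex, so $\Phi(\cdot) + \tfrac{\rho}{2}\|\cdot\|^2$ is convex, i.e., $\Phi$ is $\rho$-weakly convex.

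For the structure of $\YCal^\star(\x)$: under Assumption~\ref{Assumption:nsc-nonsmooth}, strong concavity of $f(\x,\cdot)$ over a convex set forces a unique maximizer, giving the singleton $\{\y^\star(\x)\}$; under Assumption~\ref{Assumption:nc-nonsmooth}, one only obtains a nonempty compact convex set of maximizers, which is in general not a singleton. For the subdifferential inclusion, fix $\x$, pick $\y^\star\in\YCal^\star(\x)$ and any $\xi\in\subgx f(\x,\y^\star)$. Weak convexity of $f(\cdot,\y^\star)$ is equivalent to convexity of $f(\cdot,\y^\star) + \tfrac{\rho}{2}\|\cdot\|^2$, whose convex subdifferential at $\x$ contains $\xi + \rho\x$; unwinding this subgradient inequality gives, for every $\x'$,
\begin{equation*}
f(\x',\y^\star) \;\geq\; f(\x,\y^\star) + \langle \xi, \x' - \x\rangle - \tfrac{\rho}{2}\|\x'-\x\|^2.
\end{equation*}
Using $f(\x',\y^\star)\leq \Phi(\x')$ together with the maximality $f(\x,\y^\star)=\Phi(\x)$ then yields
\begin{equation*}
\Phi(\x') \;\geq\; \Phi(\x) + \langle \xi, \x' - \x\rangle - \tfrac{\rho}{2}\|\x'-\x\|^2 \qquad \forall \x',
\end{equation*}
which says exactly that $\xi + \rho\x$ lies in the convex subdifferential of $\Phi(\cdot)+\tfrac{\rho}{2}\|\cdot\|^2$ at $\x$. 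Translating back by $-\rho\x$ gives $\xi \in \partial\Phi(\x)$ in the Clarke sense, proving both inclusions $\subgx f(\x,\y^\star(\x))\subseteq \partial\Phi(\x)$ and $\subgx f(\x,\y)\subseteq \partial\Phi(\x)$ for $\y\in\YCal^\star(\x)$.

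The main obstacle is the subdifferential inclusion, because we must be careful that the ``subgradient inequality'' derived from $\xi\in\subgx f(\x,\y^\star)$ really holds globally in $\x'$ (not just infinitesimally), and that the resulting element of the convex subdifferential of the shifted $\Phi$ corresponds to the Clarke subdifferential of $\Phi$ itself. Both issues are handled by invoking the standard fact that for any $\rho$-weakly convex function, the Clarke subdifferential at a point coincides with the convex subdifferential of its Moreau--Yosida shift minus $\rho\x$, so no additional nonsmooth chain rule is required.
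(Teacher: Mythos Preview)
Your overall approach matches the paper's exactly: Lipschitzness via maximizers, weak convexity as a supremum of convex shifts, singleton from strong concavity, and the subdifferential inclusion by passing to the convex function $\Phi(\cdot)+\tfrac{\rho}{2}\|\cdot\|^2$ and translating back by $-\rho\x$ (the paper just says ``the same argument as Lemma~\ref{Lemma:smooth}'' for this last step).

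There is, however, a slip in your Lipschitz chain. The first inequality
\[
f(\x_1,\y_1)-f(\x_2,\y_2)\;\le\; f(\x_1,\y_1)-f(\x_1,\y_2)+L\|\x_1-\x_2\|
\]
uses only Lipschitzness (not optimality of $\y_2$), and then the final step would require $f(\x_1,\y_1)-f(\x_1,\y_2)\le 0$, which is \emph{false} since $\y_1$ maximizes $f(\x_1,\cdot)$. The correct chain, and the one the paper uses, invokes optimality of $\y_2$ at $\x_2$ first:
\[
\Phi(\x_1)-\Phi(\x_2)=f(\x_1,\y_1)-f(\x_2,\y_2)\;\le\; f(\x_1,\y_1)-f(\x_2,\y_1)\;\le\; L\|\x_1-\x_2\|.
\]
With this fix your argument is complete and coincides with the paper's.
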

Lemma~\ref{Lemma:nonsmooth} shows that the function $\Phi(\cdot)$ is weakly convex in both of settings, implying that the stationarity notion in Definition~\ref{def:notion-nonsmooth} is our target. We summarize our results for Algorithm~\ref{Algorithm:TTGDA} and~\ref{Algorithm:TTSGDA} in the following theorem. 
\begin{theorem}\label{Thm:nsc-nonsmooth}
Under Assumption~\ref{Assumption:nsc-nonsmooth}, the following statements hold true, 
\begin{enumerate}
\item If we set 
\begin{equation*}
\eta_\x^t = \min\left\{\tfrac{\epsilon^2}{48\rho L^2}, \tfrac{\mu\epsilon^4}{4096\rho^2 L^4}, \tfrac{\mu\epsilon^4}{4096\rho^2 L^4 \log^2(1 + 4096\rho^2 L^4\mu^{-2}\epsilon^{-4})}\right\}, 
\end{equation*}
and
\begin{equation}\label{rule:stepsize-nonsmooth}
\eta_\y^t = \left\{
\begin{array}{cl}
\tfrac{1}{\mu t}, & \textnormal{if } 1 \leq t \leq B, \\
\tfrac{1}{\mu(t-B)}, & \textnormal{else if } B+1 \leq t \leq 2B, \\
\vdots & \vdots \\
\tfrac{1}{\mu(t-jB)}, & \textnormal{else if } jB+1 \leq t \leq (j+1)B, \\
\vdots & \vdots \\
\end{array}
\right.  \quad \textnormal{for } B = \left\lfloor\sqrt{\frac{1}{\mu \eta_\x}}\right\rfloor + 1,  
\end{equation}
in the TTGDA scheme (see Algorithm~\ref{Algorithm:TTGDA}), the required number of gradient evaluations to return an $\epsilon$-stationary point is
\begin{equation*}
O\left(\tfrac{\rho L^2\Delta_\Phi}{\epsilon^4}\max\left\{1, \tfrac{\rho L^2}{\mu\epsilon^2}, \tfrac{\rho L^2}{\mu\epsilon^2}\log^2\left(1 + \tfrac{\rho^2 L^4}{\mu^2 \epsilon^4}\right)\right\}\right). 
\end{equation*}
\item If we set $M = 1$, $G$ as per Eq.~\eqref{Assumption:SGDA}, $\eta_\y^t$ as per Eq.~\eqref{rule:stepsize-nonsmooth} and 
\begin{equation*}
\eta_\x^t = \min\left\{\tfrac{\epsilon^2}{48\rho(L^2 + \sigma^2)}, \tfrac{\mu\epsilon^4}{4096\rho^2 (L^2 + \sigma^2)^2}, \tfrac{\mu\epsilon^4}{4096\rho^2 (L^2 + \sigma^2)^2\log^2(1 + 4096\rho^2(L^2 + \sigma^2)^2 \mu^{-2}\epsilon^{-4})}\right\}, 
\end{equation*}
in the TTSGDA scheme (see Algorithm~\ref{Algorithm:TTSGDA}), the required number of stochastic gradient evaluations to return an $\epsilon$-stationary point is
\begin{equation*}
O\left(\tfrac{\rho(L^2 + \sigma^2)\Delta_\Phi}{\epsilon^4}\max\left\{1, \tfrac{\rho^2(L^2 + \sigma^2)}{\mu\epsilon^2}, \tfrac{\rho^2(L^2 + \sigma^2)}{\mu\epsilon^2}\log^2\left(1 + \tfrac{\rho^2(L^2 + \sigma^2)^2}{\mu^2\epsilon^4}\right)\right\}\right). 
\end{equation*}
\end{enumerate}
\end{theorem}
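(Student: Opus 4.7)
The plan is to adapt the Moreau-envelope descent framework used in the proof sketch of Theorem~\ref{Thm:nc-smooth} to the nonsmooth setting, and to couple it with a \emph{restarted} projected subgradient ascent analysis for the inner strongly concave problem. By Lemma~\ref{Lemma:nonsmooth}, $\Phi$ is $L$-Lipschitz and $\rho$-weakly convex, so $\Phi_{1/2\rho}$ is well-defined with $\grad\Phi_{1/2\rho}(\x)=2\rho(\x-\prox_{\Phi/2\rho}(\x))$, and this is exactly the stationarity measure of Definition~\ref{def:notion-nonsmooth}.

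First I would derive a Davis--Drusvyatsky-type descent inequality on $\Phi_{1/2\rho}$ along the TTGDA iterates, structurally analogous to the one displayed in the proof sketch of Theorem~\ref{Thm:nc-smooth} but with the smoothness constant $\ell$ replaced by the weak-convexity constant $\rho$. Using $\rho$-weak convexity of $\Phi$ together with the $L$-Lipschitz property of $f$, and observing that any $\gx^{t-1}\in\subgx f(\x_{t-1},\y_{t-1})$ is an inexact subgradient of $\Phi$ at $\x_{t-1}$ whose bias is controlled by $\Delta_t:=\Phi(\x_t)-f(\x_t,\y_t)\ge 0$, telescoping should produce a bound of the schematic form
\[
\E\!\left[\tfrac{1}{T+1}\sum_{t=0}^T \|\grad\Phi_{1/2\rho}(\x_t)\|^2\right] \;\leq\; O\!\left(\tfrac{\Delta_\Phi}{\eta_\x(T+1)} \;+\; \eta_\x\rho(L^2+\sigma^2) \;+\; \rho\cdot\tfrac{1}{T+1}\sum_{t=0}^T\E[\Delta_t]\right).
\]
This reduces the proof to controlling the running average of the inner gap $\Delta_t$.

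Next I would analyze the inner ascent iterates under the restarting schedule~\eqref{rule:stepsize-nonsmooth}. In the nonsmooth setting $\y^\star(\cdot)$ is no longer usably Lipschitz, so the recursion on $\|\y_t-\y^\star(\x_t)\|^2$ from the proof sketch of Theorem~\ref{Thm:nsc-smooth} is unavailable; instead I would process the iterates block-by-block. Within a block $[jB+1,(j+1)B]$ the stepsize is the classical $1/(\mu s)$ rule with $s=\tau-jB$, and the standard strongly concave subgradient-ascent telescoping (comparing to $\y^\star(\x_{jB})$) yields an estimate of order $L^2(1+\log B)/\mu$ for $\sum_{s=1}^{B}\bigl[f(\x_{jB},\y^\star(\x_{jB}))-f(\x_{jB},\y_{jB+s})\bigr]$ \emph{if $\x$ were frozen}. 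The drift of $\x$ inside a block is at most $\|\x_\tau-\x_{jB}\|\le\eta_\x LB$, so $L$-Lipschitzness of $f(\cdot,\y)$ adds a per-iterate drift of order $L^2\eta_\x B$ that must be inserted into both the initial inequality (to replace $\subgy f(\x_{jB},\y_\tau)$ by the actually-used $\gy^\tau\in\subgy f(\x_\tau,\y_\tau)$) and into the comparison $\Phi(\x_\tau)-f(\x_\tau,\y_\tau)$ vs.\ $f(\x_{jB},\y^\star(\x_{jB}))-f(\x_{jB},\y_\tau)$. Summing within a block and then across all $\lceil T/B\rceil$ blocks gives
\[
\tfrac{1}{T+1}\sum_{t=0}^T\E[\Delta_t] \;\leq\; O\!\left(\tfrac{L^2\log B}{\mu B} + L^2\eta_\x B\right),
\]
and the choice $B\approx\lfloor\sqrt{1/(\mu\eta_\x)}\rfloor+1$ balances the two terms up to the logarithmic factor.

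The main obstacle is the rigorous execution of the block analysis: the $1/(\mu s)$ rate for nonsmooth strongly concave problems unavoidably carries a $\log B$ factor, and since $B\approx 1/\sqrt{\mu\eta_\x}$ itself depends on $\eta_\x$, solving self-consistently for the $\eta_\x$ that makes every right-hand-side term of Step~1 at most $\epsilon^2$ produces the factor $\log^2(1+\rho^2 L^4\mu^{-2}\epsilon^{-4})$ that is visible in the stated stepsize. A secondary technical point is that the $\x$-drift correction must be inserted twice, once in the strong-concavity inequality (since we use $\gy^\tau$ at $(\x_\tau,\y_\tau)$ rather than at $(\x_{jB},\y_\tau)$) and once in passing from the frozen-$\x$ gap back to $\Delta_\tau$; both corrections are of the same order $L^2\eta_\x B$ and do not change the balance. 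Once these ingredients are combined with the Moreau-envelope descent of Step~1, demanding that each of the three terms be $\le\epsilon^2$ yields the three-way minimum in the stated $\eta_\x$ and the iteration count displayed in the theorem. The TTSGDA case is obtained by the same argument after replacing $L^2$ by $L^2+\sigma^2$ in every invocation of a subgradient-norm bound inside the block analysis, and by applying a standard martingale argument to control the unbiased stochastic noise in the Moreau-envelope descent; the resulting constants match those in the theorem.
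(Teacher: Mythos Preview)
Your proposal is correct and follows essentially the same route as the paper: a Moreau-envelope descent inequality for $\Phi_{1/2\rho}$ (the paper's Lemma~\ref{Lemma:nonsmooth-descent}) combined with a block-by-block bound on $\tfrac{1}{T+1}\sum_t\Delta_t$ using the restarted $1/(\mu s)$ schedule, anchored at $\y^\star(\x_{jB})$ and balanced by $B=\lfloor(\mu\eta_\x)^{-1/2}\rfloor+1$ (Lemmas~\ref{Lemma:nsc-nonsmooth-neighbor} and~\ref{Lemma:nsc-nonsmooth-obj}). The only minor deviation is in the drift accounting: the paper applies strong concavity of $f(\x_{t-1},\cdot)$ at the \emph{current} $\x_{t-1}$ (so the actually-used $\gy^{t-1}\in\subgy f(\x_{t-1},\y_{t-1})$ enters without correction) and absorbs all of the $\x$-drift into the single term $f(\x_{t-1},\y^\star(\x_{t-1}))-f(\x_{t-1},\y^\star(\x_s))$, whereas you propose to freeze $\x$ at the block start and pay the drift twice; both give the same $O(\eta_\x L^2 B)$ contribution.
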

\begin{remark}
Theorem~\ref{Thm:nsc-nonsmooth} shows that the adaptive choice of $\eta_\y^t$ better exploits the nonconvex-strongly-concave structure in the nonsmooth setting; indeed, $\eta_\y^t = 1/(\mu t)$ for $t \in \{1, 2, \ldots, B\}$ which corresponds to the first epoch. Then, for each subsequent epoch, we set the same sequence of stepsizes. Such a finding is consistent with that for (stochastic) subgradient-type methods for minimizing a strongly convex function~\citep{Nemirovski-1983-Problem, Rakhlin-2012-Making, Shamir-2013-Stochastic}. We remark that the complexity bounds for TTGDA and TTSGDA are the same in terms of $1/\epsilon$, which is also consistent with the classical results for nonsmooth optimization~\citep{Nemirovski-1983-Problem, Davis-2019-Stochastic}.  
\end{remark}
\begin{theorem}\label{Thm:nc-nonsmooth}
Under Assumption~\ref{Assumption:nc-nonsmooth}, the following statements hold true, 
\begin{enumerate}
\item If we set 
\begin{equation*}
\eta_\x^t = \min\left\{\tfrac{\epsilon^2}{48\rho L^2}, \tfrac{\epsilon^6}{65536\rho^3 L^4 D^2}\right\}, \quad \eta_\y^t = \tfrac{\epsilon^2}{16\rho L^2}, 
\end{equation*}
in the TTGDA scheme (see Algorithm~\ref{Algorithm:TTGDA}), the required number of gradient evaluations to return an $\epsilon$-stationary point is
\begin{equation*}
O\left(\tfrac{\rho L^2\Delta_\Phi}{\epsilon^4}\max\left\{1, \tfrac{\rho^2 L^2D^2}{\epsilon^4}\right\}\right). 
\end{equation*}
\item If we set $M = 1$, $G$ as per Eq.~\eqref{Assumption:SGDA} and 
\begin{equation*}
\eta_\x^t = \min\left\{\tfrac{\epsilon^2}{48\rho(L^2 + \sigma^2)}, \tfrac{\epsilon^6}{131072\rho^3 (L^2+\sigma^2)^2 D^2}\right\}, \quad \eta_\y^t = \tfrac{\epsilon^2}{32\rho(L^2 + \sigma^2)}, 
\end{equation*}
in the TTSGDA scheme (see Algorithm~\ref{Algorithm:TTSGDA}), the required number of stochastic gradient evaluations to return an $\epsilon$-stationary point is
\begin{equation*}
O\left(\tfrac{\rho\left(L^2 + \sigma^2\right)\Delta_\Phi}{\epsilon^4}\max\left\{1, \tfrac{\rho^2(L^2 + \sigma^2)D^2}{\epsilon^4}\right\}\right). 
\end{equation*}
\end{enumerate}
\end{theorem}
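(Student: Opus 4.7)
The plan is to combine two ingredients, in the same spirit as the proof sketch for Theorem~\ref{Thm:nc-smooth}: a descent inequality on the Moreau envelope $\Phi_{1/2\rho}$ in the style of~\citet{Davis-2019-Stochastic}, and a ``block-wise'' projected subgradient ascent regret bound to control the inner-problem suboptimality $\Delta_t := \Phi(\x_t) - f(\x_t, \y_t)$. By Lemma~\ref{Lemma:nonsmooth}, $\Phi$ is $L$-Lipschitz and $\rho$-weakly convex, so Definition~\ref{def:notion-nonsmooth} is the right target and $\grad \Phi_{1/2\rho}(\x_t) = 2\rho(\x_t - \hat\x_t)$ where $\hat\x_t = \prox_{\Phi/2\rho}(\x_t)$. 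The essential obstacle, compared to the smooth case, is that we no longer have $\ell$-smoothness of $f(\cdot, \y)$, so the per-step norm of the subgradient is controlled only by $L$, and the regret analysis must rely purely on $L$-Lipschitz/weak-convexity estimates.

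First I would establish the Moreau-envelope descent inequality. Writing $\x_{t+1} = \x_t - \eta_\x^t \g_\x^t$ with $\g_\x^t \in \subg_\x f(\x_t, \y_t)$ and expanding $\|\hat\x_t - \x_{t+1}\|^2$, the bound $\Phi_{1/2\rho}(\x_{t+1}) \leq \Phi(\hat\x_t) + \rho \|\hat\x_t - \x_{t+1}\|^2$ yields
\begin{equation*}
\Phi_{1/2\rho}(\x_{t+1}) - \Phi_{1/2\rho}(\x_t) \leq 2\eta_\x^t \rho\,\langle \g_\x^t, \hat\x_t - \x_t\rangle + \rho (\eta_\x^t)^2 L^2.
\end{equation*}
The inner product is bounded using $\rho$-weak convexity of $f(\cdot, \y_t)$, $f(\hat\x_t, \y_t) \leq \Phi(\hat\x_t)$, and the prox inequality $\Phi(\hat\x_t) - \Phi(\x_t) \leq -\rho \|\hat\x_t - \x_t\|^2$, giving $\langle \g_\x^t, \hat\x_t - \x_t\rangle \leq \Delta_t - \tfrac{\rho}{2}\|\hat\x_t - \x_t\|^2$. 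Substituting and using $\|\hat\x_t - \x_t\|^2 = \|\grad\Phi_{1/2\rho}(\x_t)\|^2/(4\rho^2)$ yields the key per-iterate bound
\begin{equation*}
\Phi_{1/2\rho}(\x_{t+1}) - \Phi_{1/2\rho}(\x_t) \leq 2\eta_\x^t \rho \Delta_t - \tfrac{\eta_\x^t}{4}\|\grad \Phi_{1/2\rho}(\x_t)\|^2 + \rho(\eta_\x^t)^2 L^2.
\end{equation*}
Summing, rearranging, and dividing by $T+1$ expresses the average squared Moreau gradient in terms of $\Delta_\Phi$, the average of $\Delta_t$, and $\eta_\x \rho L^2$.

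Second I would bound $\tfrac{1}{T+1} \sum_{t=0}^T \Delta_t$. The idea is to partition $\{0, \ldots, T\}$ into blocks of length $B$ (a free analysis parameter) and, for block $k$, fix $\y_k^\star \in \YCal^\star(\x_{kB})$. Projected nonexpansiveness and concavity of $f(\x_t, \cdot)$ give the standard telescoping regret $\sum_{t=kB}^{(k+1)B-1}[f(\x_t, \y_k^\star) - f(\x_t, \y_t)] \leq \tfrac{D^2}{2\eta_\y} + \tfrac{B \eta_\y L^2}{2}$. To convert $f(\x_t, \y_k^\star)$ into $\Phi(\x_t)$, I would use $\Phi(\x_t) - f(\x_t, \y_k^\star) \leq |\Phi(\x_t) - \Phi(\x_{kB})| + |f(\x_{kB}, \y_k^\star) - f(\x_t, \y_k^\star)| \leq 2L \|\x_t - \x_{kB}\| \leq 2 L^2 B \eta_\x$, invoking $L$-Lipschitzness of $\Phi$ (Lemma~\ref{Lemma:nonsmooth}) and of $f(\cdot, \y)$. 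Summing and averaging,
\begin{equation*}
\tfrac{1}{T+1}\sum_{t=0}^T \Delta_t \leq \tfrac{D^2}{2B \eta_\y} + \tfrac{\eta_\y L^2}{2} + 2L^2 B \eta_\x.
\end{equation*}

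Third, I would plug this into the Moreau descent bound to get $\tfrac{1}{T+1}\sum_{t=0}^T \|\grad \Phi_{1/2\rho}(\x_t)\|^2 \leq \tfrac{4\Delta_\Phi}{\eta_\x(T+1)} + 4\eta_\x \rho L^2 + 8\rho\bigl(\tfrac{D^2}{2B\eta_\y} + \tfrac{\eta_\y L^2}{2} + 2L^2 B\eta_\x\bigr)$, and choose $\eta_\y = \Theta(\epsilon^2/(\rho L^2))$ to kill the $\rho\eta_\y L^2$ term, $\eta_\x \leq \Theta(\epsilon^2/(\rho L^2))$ to kill $\eta_\x \rho L^2$, and the optimal $B = \Theta(D/(L\sqrt{\eta_\x \eta_\y}))$ which reduces the mixed term to $\Theta(\rho DL\sqrt{\eta_\x/\eta_\y})$; requiring this to be $O(\epsilon^2)$ forces the additional constraint $\eta_\x = O(\epsilon^6/(\rho^3 L^4 D^2))$, matching the stated stepsize. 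The complexity bound then follows from the dominant $\Delta_\Phi/(\eta_\x \epsilon^2)$ iteration count. The main delicate part is the choice and optimization over $B$, since a too-small $B$ inflates the $D^2/(B\eta_\y)$ term while a too-large $B$ inflates the $L^2 B \eta_\x$ drift term. For TTSGDA I would repeat the same argument with $\hat\g_\x^t, \hat\g_\y^t$ in place of $\g_\x^t, \g_\y^t$, taking conditional expectations to exploit unbiasedness; the variance $\sigma^2$ enters the descent inequality as an additional $\rho(\eta_\x^t)^2 \sigma^2$ term and enters the $\y$-regret as $\eta_\y \sigma^2/2$ per step, which tightens the stepsize choices to the stated values (replacing $L^2$ by $L^2 + \sigma^2$) and yields the $O(\epsilon^{-8})$ bound in the stochastic case.
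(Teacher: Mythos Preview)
Your proposal is correct and follows essentially the same approach as the paper: the Moreau-envelope descent inequality you derive is exactly the paper's Lemma~\ref{Lemma:nonsmooth-descent}, your block-wise regret/drift bound on $\tfrac{1}{T+1}\sum_t \Delta_t$ matches Lemmas~\ref{Lemma:nc-nonsmooth-neighbor}--\ref{Lemma:nc-nonsmooth-obj} up to harmless constants (the paper gets $\eta_\x B L^2$ instead of your $2\eta_\x B L^2$ by summing $2\eta_\x L^2(t-s-1)$ rather than uniformly bounding by $2\eta_\x L^2 B$), and your optimization over $B$ and handling of the stochastic case mirror the paper's final assembly.
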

\begin{remark}
Theorems~\ref{Thm:nsc-nonsmooth} and~\ref{Thm:nc-nonsmooth} provide the gradient complexity results for TTGDA and TTSGDA when applied to solve the nonsmooth and nonconvex-(strongly)-concave minimax optimization problems. The orders of $O(\epsilon^{-4})$ and $\Theta(\epsilon^{-4})$ reflect the effect of nonsmoothness when compared to that presented in Theorem~\ref{Thm:nsc-smooth} and~\ref{Thm:nc-smooth}. Furthermore, we guarantee that an $\epsilon$-stationary point is visited within a certain number of iterations rather than guaranteeing last-iterate convergence. Finally, Theorem~\ref{Thm:nsc-nonsmooth} is valid for the case of $M = 1$ and such a result is different from the theoretical results in Theorem~\ref{Thm:nsc-smooth}. 
\end{remark}

%!TEX root = paper.tex
\section{Applications and Empirical Results}\label{sec:exp}
We discuss our evaluation of TTGDA and TTSGDA in two domains---robust regression and Wasserstein generative adversarial networks (WGANs). All of algorithms were implemented on a MacBook Pro with an Intel Core i9 2.4GHz and 16GB memory.

\subsection{Robust logistic regression}\label{subsec:regression}
We will consider the problem of robust logistic regression with nonconvex penalty functions. Given a set of samples $\{(\sa_i, b_i)\}_{i=1}^N$ where $\sa_i \in \br^d$ is the feature of the $i^\textnormal{th}$ sample and $b_i \in \{1, -1\}$ is the label, the problem can be written in the form of Eq.~\eqref{prob:main} with 
\begin{equation*}
f(\x, \y) = \tfrac{1}{N}\left(\sum_{i=1}^N y_i \log(1 + \exp(-b_i\sa_i^\top\x))\right) - \tfrac{1}{2}\lambda_1\|N\y - \textbf{1}_n\|^2 + \lambda_2\left(\sum_{i=1}^d \tfrac{\alpha x_i^2}{1 + \alpha x_i^2}\right), 
\end{equation*}
and $\YCal = \{\y \in \br_+^n: \y^\top \textbf{1}_n = 1\}$. Here, we compare TTGDA and TTSGDA with GDmax~\citep{Jin-2020-Local} on 6 \textsf{LIBSVM} datasets\footnote{https://www.csie.ntu.edu.tw/$\sim$cjlin/libsvm/} and follow the setup of~\citet{Kohler-2017-Cubic} to set $\lambda_1 = \frac{1}{n^2}$, $\lambda_2 = 10^{-2}$ and $\alpha = 10$ for our experiments. 
\begin{figure*}[!t]
\centering
\includegraphics[width=0.32\textwidth]{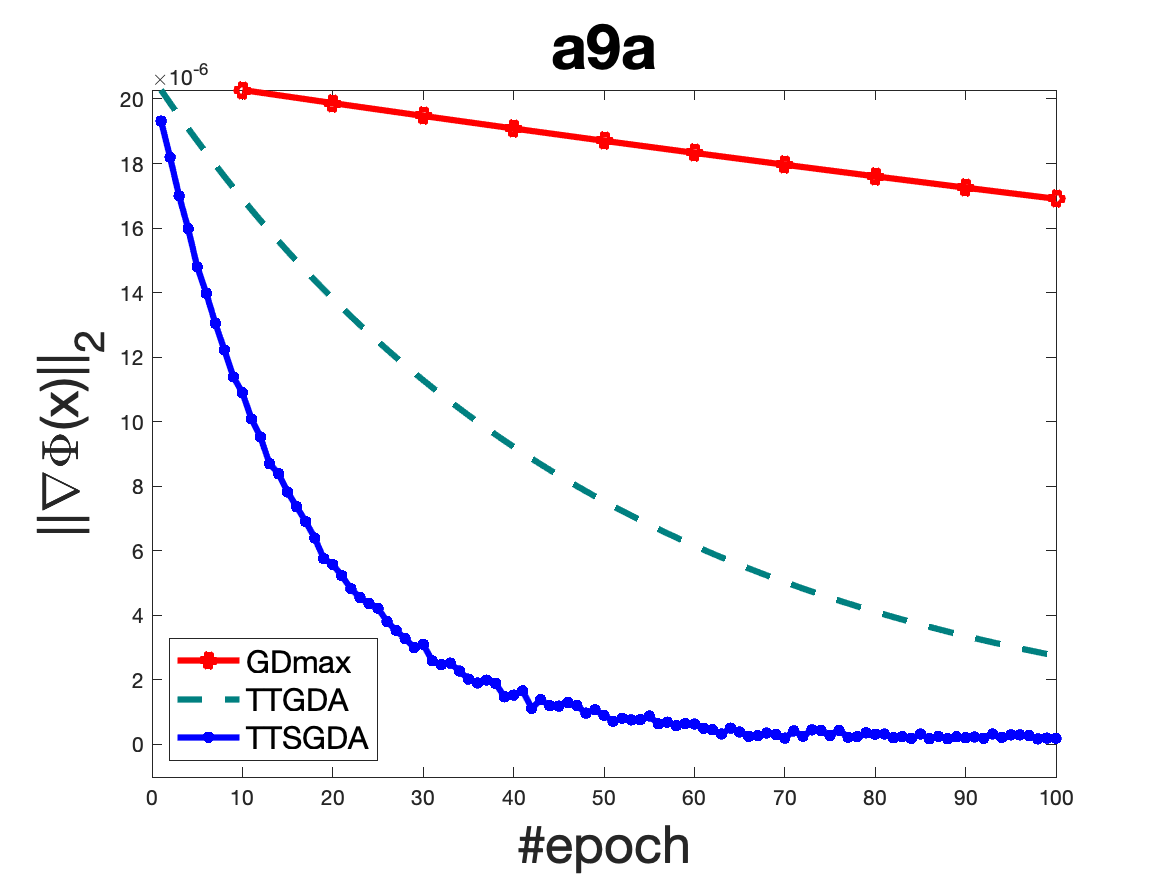}
\includegraphics[width=0.32\textwidth]{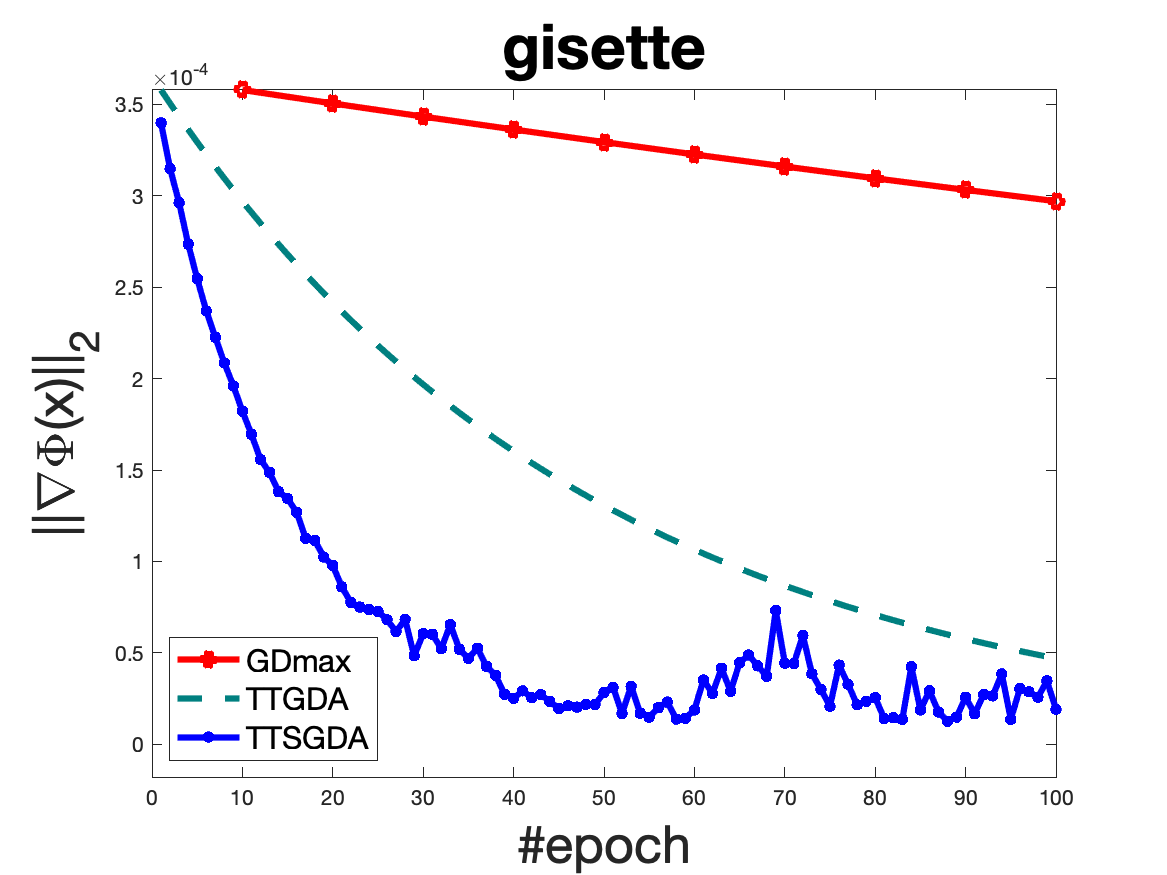}
\includegraphics[width=0.32\textwidth]{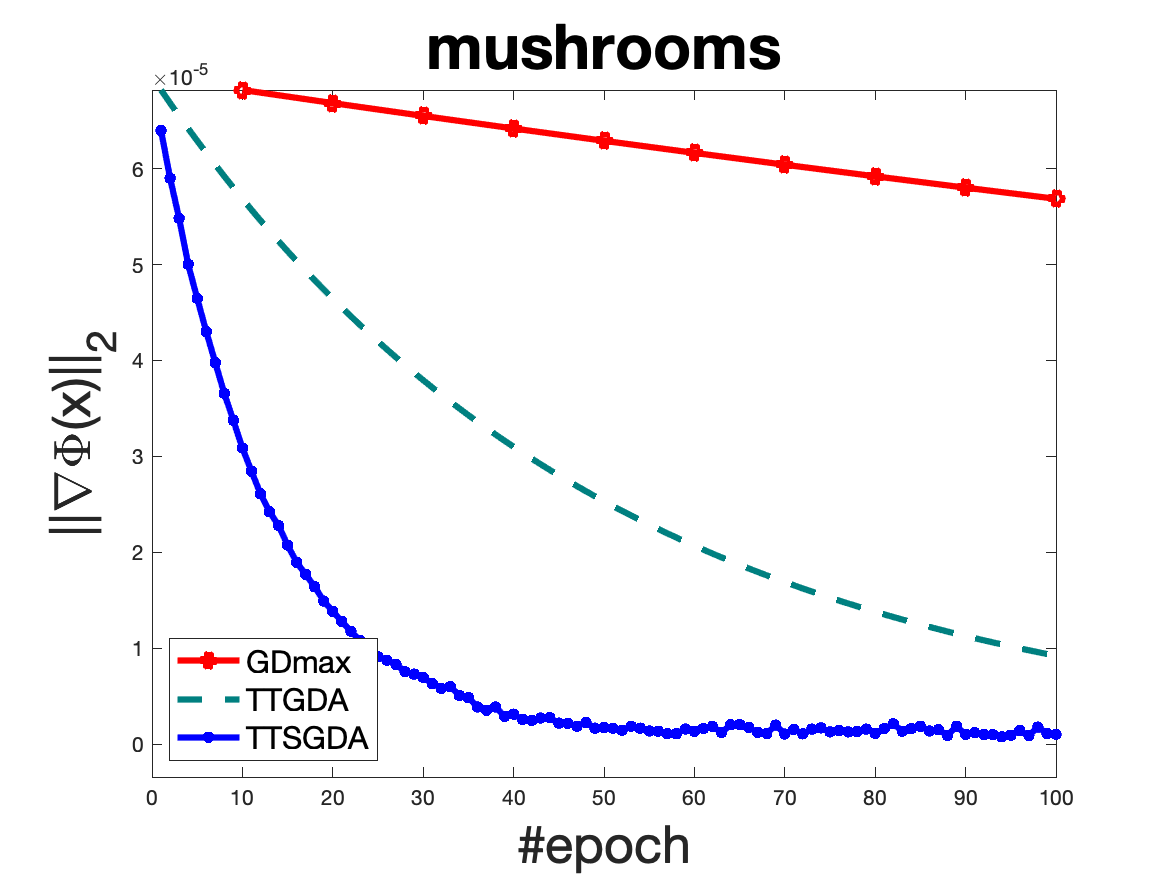} \\ \vspace{5pt}

\includegraphics[width=0.32\textwidth]{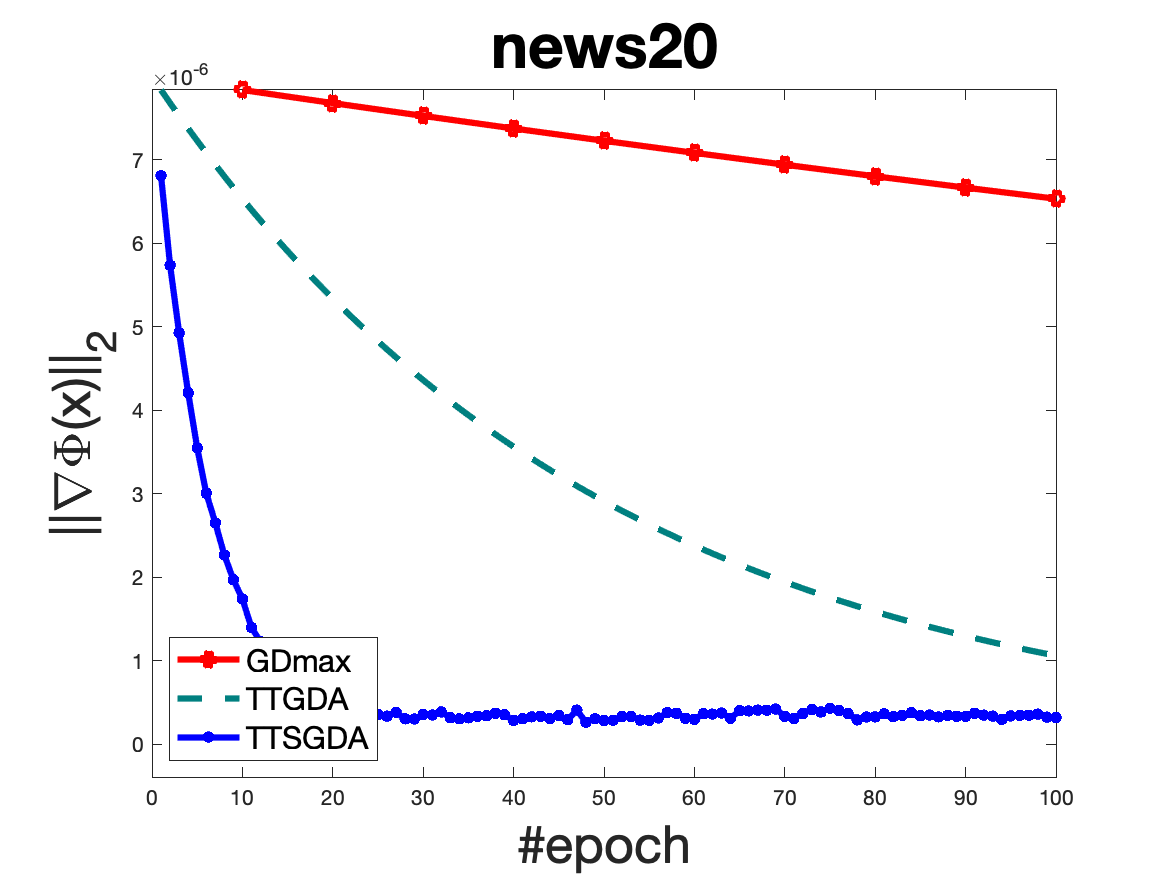}
\includegraphics[width=0.32\textwidth]{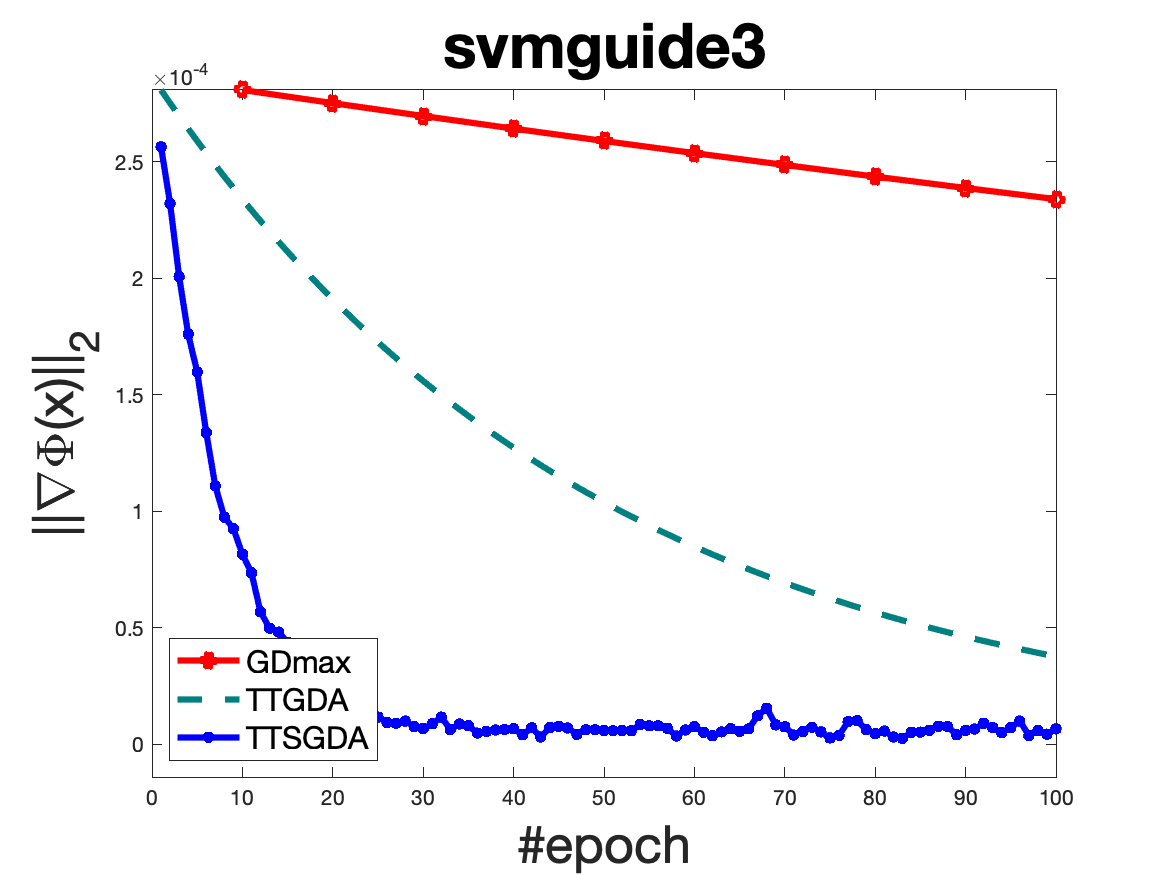}
\includegraphics[width=0.32\textwidth]{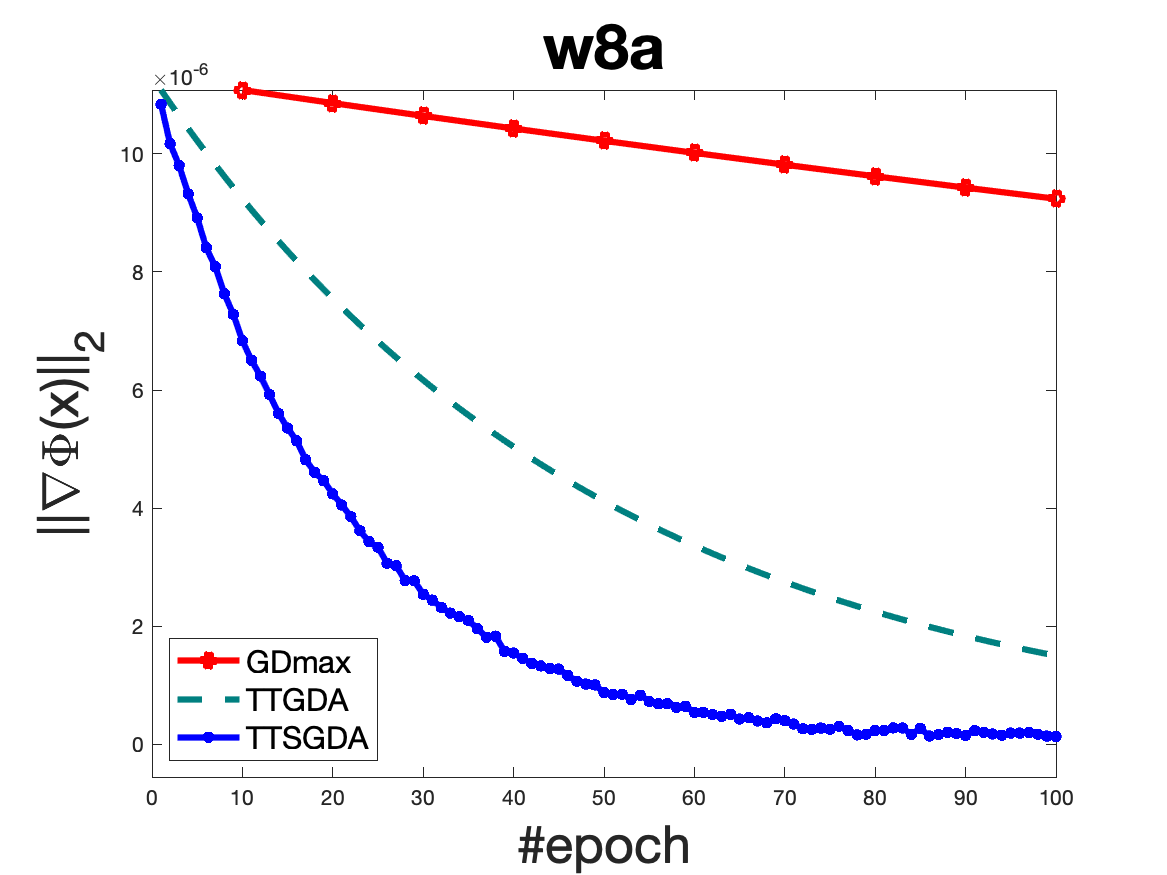} \\ \vspace{5pt}
\caption{Performance of all the algorithms with 6 \textsf{LIBSVM} datasets. The numerical results are presented in terms of epoch count where the evaluation metric is the gradient norm of the function $\Phi(\cdot) = \max_{\y \in \YCal} f(\cdot, \y)$.}\label{fig:regression}\vspace*{-1em}
\end{figure*}

The algorithmic parameters are tuned as follows: we consider different pairs of $(\eta_\x, \eta_\y)$ where $\eta_\y \in \{10^{-1}, 1\}$ and the ratio $\frac{\eta_\y}{\eta_\x} \in \{10, 10^2, 10^3\}$, and different sizes $M \in \{10, 100, 200\}$ for TTSGDA. Figure~\ref{fig:regression} presents the performance of each algorithm with its corresponding fine-tuning parameters, demonstrating that TTGDA and TTSGDA consistently converge faster than GDmax on 6 \textsf{LIBSVM} datasets that we consider in this paper. 

\subsection{WGANs with linear generators}
We will consider the same setting as~\citet{Loizou-2020-Stochastic} which uses the WGAN~\citep{Arjovsky-2017-Wasserstein} to approximate a one-dimensional Gaussian distribution. 

The problem setup is as follows: We have the real data $a^\textnormal{real}$ generated from a normal distribution with $\hat{\mu} = 0$ and $\hat{\sigma} = 0.1$, and the latent variable $z$ drawn from a standard normal distribution. We define the generator as $G_\x(z) = x_1 + x_2 z$ and the discriminator as $D_\y(a) = y_1 a + y_2 a^2$, where $a$ is either real data or fake data generated from the generator. The problem can be written in the form of Eq.~\eqref{prob:main} with 
\begin{equation*}
f(\x, \y) = \EE_{(a^\textnormal{real}, z)}\left[D_\y(a^\textnormal{real}) - D_\y(G_\x(z)) - \lambda\|\y\|^2\right],
\end{equation*}
where $\lambda = 10^{-3}$ is chosen to make the function $f(\x, \y)$ concave in $\y$. 

We fix the batch size $M = 100$ in TTSGDA and tune the parameters $(\eta_\x, \eta_\y)$ as we have done in Section~\ref{subsec:regression}. Figure~\ref{fig:wgan_linear} shows that TTSGDA outperforms other adaptive algorithms, including \textsf{Adam}~\citep{Kingma-2015-Adam} and \textsf{RMSprop}~\citep{Tieleman-2012-Lecture}. Here, the possible reason is that the linear generators result in a relatively simple structure such that the two-timescale update with tuning parameters is sufficient. 
\begin{figure*}[!t]
\centering
\includegraphics[width=0.9\textwidth]{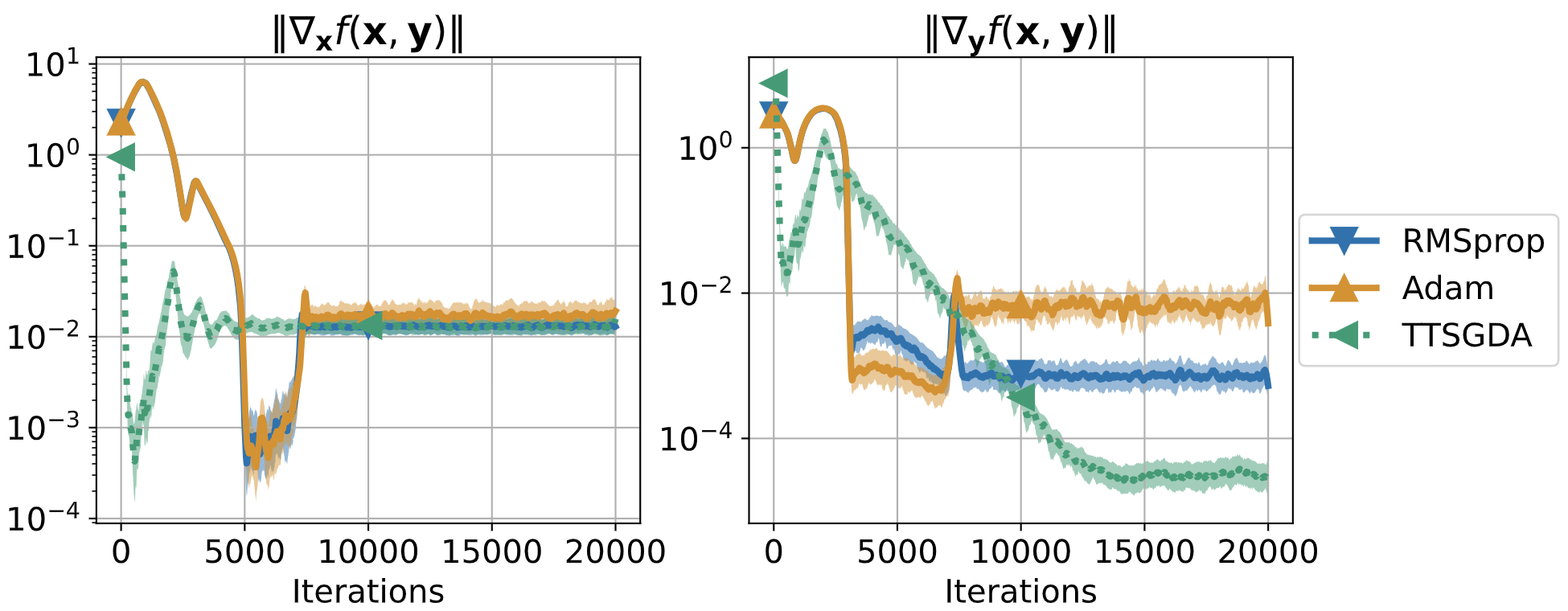}
\caption{Performance of all the algorithms for training WGANs with linear generators. The numerical results are presented in terms of iteration count where the evaluation metric is the gradient norm of the function $f(\cdot, \cdot)$.}\label{fig:wgan_linear}\vspace*{-1em}
\end{figure*}
\begin{figure*}[!t]
\centering
\includegraphics[width=0.95\textwidth]{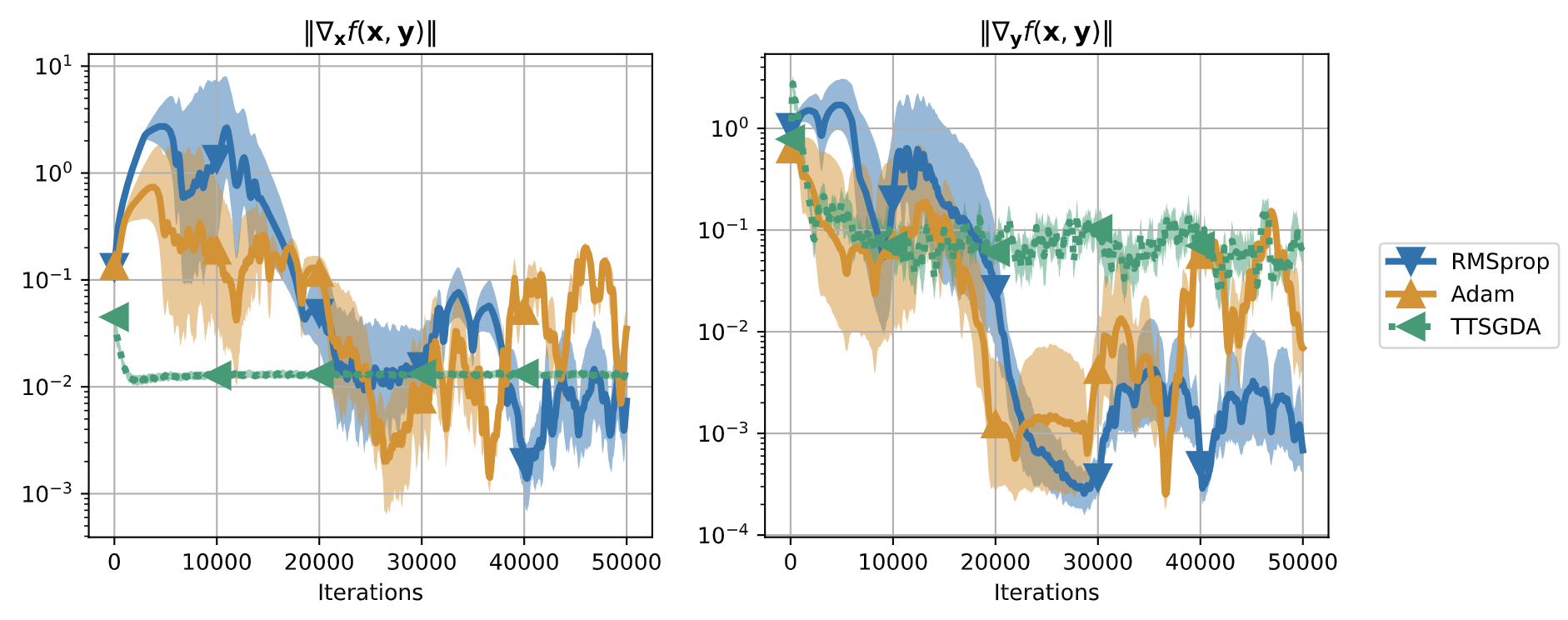}
\caption{Performance of all the algorithms for training WGANs with nonlinear generators. The numerical results are presented in terms of iteration count where the evaluation metric is the gradient norm of the function $f(\cdot, \cdot)$.}\label{fig:wgan_nonlinear}\vspace*{-1em}
\end{figure*}
\subsection{WGANs with nonlinear generators}
Following the setup of~\citet{Lei-2020-SGD} and~\citet{Yang-2022-Faster}, we  consider the WGAN with a ReLU neural network as generator. For ease of comparison, we retain all the problem settings as in the previous paragraph and view $\x$ as the parameters of a small neural network (one hidden layer with five neurons and ReLU activations). 
 
Figure~\ref{fig:wgan_nonlinear} shows that TTSGDA is worse with \textsf{Adam} and \textsf{RMSprop} in this complex setting. This makes sense since the nonlinear generators result in a richer structure and the adaptive algorithms are highly optimized for handling such structure. The vanilla implementation of TTSGDA does not achieve a satisfactory level of performance and can be enhanced by exploiting the advanced tools of regularization and adaptiveness~\citep{Yang-2022-Nest, Yang-2022-Faster, Li-2023-TiAda}. For more details on the implementation heuristics and empirical performance for TTSGDA in training GANs and its variants, we refer to~\citet{Heusel-2017-Gans}. 

%!TEX root = paper.tex
\section{Conclusion}\label{sec:conclu}
We proposed two-timescale GDA and SGDA algorithms (TTGDA and TTSGDA for short) for solving a class of nonconvex minimax optimization problems. We established complexity bounds for TTGDA and TTSGDA in terms of the number of (stochastic) gradient evaluations, demonstrating the efficiency of these two algorithms in theory. Our theoretical analysis is of interest to both optimization and machine learning communities as it not only provides a new combination of a two-timescale update rule and GDA/SGDA but supplies a novel proof technique that applies to concave optimization problems with slowly changing objective functions. Future directions of research include the investigation of lower bounds for gradient-based algorithms in the nonconvex-concave settings and the extension of upper bounds for TTGDA and TTSGDA in the structured nonconvex-nonconcave settings. 

% Acknowledgements should only appear in the accepted version.
\section*{Acknowledgments}
This work was supported in part by the Mathematical Data Science program of the Office of Naval Research under grant number N00014-18-1-2764 and by the Vannevar Bush Faculty Fellowship program under grant number N00014-21-1-2941.
%%%%%%%%%%%%%%%%%%%%%%%%%%%%%%%%%%%%%%%%%%%%%%%%%%%%%%%%%%%%%%%%%%%%%

\bibliographystyle{plainnat}
\bibliography{ref}

\begin{thebibliography}{97}
\providecommand{\natexlab}[1]{#1}
\providecommand{\url}[1]{\texttt{#1}}
\expandafter\ifx\csname urlstyle\endcsname\relax
  \providecommand{\doi}[1]{doi: #1}\else
  \providecommand{\doi}{doi: \begingroup \urlstyle{rm}\Url}\fi

\bibitem[Abadeh et~al.(2015)Abadeh, Esfahani, and
  Kuhn]{Abadeh-2015-Distributionally}
S.~S. Abadeh, P.~M.~M. Esfahani, and D.~Kuhn.
\newblock Distributionally robust logistic regression.
\newblock In \emph{NeurIPS}, pages 1576--1584, 2015.

\bibitem[Adolphs et~al.(2019)Adolphs, Daneshmand, Lucchi, and
  Hofmann]{Adolphs-2019-Local}
L.~Adolphs, H.~Daneshmand, A.~Lucchi, and T.~Hofmann.
\newblock Local saddle point optimization: A curvature exploitation approach.
\newblock In \emph{AISTATS}, pages 486--495. PMLR, 2019.

\bibitem[Alacaoglu and Malitsky(2022)]{Alacaoglu-2022-Stochastic}
A.~Alacaoglu and Y.~Malitsky.
\newblock Stochastic variance reduction for variational inequality methods.
\newblock In \emph{COLT}, pages 778--816. PMLR, 2022.

\bibitem[Arjovsky et~al.(2017)Arjovsky, Chintala, and
  Bottou]{Arjovsky-2017-Wasserstein}
M.~Arjovsky, S.~Chintala, and L.~Bottou.
\newblock Wasserstein generative adversarial networks.
\newblock In \emph{ICML}, pages 214--223, 2017.

\bibitem[Auslender and Teboulle(2009)]{Auslender-2009-Projected}
A.~Auslender and M.~Teboulle.
\newblock Projected subgradient methods with non-euclidean distances for
  non-differentiable convex minimization and variational inequalities.
\newblock \emph{Mathematical Programming}, 120:\penalty0 27--48, 2009.

\bibitem[Azizian et~al.(2020)Azizian, Mitliagkas, Lacoste-Julien, and
  Gidel]{Azizian-2020-Tight}
W.~Azizian, I.~Mitliagkas, S.~Lacoste-Julien, and G.~Gidel.
\newblock A tight and unified analysis of gradient-based methods for a whole
  spectrum of differentiable games.
\newblock In \emph{AISTATS}, pages 2863--2873. PMLR, 2020.

\bibitem[Bailey and Piliouras(2018)]{Bailey-2018-Multiplicative}
J.~P. Bailey and G.~Piliouras.
\newblock Multiplicative weights update in zero-sum games.
\newblock In \emph{EC}, pages 321--338, 2018.

\bibitem[Balduzzi et~al.(2018)Balduzzi, Racaniere, Martens, Foerster, Tuyls,
  and Graepel]{Balduzzi-2018-Mechanics}
D.~Balduzzi, S.~Racaniere, J.~Martens, J.~Foerster, K.~Tuyls, and T.~Graepel.
\newblock The mechanics of {N}-player differentiable games.
\newblock In \emph{ICML}, pages 354--363. PMLR, 2018.

\bibitem[Barazandeh and Razaviyayn(2020)]{Barazandeh-2020-Solving}
B.~Barazandeh and M.~Razaviyayn.
\newblock Solving non-convex non-differentiable min-max games using proximal
  gradient method.
\newblock In \emph{ICASSP}, pages 3162--3166. IEEE, 2020.

\bibitem[Basar and Olsder(1999)]{Basar-1999-Dynamic}
T.~Basar and G.~J. Olsder.
\newblock \emph{Dynamic Noncooperative Game Theory}, volume~23.
\newblock SIAM, 1999.

\bibitem[Bertsekas(1971)]{Bertsekas-1971-Control}
D.~P. Bertsekas.
\newblock \emph{Control of uncertain systems with a set-membership description
  of the uncertainty}.
\newblock PhD thesis, Massachusetts Institute of Technology, 1971.

\bibitem[Bo{\c{t}} and B{\"o}hm(2023)]{Boct-2023-Alternating}
R.~I. Bo{\c{t}} and A.~B{\"o}hm.
\newblock Alternating proximal-gradient steps for (stochastic)
  nonconvex-concave minimax problems.
\newblock \emph{SIAM Journal on Optimization}, 33\penalty0 (3):\penalty0
  1884--1913, 2023.

\bibitem[Cai et~al.(2022)Cai, Oikonomou, and Zheng]{Cai-2022-Finite}
Y.~Cai, A.~Oikonomou, and W.~Zheng.
\newblock Finite-time last-iterate convergence for learning in multi-player
  games.
\newblock In \emph{NeurIPS}, pages 33904--33919, 2022.

\bibitem[Cai et~al.(2024)Cai, Oikonomou, and Zheng]{Cai-2024-Accelerated}
Y.~Cai, A.~Oikonomou, and W.~Zheng.
\newblock Accelerated algorithms for constrained nonconvex-nonconcave min-max
  optimization and comonotone inclusion.
\newblock In \emph{ICML}, 2024.
\newblock URL \url{https://openreview.net/forum?id=EK7fuAMNoI}.

\bibitem[Cesa-Bianchi and Lugosi(2006)]{Cesa-2006-Prediction}
N.~Cesa-Bianchi and G.~Lugosi.
\newblock \emph{Prediction, Learning, and Games}.
\newblock Cambridge University Press, 2006.

\bibitem[Chen and Rockafellar(1997)]{Chen-1997-Convergence}
G.~H.~G. Chen and R.~T. Rockafellar.
\newblock Convergence rates in forward--backward splitting.
\newblock \emph{SIAM Journal on Optimization}, 7\penalty0 (2):\penalty0
  421--444, 1997.

\bibitem[Cherukuri et~al.(2017)Cherukuri, Gharesifard, and
  Cortes]{Cherukuri-2017-Saddle}
A.~Cherukuri, B.~Gharesifard, and J.~Cortes.
\newblock Saddle-point dynamics: conditions for asymptotic stability of saddle
  points.
\newblock \emph{SIAM Journal on Control and Optimization}, 55\penalty0
  (1):\penalty0 486--511, 2017.

\bibitem[Clarke(1990)]{Clarke-1990-Optimization}
F.~H. Clarke.
\newblock \emph{Optimization and Nonsmooth Analysis}.
\newblock SIAM, 1990.

\bibitem[Danskin(1967)]{Danskin-1967-Theory}
J.~M. Danskin.
\newblock \emph{The Theory of Max-min and Its Applications to Weapons
  Allocation Problems}.
\newblock Springer, 1967.

\bibitem[Daskalakis and Panageas(2018)]{Daskalakis-2018-Limit}
C.~Daskalakis and I.~Panageas.
\newblock The limit points of (optimistic) gradient descent in min-max
  optimization.
\newblock In \emph{NeurIPS}, pages 9236--9246, 2018.

\bibitem[Daskalakis and Panageas(2019)]{Daskalakis-2019-Last}
C.~Daskalakis and I.~Panageas.
\newblock Last-iterate convergence: Zero-sum games and constrained min-max
  optimization.
\newblock In \emph{ITCS}, pages 27:1--27:18, 2019.

\bibitem[Daskalakis et~al.(2018)Daskalakis, Ilyas, Syrgkanis, and
  Zeng]{Daskalakis-2018-Training}
C.~Daskalakis, A.~Ilyas, V.~Syrgkanis, and H.~Zeng.
\newblock Training {GAN}s with optimism.
\newblock In \emph{ICLR}, 2018.
\newblock URL \url{https://openreview.net/forum?id=SJJySbbAZ}.

\bibitem[Daskalakis et~al.(2021)Daskalakis, Skoulakis, and
  Zampetakis]{Daskalakis-2021-Complexity}
C.~Daskalakis, S.~Skoulakis, and M.~Zampetakis.
\newblock The complexity of constrained min-max optimization.
\newblock In \emph{STOC}, pages 1466--1478, 2021.

\bibitem[Daskalakis et~al.(2023)Daskalakis, Golowich, Skoulakis, and
  Zampetakis]{Daskalakis-2023-Stay}
C.~Daskalakis, N.~Golowich, S.~Skoulakis, and M.~Zampetakis.
\newblock Stay-on-the-ridge: Guaranteed convergence to local minimax
  equilibrium in nonconvex-nonconcave games.
\newblock In \emph{COLT}, pages 5146--5198. PMLR, 2023.

\bibitem[Davis and Drusvyatskiy(2019)]{Davis-2019-Stochastic}
D.~Davis and D.~Drusvyatskiy.
\newblock Stochastic model-based minimization of weakly convex functions.
\newblock \emph{SIAM Journal on Optimization}, 29\penalty0 (1):\penalty0
  207--239, 2019.

\bibitem[Diakonikolas et~al.(2021)Diakonikolas, Daskalakis, and
  Jordan]{Diakonikolas-2021-Efficient}
J.~Diakonikolas, C.~Daskalakis, and M.~I. Jordan.
\newblock Efficient methods for structured nonconvex-nonconcave min-max
  optimization.
\newblock In \emph{AISTATS}, pages 2746--2754. PMLR, 2021.

\bibitem[Fiez et~al.(2022)Fiez, Jin, Netrapalli, and
  Ratliff]{Fiez-2022-Minimax}
T.~Fiez, C.~Jin, P.~Netrapalli, and L.~J. Ratliff.
\newblock Minimax optimization with smooth algorithmic adversaries.
\newblock In \emph{ICLR}, 2022.
\newblock URL \url{https://openreview.net/forum?id=UdxJ2fJx7N0}.

\bibitem[Golshtein(1974)]{Golshtein-1974-Generalized}
E.~G. Golshtein.
\newblock Generalized gradient method for finding saddle points.
\newblock \emph{Matekon}, 10\penalty0 (3):\penalty0 36--52, 1974.

\bibitem[Goodfellow et~al.(2014)Goodfellow, Pouget-Abadie, Mirza, Xu,
  Warde-Farley, Ozair, Courville, and Bengio]{Goodfellow-2014-Generative}
I.~Goodfellow, J.~Pouget-Abadie, M.~Mirza, B.~Xu, D.~Warde-Farley, S.~Ozair,
  A.~Courville, and Y.~Bengio.
\newblock Generative adversarial nets.
\newblock In \emph{NeurIPS}, pages 2672--2680, 2014.

\bibitem[Gorbunov et~al.(2022)Gorbunov, Loizou, and
  Gidel]{Gorbunov-2022-Extragradient}
E.~Gorbunov, N.~Loizou, and G.~Gidel.
\newblock Extragradient method: O(1/k) last-iterate convergence for monotone
  variational inequalities and connections with cocoercivity.
\newblock In \emph{AISTATS}, pages 366--402. PMLR, 2022.

\bibitem[Grimmer et~al.(2023)Grimmer, Lu, Worah, and
  Mirrokni]{Grimmer-2023-Landscape}
B.~Grimmer, H.~Lu, P.~Worah, and V.~Mirrokni.
\newblock The landscape of the proximal point method for nonconvex-nonconcave
  minimax optimization.
\newblock \emph{Mathematical Programming}, 201\penalty0 (1):\penalty0 373--407,
  2023.

\bibitem[Grnarova et~al.(2018)Grnarova, Levy, Lucchi, Hofmann, and
  Krause]{Grnarova-2018-An}
P.~Grnarova, K.~Y. Levy, A.~Lucchi, T.~Hofmann, and A.~Krause.
\newblock An online learning approach to generative adversarial networks.
\newblock In \emph{ICLR}, 2018.
\newblock URL \url{https://openreview.net/forum?id=H1Yp-j1Cb}.

\bibitem[Han et~al.(2024)Han, Xie, and Zhang]{Han-2024-Lower}
Y.~Han, G.~Xie, and Z.~Zhang.
\newblock Lower complexity bounds of finite-sum optimization problems: The
  results and construction.
\newblock \emph{Journal of Machine Learning Research}, 25\penalty0
  (2):\penalty0 1--86, 2024.

\bibitem[Heusel et~al.(2017)Heusel, Ramsauer, Unterthiner, Nessler, and
  Hochreiter]{Heusel-2017-Gans}
M.~Heusel, H.~Ramsauer, T.~Unterthiner, B.~Nessler, and S.~Hochreiter.
\newblock {GANs} trained by a two time-scale update rule converge to a local
  {N}ash equilibrium.
\newblock In \emph{NeurIPS}, pages 6626--6637, 2017.

\bibitem[Huang et~al.(2021)Huang, Wu, and Huang]{Huang-2021-Efficient}
F.~Huang, X.~Wu, and H.~Huang.
\newblock Efficient mirror descent ascent methods for nonsmooth minimax
  problems.
\newblock In \emph{NeurIPS}, pages 10431--10443, 2021.

\bibitem[Huang et~al.(2022)Huang, Gao, Pei, and Huang]{Huang-2022-Accelerated}
F.~Huang, S.~Gao, J.~Pei, and H.~Huang.
\newblock Accelerated zeroth-order and first-order momentum methods from mini
  to minimax optimization.
\newblock \emph{Journal of Machine Learning Research}, 23\penalty0
  (36):\penalty0 1--70, 2022.

\bibitem[Jiang and Chen(2023)]{Jiang-2023-Optimality}
J.~Jiang and X.~Chen.
\newblock Optimality conditions for nonsmooth nonconvex-nonconcave min-max
  problems and generative adversarial networks.
\newblock \emph{SIAM Journal on Mathematics of Data Science}, 5\penalty0
  (3):\penalty0 693--722, 2023.

\bibitem[Jin et~al.(2020)Jin, Netrapalli, and Jordan]{Jin-2020-Local}
C.~Jin, P.~Netrapalli, and M.~Jordan.
\newblock What is local optimality in nonconvex-nonconcave minimax
  optimization?
\newblock In \emph{ICML}, pages 4880--4889. PMLR, 2020.

\bibitem[Juditsky et~al.(2011)Juditsky, Nemirovski, and
  Tauvel]{Juditsky-2011-Solving}
A.~Juditsky, A.~Nemirovski, and C.~Tauvel.
\newblock Solving variational inequalities with stochastic mirror-prox
  algorithm.
\newblock \emph{Stochastic Systems}, 1\penalty0 (1):\penalty0 17--58, 2011.

\bibitem[Kingma and Ba(2015)]{Kingma-2015-Adam}
D.~Kingma and J.~Ba.
\newblock Adam: A method for stochastic optimization.
\newblock In \emph{ICLR}, 2015.

\bibitem[Kohler and Lucchi(2017)]{Kohler-2017-Cubic}
J.~M. Kohler and A.~Lucchi.
\newblock Sub-sampled cubic regularization for non-convex optimization.
\newblock In \emph{ICML}, pages 1895--1904, 2017.

\bibitem[Kong and Monteiro(2021)]{Kong-2021-Accelerated}
W.~Kong and R.~D.~C. Monteiro.
\newblock An accelerated inexact proximal point method for solving
  nonconvex-concave min-max problems.
\newblock \emph{SIAM Journal on Optimization}, 31\penalty0 (4):\penalty0
  2558--2585, 2021.

\bibitem[Korpelevich(1976)]{Korpelevich-1976-Extragradient}
G.~M. Korpelevich.
\newblock The extragradient method for finding saddle points and other
  problems.
\newblock \emph{Matecon}, 12:\penalty0 747--756, 1976.

\bibitem[Kose(1956)]{Kose-1956-Solutions}
T.~Kose.
\newblock Solutions of saddle value problems by differential equations.
\newblock \emph{Econometrica, Journal of the Econometric Society}, pages
  59--70, 1956.

\bibitem[Lee and Kim(2021)]{Lee-2021-Semi}
S.~Lee and D.~Kim.
\newblock Semi-anchored multi-step gradient descent ascent method for
  structured nonconvex-nonconcave composite minimax problems.
\newblock \emph{ArXiv Preprint: 2105.15042}, 2021.

\bibitem[Lei et~al.(2020)Lei, Lee, Dimakis, and Daskalakis]{Lei-2020-SGD}
Q.~Lei, J.~D. Lee, A.~G. Dimakis, and C.~Daskalakis.
\newblock {SGD} learns one-layer networks in {WGAN}s.
\newblock In \emph{ICML}, pages 5799--5808, 2020.

\bibitem[Li et~al.(2021)Li, Tian, Zhang, and Jadbabaie]{Li-2021-Complexity}
H.~Li, Y.~Tian, J.~Zhang, and A.~Jadbabaie.
\newblock Complexity lower bounds for nonconvex-strongly-concave min-max
  optimization.
\newblock In \emph{NeurIPS}, volume~34, 2021.

\bibitem[Li et~al.(2022)Li, Farnia, Das, and Jadbabaie]{Li-2022-Convergence}
H.~Li, F.~Farnia, S.~Das, and A.~Jadbabaie.
\newblock On convergence of gradient descent ascent: A tight local analysis.
\newblock In \emph{ICML}, pages 12717--12740. PMLR, 2022.

\bibitem[Li et~al.(2023)Li, Yang, and He]{Li-2023-TiAda}
X.~Li, J.~Yang, and N.~He.
\newblock {TiAda}: A time-scale adaptive algorithm for nonconvex minimax
  optimization.
\newblock In \emph{ICLR}, 2023.
\newblock URL \url{https://openreview.net/forum?id=zClyiZ5V6sL}.

\bibitem[Liang and Stokes(2019)]{Liang-2019-Interaction}
T.~Liang and James Stokes.
\newblock Interaction matters: A note on non-asymptotic local convergence of
  generative adversarial networks.
\newblock In \emph{AISTATS}, pages 907--915. PMLR, 2019.

\bibitem[Lin et~al.(2020{\natexlab{a}})Lin, Jin, and Jordan]{Lin-2020-Gradient}
T.~Lin, C.~Jin, and M.~I. Jordan.
\newblock On gradient descent ascent for nonconvex-concave minimax problems.
\newblock In \emph{ICML}, pages 6083--6093. PMLR, 2020{\natexlab{a}}.

\bibitem[Lin et~al.(2020{\natexlab{b}})Lin, Jin, and Jordan]{Lin-2020-Near}
T.~Lin, C.~Jin, and M.~I. Jordan.
\newblock Near-optimal algorithms for minimax optimization.
\newblock In \emph{COLT}, pages 2738--2779. PMLR, 2020{\natexlab{b}}.

\bibitem[Liu et~al.(2021)Liu, Rafique, Lin, and Yang]{Liu-2021-First}
M.~Liu, H.~Rafique, Q.~Lin, and T.~Yang.
\newblock First-order convergence theory for weakly-convex-weakly-concave
  min-max problems.
\newblock \emph{Journal of Machine Learning Research}, 22\penalty0
  (169):\penalty0 1--34, 2021.

\bibitem[Loizou et~al.(2020)Loizou, Berard, Jolicoeur-Martineau, Vincent,
  Lacoste-Julien, and Mitliagkas]{Loizou-2020-Stochastic}
N.~Loizou, H.~Berard, A.~Jolicoeur-Martineau, P.~Vincent, S.~Lacoste-Julien,
  and I.~Mitliagkas.
\newblock Stochastic {H}amiltonian gradient methods for smooth games.
\newblock In \emph{ICML}, pages 6370--6381, 2020.

\bibitem[Lu et~al.(2020)Lu, Tsaknakis, Hong, and Chen]{Lu-2020-Hybrid}
S.~Lu, I.~Tsaknakis, M.~Hong, and Y.~Chen.
\newblock Hybrid block successive approximation for one-sided non-convex
  min-max problems: Algorithms and applications.
\newblock \emph{IEEE Transactions on Signal Processing}, 68:\penalty0
  3676--3691, 2020.

\bibitem[Luo et~al.(2020)Luo, Ye, Huang, and Zhang]{Luo-2020-Stochastic}
L.~Luo, H.~Ye, Z.~Huang, and T.~Zhang.
\newblock Stochastic recursive gradient descent ascent for stochastic
  nonconvex-strongly-concave minimax problems.
\newblock In \emph{NeurIPS}, pages 20566--20577, 2020.

\bibitem[Madry et~al.(2018)Madry, Makelov, Schmidt, Tsipras, and
  Vladu]{Madry-2018-Towards}
A.~Madry, A.~Makelov, L.~Schmidt, D.~Tsipras, and A.~Vladu.
\newblock Towards deep learning models resistant to adversarial attacks.
\newblock In \emph{ICLR}, 2018.
\newblock URL \url{https://openreview.net/forum?id=rJzIBfZAb}.

\bibitem[Mateos et~al.(2010)Mateos, Bazerque, and
  Giannakis]{Mateos-2010-Distributed}
G.~Mateos, J.~A. Bazerque, and G.~B. Giannakis.
\newblock Distributed sparse linear regression.
\newblock \emph{IEEE Transactions on Signal Processing}, 58\penalty0
  (10):\penalty0 5262--5276, 2010.

\bibitem[Mertikopoulos et~al.(2018)Mertikopoulos, Papadimitriou, and
  Piliouras]{Mertikopoulos-2018-Cycles}
P.~Mertikopoulos, C.~Papadimitriou, and G.~Piliouras.
\newblock Cycles in adversarial regularized learning.
\newblock In \emph{SODA}, pages 2703--2717. SIAM, 2018.

\bibitem[Mertikopoulos et~al.(2019)Mertikopoulos, Lecouat, Zenati, Foo,
  Chandrasekhar, and Piliouras]{Mertikopoulos-2019-Optimistic}
P.~Mertikopoulos, B.~Lecouat, H.~Zenati, C-S Foo, V.~Chandrasekhar, and
  G.~Piliouras.
\newblock Optimistic mirror descent in saddle-point problems: Going the
  extra(-gradient) mile.
\newblock In \emph{ICLR}, 2019.
\newblock URL \url{https://openreview.net/forum?id=Bkg8jjC9KQ}.

\bibitem[Mokhtari et~al.(2020{\natexlab{a}})Mokhtari, Ozdaglar, and
  Pattathil]{Mokhtari-2020-Convergence}
A.~Mokhtari, A.~Ozdaglar, and S.~Pattathil.
\newblock Convergence rate of o(1/k) for optimistic gradient and extragradient
  methods in smooth convex-concave saddle point problems.
\newblock \emph{SIAM Journal on Optimization}, 30\penalty0 (4):\penalty0
  3230--3251, 2020{\natexlab{a}}.

\bibitem[Mokhtari et~al.(2020{\natexlab{b}})Mokhtari, Ozdaglar, and
  Pattathil]{Mokhtari-2020-Unified}
A.~Mokhtari, A.~Ozdaglar, and S.~Pattathil.
\newblock A unified analysis of extra-gradient and optimistic gradient methods
  for saddle point problems: Proximal point approach.
\newblock In \emph{AISTATS}, pages 1497--1507. PMLR, 2020{\natexlab{b}}.

\bibitem[Murty and Kabadi(1987)]{Murty-1987-Some}
K.~G. Murty and S.~N. Kabadi.
\newblock Some {NP}-complete problems in quadratic and nonlinear programming.
\newblock \emph{Mathematical Programming}, 39:\penalty0 117--129, 1987.

\bibitem[Nedi{\'c} and Ozdaglar(2009)]{Nedic-2009-Subgradient}
A.~Nedi{\'c} and A.~Ozdaglar.
\newblock Subgradient methods for saddle-point problems.
\newblock \emph{Journal of Optimization Theory and Applications}, 142\penalty0
  (1):\penalty0 205--228, 2009.

\bibitem[Nemirovski(2004)]{Nemirovski-2004-Prox}
A.~Nemirovski.
\newblock Prox-method with rate of convergence o(1/t) for variational
  inequalities with {L}ipschitz continuous monotone operators and smooth
  convex-concave saddle point problems.
\newblock \emph{SIAM Journal on Optimization}, 15\penalty0 (1):\penalty0
  229--251, 2004.

\bibitem[Nemirovski and Yudin(1983)]{Nemirovski-1983-Problem}
A.~Nemirovski and D.~Yudin.
\newblock \emph{Problem Complexity and Method Efficiency in Optimization}.
\newblock Wiley-Interscience, 1983.

\bibitem[Nesterov(2013)]{Nesterov-2013-Gradient}
Y.~Nesterov.
\newblock Gradient methods for minimizing composite functions.
\newblock \emph{Mathematical Programming}, 140\penalty0 (1):\penalty0 125--161,
  2013.

\bibitem[Nesterov(2018)]{Nesterov-2018-Lectures}
Y.~Nesterov.
\newblock \emph{Lectures on Convex Optimization}, volume 137.
\newblock Springer, 2018.

\bibitem[Neumann(1928)]{Neumann-1928-Theorie}
J.~V. Neumann.
\newblock Zur theorie der gesellschaftsspiele.
\newblock \emph{Mathematische Annalen}, 100\penalty0 (1):\penalty0 295--320,
  1928.

\bibitem[Nisan et~al.(2007)Nisan, Roughgarden, Tardos, and
  Vazirani]{Nisan-2007-Algorithmic}
N.~Nisan, T.~Roughgarden, E.~Tardos, and V.~V. Vazirani.
\newblock \emph{Algorithmic Game Theory}.
\newblock Cambridge University Press, 2007.

\bibitem[Nouiehed et~al.(2019)Nouiehed, Sanjabi, Huang, Lee, and
  Razaviyayn]{Nouiehed-2019-Solving}
M.~Nouiehed, M.~Sanjabi, T.~Huang, J.~D. Lee, and M.~Razaviyayn.
\newblock Solving a class of non-convex min-max games using iterative first
  order methods.
\newblock In \emph{NeurIPS}, pages 14905--14916, 2019.

\bibitem[Ostrovskii et~al.(2021{\natexlab{a}})Ostrovskii, Barazandeh, and
  Razaviyayn]{Ostrovskii-2021-Nonconvex}
D.~M. Ostrovskii, B.~Barazandeh, and M.~Razaviyayn.
\newblock Nonconvex-nonconcave min-max optimization with a small maximization
  domain.
\newblock \emph{ArXiv Preprint: 2110.03950}, 2021{\natexlab{a}}.

\bibitem[Ostrovskii et~al.(2021{\natexlab{b}})Ostrovskii, Lowy, and
  Razaviyayn]{Ostrovskii-2021-Efficient}
D.~M. Ostrovskii, A.~Lowy, and M.~Razaviyayn.
\newblock Efficient search of first-order nash equilibria in nonconvex-concave
  smooth min-max problems.
\newblock \emph{SIAM Journal on Optimization}, 31\penalty0 (4):\penalty0
  2508--2538, 2021{\natexlab{b}}.

\bibitem[Rafique et~al.(2021)Rafique, Liu, Lin, and Yang]{Rafique-2021-Weakly}
H.~Rafique, M.~Liu, Q.~Lin, and T.~Yang.
\newblock Weakly-convex-concave min-max optimization: provable algorithms and
  applications in machine learning.
\newblock \emph{Optimization Methods and Software}, pages 1--35, 2021.

\bibitem[Rakhlin et~al.(2012)Rakhlin, Shamir, and
  Sridharan]{Rakhlin-2012-Making}
A.~Rakhlin, O.~Shamir, and K.~Sridharan.
\newblock Making gradient descent optimal for strongly convex stochastic
  optimization.
\newblock In \emph{ICML}, pages 1571--1578. Omnipress, 2012.

\bibitem[Robinson(1951)]{Robinson-1951-Iterative}
J.~Robinson.
\newblock An iterative method of solving a game.
\newblock \emph{Annals of Mathematics}, pages 296--301, 1951.

\bibitem[Sanjabi et~al.(2018)Sanjabi, Ba, Razaviyayn, and
  Lee]{Sanjabi-2018-Convergence}
M.~Sanjabi, J.~Ba, M.~Razaviyayn, and J.~D. Lee.
\newblock On the convergence and robustness of training {GAN}s with regularized
  optimal transport.
\newblock In \emph{NeurIPS}, pages 7091--7101, 2018.

\bibitem[Shamir and Zhang(2013)]{Shamir-2013-Stochastic}
O.~Shamir and T.~Zhang.
\newblock Stochastic gradient descent for non-smooth optimization: Convergence
  results and optimal averaging schemes.
\newblock In \emph{ICML}, pages 71--79. PMLR, 2013.

\bibitem[Shamma(2008)]{Shamma-2008-Cooperative}
J.~Shamma.
\newblock \emph{Cooperative Control of Distributed Multi-agent Systems}.
\newblock John Wiley \& Sons, 2008.

\bibitem[Sinha et~al.(2018)Sinha, Namkoong, and Duchi]{Sinha-2018-Certifiable}
A.~Sinha, H.~Namkoong, and J.~Duchi.
\newblock Certifiable distributional robustness with principled adversarial
  training.
\newblock In \emph{ICLR}, 2018.
\newblock URL \url{https://openreview.net/forum?id=Hk6kPgZA-}.

\bibitem[Sion(1958)]{Sion-1958-General}
M.~Sion.
\newblock On general minimax theorems.
\newblock \emph{Pacific Journal of Mathematics}, 8\penalty0 (1):\penalty0
  171--176, 1958.

\bibitem[Thekumparampil et~al.(2019)Thekumparampil, Jain, Netrapalli, and
  Oh]{Thekumparampil-2019-Efficient}
K.~K. Thekumparampil, P.~Jain, P.~Netrapalli, and S.~Oh.
\newblock Efficient algorithms for smooth minimax optimization.
\newblock In \emph{NeurIPS}, pages 12659--12670, 2019.

\bibitem[Tieleman and Hinton(2012)]{Tieleman-2012-Lecture}
T.~Tieleman and G.~Hinton.
\newblock Lecture 6.5-{RMS}prop: Divide the gradient by a running average of
  its recent magnitude.
\newblock \emph{COURSERA: Neural Networks for Machine Learning}, 4\penalty0
  (2):\penalty0 26, 2012.

\bibitem[Uzawa(1958)]{Uzawa-1958-Iterative}
H.~Uzawa.
\newblock Iterative methods for concave programming.
\newblock \emph{Studies in Linear and Nonlinear Programming}, 6:\penalty0
  154--165, 1958.

\bibitem[Von~Neumann and Morgenstern(2007)]{Von-2007-Theory}
J.~Von~Neumann and O.~Morgenstern.
\newblock \emph{Theory of Games and Economic Behavior (Commemorative Edition)}.
\newblock Princeton University Press, 2007.

\bibitem[Xu et~al.(2009)Xu, Caramanis, and Mannor]{Xu-2009-Robustness}
H.~Xu, C.~Caramanis, and S.~Mannor.
\newblock Robustness and regularization of support vector machines.
\newblock \emph{Journal of Machine Learning Research}, 10\penalty0
  (Jul):\penalty0 1485--1510, 2009.

\bibitem[Xu et~al.(2023)Xu, Zhang, Xu, and Lan]{Xu-2023-Unified}
Z.~Xu, H.~Zhang, Y.~Xu, and G.~Lan.
\newblock A unified single-loop alternating gradient projection algorithm for
  nonconvex-concave and convex-nonconcave minimax problems.
\newblock \emph{Mathematical Programming}, 201\penalty0 (1):\penalty0 635--706,
  2023.

\bibitem[Yan et~al.(2020)Yan, Xu, Lin, Liu, and Yang]{Yan-2020-Optimal}
Y.~Yan, Y.~Xu, Q.~Lin, W.~Liu, and T.~Yang.
\newblock Optimal epoch stochastic gradient descent ascent methods for min-max
  optimization.
\newblock In \emph{NeurIPS}, pages 5789--5800, 2020.

\bibitem[Yang et~al.(2020{\natexlab{a}})Yang, Kiyavash, and
  He]{Yang-2020-Global}
J.~Yang, N.~Kiyavash, and N.~He.
\newblock Global convergence and variance reduction for a class of
  nonconvex-nonconcave minimax problems.
\newblock In \emph{NeurIPS}, pages 1153--1165, 2020{\natexlab{a}}.

\bibitem[Yang et~al.(2020{\natexlab{b}})Yang, Zhang, Kiyavash, and
  He]{Yang-2020-Catalyst}
J.~Yang, S.~Zhang, N.~Kiyavash, and N.~He.
\newblock A catalyst framework for minimax optimization.
\newblock In \emph{NeurIPS}, pages 5667--5678, 2020{\natexlab{b}}.

\bibitem[Yang et~al.(2022{\natexlab{a}})Yang, Li, and He]{Yang-2022-Nest}
J.~Yang, X.~Li, and N.~He.
\newblock Nest your adaptive algorithm for parameter-agnostic nonconvex minimax
  optimization.
\newblock In \emph{NeurIPS}, pages 11202--11216, 2022{\natexlab{a}}.

\bibitem[Yang et~al.(2022{\natexlab{b}})Yang, Orvieto, Lucchi, and
  He]{Yang-2022-Faster}
J.~Yang, A.~Orvieto, A.~Lucchi, and N.~He.
\newblock Faster single-loop algorithms for minimax optimization without strong
  concavity.
\newblock In \emph{AISTATS}, pages 5485--5517. PMLR, 2022{\natexlab{b}}.

\bibitem[Zhang et~al.(2020{\natexlab{a}})Zhang, Lin, Jegelka, Sra, and
  Jadbabaie]{Zhang-2020-Complexity}
J.~Zhang, H.~Lin, S.~Jegelka, S.~Sra, and A.~Jadbabaie.
\newblock Complexity of finding stationary points of nonsmooth nonconvex
  functions.
\newblock In \emph{ICML}, pages 11173--11182, 2020{\natexlab{a}}.

\bibitem[Zhang et~al.(2020{\natexlab{b}})Zhang, Xiao, Sun, and
  Luo]{Zhang-2020-Single}
J.~Zhang, Peijun Xiao, Ruoyu Sun, and Zhiquan Luo.
\newblock A single-loop smoothed gradient descent-ascent algorithm for
  nonconvex-concave min-max problems.
\newblock In \emph{NeurIPS}, pages 7377--7389, 2020{\natexlab{b}}.

\bibitem[Zhang et~al.(2021)Zhang, Yang, Guzm{\'a}n, Kiyavash, and
  He]{Zhang-2021-Complexity}
S.~Zhang, J.~Yang, C.~Guzm{\'a}n, N.~Kiyavash, and N.~He.
\newblock The complexity of nonconvex-strongly-concave minimax optimization.
\newblock In \emph{UAI}, pages 482--492. PMLR, 2021.

\bibitem[Zhao(2023)]{Zhao-2023-Primal}
R.~Zhao.
\newblock A primal-dual smoothing framework for max-structured non-convex
  optimization.
\newblock \emph{Mathematics of Operations Research}, To appear, 2023.

\bibitem[Zheng et~al.(2023)Zheng, Zhu, So, Blanchet, and
  Li]{Zheng-2023-Universal}
T.~Zheng, L.~Zhu, A.~M-C. So, J.~Blanchet, and J.~Li.
\newblock Universal gradient descent ascent method for nonconvex-nonconcave
  minimax optimization.
\newblock In \emph{NeurIPS}, pages 54075--54110, 2023.

\end{thebibliography}

\appendix
\section{Proofs for Technical Lemmas}
We provide the detailed proofs for Lemmas~\ref{Lemma:ME-smooth},~\ref{Lemma:ME-nonsmooth},~\ref{Lemma:notion-nonsmooth},~\ref{Lemma:smooth},~\ref{Lemma:nonsmooth}, Propositions~\ref{Prop:notion-smooth},~\ref{Prop:notion-nonsmooth}, and auxiliary results on stochastic gradients.  

\subsection{Proof of Lemma~\ref{Lemma:ME-smooth}}
We first prove that the Moreau envelope $\Phi_{1/2\ell}(\cdot)$ is well defined. Fixing $\x$ and using that $f$ is $\ell$-smooth, the function $f(\cdot, \y) + 0.5\ell\|\cdot - \x\|^2$ is convex for any $\y \in \YCal$. By definition, we have $\Phi(\cdot) + 0.5\ell\|\cdot - \x\|^2 = \max_{\y \in \YCal} \{f(\cdot, \y) + 0.5\ell\|\cdot - \x\|^2\}$. Since the supremum of many convex functions is convex, the function $\Phi(\cdot) + 0.5\ell\|\cdot - \x\|^2$ is convex. As such, the function $\Phi(\cdot) + \ell\|\cdot - \x\|^2$ is $0.5\ell$-strongly convex and the Moreau envelope $\Phi_{1/2\ell}(\cdot)$ is well defined. 

As a byproduct of the above argument, $\prox_{\Phi/2\ell}(\cdot) = \argmin_\w \Phi(\w) + \ell\|\w - \cdot\|^2$ is well defined. By definition, we have
\begin{equation*}
\Phi(\prox_{\Phi/2\ell}(\x)) \leq \Phi(\prox_{\Phi/2\ell}(\x)) + \ell\|\prox_{\Phi/2\ell}(\x) - \x\|^2 \leq \Phi(\x), \quad \textnormal{for all } \x \in \br^m.  
\end{equation*}
Finally,~\citet[Lemma~2.2]{Davis-2019-Stochastic} implies that the Moreau envelope $\Phi_{1/2\ell}$ is $\ell$-smooth with $\grad\Phi_{1/2\ell}(\x) = 2\ell(\x - \prox_{\Phi/2\ell}(\x))$. 

\subsection{Proof of Lemma~\ref{Lemma:ME-nonsmooth}}
Fixing $\x$ and using that $f(\x, \y)$ is $\rho$-weakly convex in $\x$, the function $f(\cdot, \y) + 0.5\rho\|\cdot - \x\|^2$ is convex for any $\y \in \YCal$. By applying the same argument used for proving Lemma~\ref{Lemma:ME-smooth}, we have that $\Phi_{1/2\rho}$ and $\prox_{\Phi/2\rho}(\cdot)$ are well defined, $\Phi(\prox_{\Phi/2\rho}(\x)) \leq \Phi(\x)$ for all $\x \in \br^m$, and $\Phi_{1/2\rho}$ is $\rho$-smooth with $\grad\Phi_{1/2\rho}(\x) = 2\rho(\x - \prox_{\Phi/2\rho}(\x))$. 

\subsection{Proof of Lemma~\ref{Lemma:notion-nonsmooth}}
It suffices to show that $\hat{\x} = \prox_{\Phi/2\ell}(\x)$ satisfies $\min_{\xi \in \partial \Phi(\hat{\x})} \|\xi\| \leq \epsilon$ and $\|\x - \hat{\x}\| \leq \epsilon/2\rho$. Indeed,~\citet[Lemma~2.2]{Davis-2019-Stochastic} guarantees that $\grad\Phi_{1/2\rho}(\x) = 2\rho(\x - \hat{\x})$ where $\hat{\x}=\prox_{\Phi/2\rho}(\x)$ and $\|\hat{\x} - \x\| = \|\grad\Phi_{1/2\rho}(\x)\|/2\rho$. By the definition of $\prox_{\Phi/2\rho}(\cdot)$, we have $2\rho(\x - \hat{\x}) \in \partial \Phi(\hat{\x})$. Putting these pieces together yields $\min_{\xi \in \subg\Phi(\hat{\x})} \|\xi\| \leq \|\grad\Phi_{1/2\rho}(\x)\|$. Since $\|\grad\Phi_{1/2\rho}(\x)\| \leq \epsilon$, we achieve the desired results. 

\subsection{Proof of Proposition~\ref{Prop:notion-smooth}} 
Given that $\hat{\x}$ satisfies $\|\grad\Phi(\hat{\x})\| \leq \epsilon$, the function $f(\hat{\x}, \y)$ is $\ell$-smooth and $\mu$-strongly concave in $\y$, and $\y^\star(\hat{\x}) = \argmax_{\y \in \YCal} f(\hat{\x}, \y)$ is uniquely defined. By applying the standard gradient ascent algorithm for the optimization problem of $\max_{\y \in \YCal} f(\hat{\x}, \cdot)$, we obtain a point $\y' \in \YCal$ satisfying that, for $\y^+ = \proj_\YCal(\y'+\tfrac{1}{\ell}\grady f(\hat{\x}, \y'))$, the following inequalities hold true, 
\begin{equation}\label{cond:notion-smooth}
\|\y^+ - \y'\| \leq \tfrac{\epsilon}{\ell}, \quad \|\y^+ - \y^\star(\hat{\x})\| \leq \tfrac{\epsilon}{\ell}. 
\end{equation}
Since $\|\grad\Phi(\hat{\x})\| \leq \epsilon$ and $\grad \Phi(\hat{\x}) = \gradx f(\hat{\x}, \y^\star(\hat{\x}))$, we have 
\begin{equation*}
\|\gradx f(\hat{\x}, \y^+)\| \leq \|\gradx f(\hat{\x}, \y^+) - \grad \Phi(\hat{\x})\| + \|\grad \Phi(\hat{\x})\| = \|\gradx f(\hat{\x}, \y^+) - \gradx f(\hat{\x}, \y^\star(\hat{\x}))\| + \epsilon. 
\end{equation*}
Since $f$ is $\ell$-smooth, we have $\|\gradx f(\hat{\x}, \y^+)\| \leq \ell\|\y^+ - \y^\star(\hat{\x})\| + \epsilon \leq 2\epsilon$ and the convergence guarantee of standard gradient descent method implies that the required number of gradient evaluations to output a point $\y' \in \YCal$ satisfying Eq.~\eqref{cond:notion-smooth} is $\OCal(\kappa\log(1/\epsilon))$~\citep{Nesterov-2018-Lectures}. The similar argument holds for applying stochastic gradient descent with proper stepsize and the required number of stochastic gradient evaluations is $\OCal(\epsilon^{-2})$. 

Conversely, given that $(\hat{\x}, \hat{\y}) \in \br^m \times \YCal$ is a $\frac{\epsilon}{2\kappa}$-stationary point satisfying
\begin{equation*}
\|\gradx f(\hat{\x}, \hat{\y}^+)\| \leq \tfrac{\epsilon}{2\kappa}, \quad \|\hat{\y}^+ - \hat{\y}\| \leq \tfrac{\epsilon}{2\kappa\ell}, 
\end{equation*}
where $\hat{\y}^+ = \proj_\YCal(\hat{\y} + (1/\ell) \grady f(\hat{\x}, \hat{\y}))$, we have 
\begin{equation*}
\|\nabla \Phi(\hat{\x})\| \leq \|\nabla \Phi(\hat{\x}) - \nabla_\x f(\hat{\x}, \hat{\y}^+)\| + \|\nabla_\x f(\hat{\x}, \hat{\y}^+)\| \leq \ell\|\hat{\y}^+ - \y^\star(\hat{\x})\| + \tfrac{\epsilon}{2\kappa}. 
\end{equation*}
By the definition of $\hat{\y}^+$ and $\y^\star(\hat{\x})$, we have
\begin{equation*}
(\y^\star(\hat{\x}) - \hat{\y}^+)^\top\left(\hat{\y}^+ - \hat{\y} - \tfrac{1}{\ell} \grady f(\hat{\x}, \hat{\y})\right) \geq 0, 
\end{equation*}
which implies 
\begin{equation*}
\ell(\y^\star(\hat{\x}) - \hat{\y}^+)^\top(\hat{\y}^+ - \hat{\y}) \geq (\y^\star(\hat{\x}) - \hat{\y}^+)^\top \grady f(\hat{\x}, \hat{\y}). 
\end{equation*}
Since the function $f(\hat{\x}, \cdot)$ is $\ell$-smooth and $\mu$-strongly concave, we have
\begin{equation*}
\ell(\y^\star(\hat{\x}) - \hat{\y}^+)^\top(\hat{\y}^+ - \hat{\y}) \geq \tfrac{\mu}{2}\|\y^\star(\hat{\x}) - \hat{\y}\|^2 - \tfrac{\ell}{2}\|\hat{\y}^+ - \hat{\y}\|^2. 
\end{equation*}
In addition, we have
\begin{equation*}
(\y^\star(\hat{\x}) - \hat{\y}^+)^\top(\hat{\y}^+ - \hat{\y}) \leq \|\y^\star(\hat{\x}) - \hat{\y}\|\|\hat{\y}^+ - \hat{\y}\| - \|\hat{\y}^+ - \hat{\y}\|^2. 
\end{equation*}
Putting these pieces together yields $\|\hat{\y}^+ - \y^\star(\hat{\x})\| \leq 2\kappa\|\hat{\y} - \hat{\y}^+\| \leq \frac{\epsilon}{\ell}$. This together with $\|\hat{\y}^+ - \y^\star(\hat{\x})\| \leq \|\hat{\y} - \y^\star(\hat{\x})\|$ yields $\|\nabla \Phi(\hat{\x})\| \leq \epsilon + \frac{\epsilon}{2\kappa} \leq 2\epsilon$ which implies the desired result. 

\subsection{Proof of Proposition~\ref{Prop:notion-nonsmooth}} 
Given that $\hat{\x}$ satisfies $\|\grad\Phi_{1/2\ell}(\hat{\x})\| \leq\epsilon$, the function $f(\x, \y) + \ell\|\x - \hat{\x}\|^2$ is $0.5\ell$-strongly convex in $\x$ and concave in $\y$, and $\x^\star(\hat{\x}) = \argmin_\w \Phi(\w) + \ell\|\w - \hat{\x}\|^2$ is uniquely defined. By applying the standard extragradient algorithm for solving the optimization problem of $\min_\x \max_{\y \in \YCal} f(\x, \y) + \ell\|\x - \hat{\x}\|^2$, we obtain a pair of points $(\x', \y') \in \br^m \times \YCal$ satisfying that, for $\y^+ = \proj_\YCal(\y'+\tfrac{1}{\ell}\grady f(\x', \y'))$, the following inequalities hold true, 
\begin{equation}\label{cond:notion-nonsmooth-first}
\|\nabla_\x f(\x', \y^+) + 2\ell(\x' - \hat{\x})\| \leq \epsilon, \quad \|\x' - \x^\star(\hat{\x})\| \leq \tfrac{\epsilon}{2\ell}, \quad \|\y^+ - \y'\| \leq \tfrac{\epsilon}{\ell}. 
\end{equation}
Since $\|\x^\star(\hat{\x}) - \hat{\x}\| = \frac{1}{2\ell}\|\nabla \Phi_{1/2\ell}(\hat{\x})\| \leq \frac{\epsilon}{2\ell}$, we have 
\begin{equation*}
\|\nabla_\x f(\x', \y^+)\| \leq \|\nabla_\x f(\x', \y^+) + 2\ell(\x' - \hat{\x})\| + 2\ell\|\x' - \x^\star(\hat{\x})\| + 2\ell\|\x^\star(\hat{\x}) - \hat{\x}\| \leq 4\epsilon. 
\end{equation*}
To facilitate the reader's understanding, we give a brief description of the extragradient algorithm. It was introduced by~\citet{Korpelevich-1976-Extragradient} and has been recognized as one of the most useful methods for solving minimax optimization problems. In the convex-concave setting, its global convergence rate guarantees were established for both averaged iterates~\citep{Nemirovski-2004-Prox, Mokhtari-2020-Convergence, Mokhtari-2020-Unified} and last iterates~\citep{Gorbunov-2022-Extragradient, Cai-2022-Finite}. When applied to solve the optimization problem of $\min_\x \max_{\y \in \YCal} f(\x, \y) + \ell\|\x - \hat{\x}\|^2$, one iteration of the extragradient algorithm is summarized as follows,  
\begin{equation*}
\begin{array}{ll}
\tilde{\x}_t \leftarrow \x_{t-1} - \eta(\gradx f(\x_{t-1}, \y_{t-1}) + 2\ell(\x_{t-1} - \hat{\x})), & \tilde{\y}_t \leftarrow \y_{t-1} + \eta\grady f(\x_{t-1}, \y_{t-1}), \\
\x_t \leftarrow \x_{t-1} - \eta(\gradx f(\tilde{\x}_t, \tilde{\y}_t) + 2\ell(\tilde{\x}_t - \hat{\x})), & \y_t \leftarrow \y_{t-1} + \eta\grady f(\tilde{\x}_t, \tilde{\y}_t). 
\end{array}
\end{equation*}
The convergence guarantee of extragradient algorithm implies that the required number of gradient evaluations to output a pair of points $(\x', \y') \in \br^m \times \YCal$ satisfying Eq.~\eqref{cond:notion-nonsmooth-first} is $O(\epsilon^{-2})$. The similar argument holds for applying stochastic mirror-prox algorithm and the required number of stochastic gradient evaluations is $O(\epsilon^{-4})$~\citep{Juditsky-2011-Solving}. 

Conversely, given that $(\hat{\x}, \hat{\y})$ is a $\frac{\epsilon^2}{\max\{1, \ell D\}}$-stationary point satisfying 
\begin{equation}\label{cond:notion-nonsmooth-second}
\|\gradx f(\hat{\x}, \hat{\y}^+)\| \leq \tfrac{\epsilon^2}{\max\{1, \ell D\}}, \quad \|\hat{\y}^+ - \hat{\y}\| \leq \tfrac{\epsilon^2}{\ell\max\{1, \ell D\}}, 
\end{equation}
where $\hat{\y}^+ = \proj_\YCal(\hat{\y} + (1/\ell) \grady f(\hat{\x}, \hat{\y}))$, we have
\begin{equation*}
\|\grad \Phi_{1/2\ell}(\hat{\x})\|^2 = 4\ell^2\|\hat{\x} - \x^\star(\hat{\x})\|^2. 
\end{equation*}
Since $\Phi(\cdot) + \ell\|\cdot - \hat{\x}\|^2$ is $0.5\ell$-strongly-convex, we have
\begin{eqnarray}\label{inequality:notion-nonsmooth-first}
\lefteqn{\max_{\y \in \YCal} f(\hat{\x}, \y) - \max_{\y \in \YCal} f(\x^\star(\hat{\x}), \y) - \ell\|\x^\star(\hat{\x}) - \hat{\x}\|^2} \\ 
& = & \Phi(\hat{\x}) - \Phi(\x^\star(\hat{\x})) - \ell\|\x^\star(\hat{\x}) - \hat{\x}\|^2 \geq \tfrac{\ell}{4}\|\hat{\x} - \x^\star(\hat{\x})\|^2 = \tfrac{1}{16\ell}\|\grad \Phi_{1/2\ell}(\hat{\x})\|^2. \nonumber
\end{eqnarray}
Further, we have
\begin{eqnarray*}
\lefteqn{\max_{\y \in \YCal} f(\hat{\x}, \y) - \max_{\y \in \YCal} f(\x^\star(\hat{\x}), \y) - \ell\|\x^\star(\hat{\x}) - \hat{\x}\|^2} \\
& = & \max_{\y \in \YCal} f(\hat{\x}, \y) - f(\hat{\x}, \hat{\y}^+) + f(\hat{\x}, \hat{\y}^+) - \max_{\y \in \YCal} f(\x^\star(\hat{\x}), \y) - \ell\|\x^\star(\hat{\x}) - \hat{\x}\|^2 \\
& \leq & \max_{\y \in \YCal} f(\hat{\x}, \y) - f(\hat{\x}, \hat{\y}^+) + (f(\hat{\x}, \hat{\y}^+) - f(\x^\star(\hat{\x}), \hat{\y}^+) - \ell\|\x^\star(\hat{\x}) - \hat{\x}\|^2) \\ 
& \leq & \max_{\y \in \YCal} f(\hat{\x}, \y) - f(\hat{\x}, \hat{\y}^+) + (\|\hat{\x} - \x^\star(\hat{\x})\|\|\gradx f(\hat{\x}, \hat{\y}^+)\| - \tfrac{\ell}{2} \|\hat{\x} - \x^\star(\hat{\x})\|^2) \\
& \leq & \max_{\y \in \YCal} f(\hat{\x}, \y) - f(\hat{\x}, \hat{\y}^+) + \tfrac{1}{2\ell}\|\gradx f(\hat{\x}, \hat{\y}^+)\|^2. 
\end{eqnarray*}
By the definition of $\hat{\y}^+$, we have $(\y - \hat{\y}^+)^\top\left(\hat{\y}^+ - \hat{\y} - (1/\ell)\grady f(\hat{\x}, \hat{\y})\right) \geq 0$ for all $\y \in \YCal$. Since the function $f(\hat{\x}, \cdot)$ is $\ell$-smooth and the diameter of $\YCal$ is $D$, we have
\begin{equation*}
f(\hat{\x}, \y) - f(\hat{\x}, \hat{\y}^+) \leq \ell D\|\hat{\y}^+ - \hat{\y}\| \textnormal{ for all } \y \in \YCal. 
\end{equation*}
Putting these pieces together yields
\begin{equation*}
\max_{\y \in \YCal} f(\hat{\x}, \y) - \max_{\y \in \YCal} f(\x^\star(\hat{\x}), \y) - \ell\|\x^\star(\hat{\x}) - \hat{\x}\|^2 \leq \ell D\|\hat{\y}^+ - \hat{\y}\| + \tfrac{1}{2\ell}\|\gradx f(\hat{\x}, \hat{\y}^+)\|^2. 
\end{equation*}
Plugging Eq.~\eqref{cond:notion-nonsmooth-second} into the above inequality yields
\begin{equation}\label{inequality:notion-nonsmooth-second}
\max_{\y \in \YCal} f(\hat{\x}, \y) - \max_{\y \in \YCal} f(\x^\star(\hat{\x}), \y) - \ell\|\x^\star(\hat{\x}) - \hat{\x}\|^2 \leq \tfrac{\epsilon^2}{\ell} + \tfrac{\epsilon^4}{2\ell}. 
\end{equation}
Combining Eq.~\eqref{inequality:notion-nonsmooth-first} with Eq.~\eqref{inequality:notion-nonsmooth-second} yields $\|\grad \Phi_{1/2\ell}(\hat{\x})\| \leq 4\epsilon^2 + 4\epsilon$ and the desired result. 

\subsection{Proof of Lemma~\ref{Lemma:smooth}}
We prove the results under Assumption~\ref{Assumption:nsc-smooth}. Indeed, we have that $\YCal^\star(\x) = \{\y^\star(\x)\}$ since the function $f(\x, \cdot)$ is strongly concave for each $\x$. Then, let $\x_1, \x_2 \in \br^m$, the definition of $\y^\star(\x_1)$ and the definition of $\y^\star(\x_2)$ imply
\begin{align}
(\y-\y^\star(\x_1))^\top\grady f(\x_1, \y^\star(\x_1)) & \leq 0, \quad \textnormal{for all } \y \in \YCal, \label{inequality:smooth-first} \\
(\y-\y^\star(\x_2))^\top\grady f(\x_2, \y^\star(\x_2)) & \leq 0, \quad \textnormal{for all } \y \in \YCal. \label{inequality:smooth-second}
\end{align}
Letting $\y=\y^\star(\x_2)$ in Eq.~\eqref{inequality:smooth-first} and $\y=\y^\star(\x_1)$ in Eq.~\eqref{inequality:smooth-second} and adding them yields
\begin{equation}\label{inequality:smooth-third}
(\y^\star(\x_2)-\y^\star(\x_1))^\top(\grady f(\x_1, \y^\star(\x_1)) - \grady f(\x_2, \y^\star(\x_2))) \leq 0. 
\end{equation} 
Since the function $f(\x_1, \cdot)$ is $\mu$-strongly concave, we have
\begin{equation}\label{inequality:smooth-fourth}
(\y^\star(\x_2) - \y^\star(\x_1))^\top(\grady f(\x_1, \y^\star(\x_2)) - \grady f(\x_1, \y^\star(\x_1))) \leq - \mu \|\y^\star(\x_2) - \y^\star(\x_1)\|^2. 
\end{equation}
Combining Eq.~\eqref{inequality:smooth-third} and Eq.~\eqref{inequality:smooth-fourth} with the fact that $f$ is $\ell$-smooth yields
\begin{equation*}
\mu\|\y^\star(\x_2) - \y^\star(\x_1)\|^2 \leq \ell\|\y^\star(\x_2) - \y^\star(\x_1)\|\|\x_2 - \x_1\|. 
\end{equation*}
which implies that the function $\y^\star(\cdot)$ is $\kappa$-Lipschitz. 

Since $\y^\star(\cdot)$ is a well-defined function and $\YCal$ is a convex and bounded set, the celebrated Danskin theorem (see~\citet{Danskin-1967-Theory} and~\citet[Proposition~A.22]{Bertsekas-1971-Control}) implies that $\grad \Phi(\cdot) = \gradx f(\cdot, \y^\star(\cdot))$. Since $\y^\star(\cdot)$ is $\kappa$-Lipschitz and $\kappa \geq 1$, we have
\begin{equation*}
\|\grad \Phi(\x) - \grad \Phi(\x')\| = \|\gradx f(\x, \y^\star(\x)) - \gradx f(\x', \y^\star(\x'))\| \leq \ell(\|\x - \x'\| + \kappa\|\x - \x'\|) \leq 2\kappa\ell\|\x - \x'\|. 
\end{equation*}

We then prove the results under Assumption~\ref{Assumption:nc-smooth}. Indeed, Lemma~\ref{Lemma:ME-smooth} implies that the function $\Phi(\cdot)$ is $\ell$-weakly convex. Since $\YCal$ is a convex and bounded set, there exist $\y_1, \y_2 \in \YCal$ such that 
\begin{equation*}
f(\x_1, \y_1) = \max_{\y \in \YCal} f(\x_1, \y), \quad f(\x_2, \y_2) = \max_{\y \in \YCal} f(\x_2, \y), \quad \textnormal{for any given } \x_1, \x_2 \in \br^m. 
\end{equation*}
Thus, we have
\begin{eqnarray*}
\Phi(\x_1) - \Phi(\x_2) & = & f(\x_1, \y_1) - f(\x_2, \y_2) \ \leq \ f(\x_1, \y_1) - f(\x_2, \y_1), \\ 
\Phi(\x_2) - \Phi(\x_1) & = & f(\x_2, \y_2) - f(\x_1, \y_1) \ \leq \ f(\x_2, \y_2) - f(\x_1, \y_2). 
\end{eqnarray*}
Since the function $f(\cdot, \y)$ is $L$-Lipschitz for each $\y \in \YCal$, we have
\begin{equation*}
|\Phi(\x_1) - \Phi(\x_2)| = \max\left\{\Phi(\x_1) - \Phi(\x_2), \Phi(\x_2) - \Phi(\x_1)\right\} \leq L\|\x_1 - \x_2\|. 
\end{equation*}
Further, we have $\partial\Phi(\x) = \partial\Psi(\x) - \ell\x$ for each $\x$ where $\Psi(\x) = \max_{\y \in \YCal} f(\x, \y) + 0.5\ell\|\x\|^2$. Since $f(\cdot, \y) + 0.5\ell\|\cdot\|^2$ is convex for each $\y \in \YCal$ where $\YCal$ is convex and bounded, we have that $\gradx f(\x, \y) + \ell\x \in \partial\Psi(\x)$ where $\y \in \YCal^\star(\x)$. This yields the desired result. 

\subsection{Proof of Lemma~\ref{Lemma:nonsmooth}}
We prove the results under Assumption~\ref{Assumption:nc-nonsmooth}. Indeed, Lemma~\ref{Lemma:ME-nonsmooth} implies that the function $\Phi(\cdot)$ is $\rho$-weakly convex. Given any $\x_1, \x_2$, we have that there exist $\y_1, \y_2 \in \YCal$ such that 
\begin{eqnarray*}
\Phi(\x_1) - \Phi(\x_2) & = & f(\x_1, \y_1) - f(\x_2, \y_2) \ \leq \ f(\x_1, \y_1) - f(\x_2, \y_1), \\ 
\Phi(\x_2) - \Phi(\x_1) & = & f(\x_2, \y_2) - f(\x_1, \y_1) \ \leq \ f(\x_2, \y_2) - f(\x_1, \y_2). 
\end{eqnarray*}
Since $f$ is $L$-Lipschitz, we have
\begin{equation*}
|\Phi(\x_1) - \Phi(\x_2)| \leq L\|\x_1 - \x_2\|. 
\end{equation*}
Applying the same argument used for proving Lemma~\ref{Lemma:smooth}, we have that $\subgx f(\x, \y) \subseteq \partial\Phi(\x)$ for each $\x$ where $\y \in \YCal^\star(\x)$. 

We then prove the results under Assumption~\ref{Assumption:nsc-nonsmooth}. Since the function $f(\x, \cdot)$ is $\mu$-strongly concave for each $\x$, we have that $\YCal^\star(\x) = \{\y^\star(\x)\}$. Since Assumption~\ref{Assumption:nsc-nonsmooth} is stronger than Assumption~\ref{Assumption:nc-nonsmooth}, all other desired results follow from the previous analysis.  

\subsection{Auxiliary Results on Stochastic Gradient}
We establish the properties of stochastic gradients sampled at each iteration. 
\begin{lemma}\label{Lemma:SG-unbiased}
If Eq.~\eqref{Assumption:SGDA} holds true, the estimators $(\stocgx^t, \stocgy^t) = \textsf{SG}(G, \{\xi_i^t\}_{i=1}^M, \x_t, \y_t)$ satisfy
\begin{equation*}
\begin{array}{ll}
\EE[\stocgx^t \mid \x_t, \y_t] \in \subgx f(\x_t, \y_t), & \EE[\|\stocgx^t - \EE[\stocgx^t \mid \x_t, \y_t]\|^2 \mid \x_t, \y_t] \leq \frac{\sigma^2}{M}, \\
\EE[\stocgy^t \mid \x_t, \y_t] \in \subgy f(\x_t, \y_t), & \EE[\|\stocgy^t - \EE[\stocgy^t \mid \x_t, \y_t]\|^2 \mid \x_t, \y_t] \leq \frac{\sigma^2}{M}. 
\end{array}
\end{equation*}
\end{lemma}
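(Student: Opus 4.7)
The plan is to exploit the i.i.d.\ structure of the samples $\{\xi_i^t\}_{i=1}^M$: unbiasedness will follow from linearity of conditional expectation applied to the average, while the variance bounds will follow from the standard sum-of-variances identity for independent random vectors. No substantive obstacle is expected; the argument is the familiar ``averaging reduces variance'' fact, and the only mild subtlety is that Eq.~\eqref{Assumption:SGDA} controls the full vector $G = (G_\x, G_\y)$ rather than its two partial blocks separately, which is easily handled by block-decomposing both the subdifferential inclusion and the squared norm.

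First I would fix $(\x_t, \y_t)$ and condition on it, treating $\{\xi_i^t\}_{i=1}^M$ as i.i.d.\ copies of $\xi$. By definition of the subroutine $\textsf{SG}$, $\stocgx^t = \frac{1}{M}\sum_{i=1}^M G_\x(\x_t, \y_t, \xi_i^t)$ and $\stocgy^t = \frac{1}{M}\sum_{i=1}^M G_\y(\x_t, \y_t, \xi_i^t)$. Linearity of conditional expectation together with the i.i.d.\ assumption gives $\EE[\stocgx^t \mid \x_t, \y_t] = \EE[G_\x(\x_t, \y_t, \xi) \mid \x_t, \y_t]$ and similarly for the $\y$-component. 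From Eq.~\eqref{Assumption:SGDA} we have $\EE[G(\x_t, \y_t, \xi) \mid \x_t, \y_t] \in \subg f(\x_t, \y_t)$; extracting the two coordinate blocks of this inclusion immediately yields $\EE[\stocgx^t \mid \x_t, \y_t] \in \subgx f(\x_t, \y_t)$ and $\EE[\stocgy^t \mid \x_t, \y_t] \in \subgy f(\x_t, \y_t)$, establishing both unbiasedness claims.

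Second, for the variance bounds, I would invoke independence of the samples and the identity for the variance of a sum of independent random vectors. Writing $\bar g_\x := \EE[\stocgx^t \mid \x_t, \y_t]$ and centering each summand around its conditional mean, the cross terms vanish and one obtains
\begin{equation*}
\EE\!\left[\|\stocgx^t - \bar g_\x\|^2 \,\big|\, \x_t, \y_t\right] = \frac{1}{M^2}\sum_{i=1}^M \EE\!\left[\|G_\x(\x_t, \y_t, \xi_i^t) - \bar g_\x\|^2 \,\big|\, \x_t, \y_t\right].
\end{equation*}
The bounded-variance assumption in Eq.~\eqref{Assumption:SGDA} decomposes as $\EE[\|G_\x - \EE G_\x\|^2 \mid \x, \y] + \EE[\|G_\y - \EE G_\y\|^2 \mid \x, \y] \leq \sigma^2$, so each partial summand has conditional variance at most $\sigma^2$. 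Summing $M$ such terms and dividing by $M^2$ gives the claimed bound $\sigma^2/M$, and the identical calculation for $\stocgy^t$ completes the proof.
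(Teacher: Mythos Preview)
Your proposal is correct and follows essentially the same approach as the paper's proof, which is in fact terser: the paper simply restates the mini-batch definitions, asserts unbiasedness from $G$ being unbiased, and then asserts the $\sigma^2/M$ variance bounds without spelling out the sum-of-variances identity or the block decomposition you mention. Your added detail on extracting the partial blocks from the full-vector assumption in Eq.~\eqref{Assumption:SGDA} is a welcome clarification that the paper leaves implicit.
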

\begin{proof}
Since $G = (G_\x, G_\y)$ is unbiased and 
\begin{equation*}
\stocgx^t = \frac{1}{M}\left(\sum_{i=1}^M G_\x(\x_t, \y_t, \xi_i^t)\right), \quad \stocgy^t = \frac{1}{M}\left(\sum_{i=1}^M G_\y(\x_t, \y_t, \xi_i^t)\right), 
\end{equation*}
we have 
\begin{equation*}
\EE[\stocgx^t \mid \x_t, \y_t] \in \subgx f(\x_t, \y_t), \quad \EE[\stocgy^t \mid \x_t, \y_t] \in \subgy f(\x_t, \y_t). 
\end{equation*}
Furthermore, we have
\begin{equation*}
\EE[\|\stocgx^t - \EE[\stocgx^t \mid \x_t, \y_t]\|^2 \mid \x_t, \y_t] \leq \tfrac{\sigma^2}{M}, \quad \EE[\|\stocgy^t - \EE[\stocgy^t \mid \x_t, \y_t]\|^2 \mid \x_t, \y_t] \leq \tfrac{\sigma^2}{M}. 
\end{equation*}
This completes the proof. 
\end{proof}

\section{Smooth and Nonconvex-Strongly-Concave Setting}
We provide the proof for Theorem~\ref{Thm:nsc-smooth}. The trivial case is $\kappa = 1$ since it is as easy as smooth nonconvex optimization, which has been investigated extensively. For simplicity, we assume $\kappa > 1$ in the subsequent analysis. Throughout this subsection, we set 
\begin{equation*}
\eta_\x^t \equiv \eta_\x = \tfrac{1}{16(\kappa + 1)^2\ell}, \quad \eta_\y^t \equiv \eta_\y = \tfrac{1}{\ell}, \quad \textnormal{for both Algorithms~\ref{Algorithm:TTGDA} and~\ref{Algorithm:TTSGDA}}.
\end{equation*}
\begin{lemma}\label{Lemma:nsc-smooth-descent}
The iterates $\{\x_t\}_{t \geq 1}$ generated by Algorithm~\ref{Algorithm:TTGDA} satisfy
\begin{equation*}
\Phi(\x_t) \leq \Phi(\x_{t-1}) - (\tfrac{\eta_\x}{2} - 2\eta_\x^2\kappa\ell) \|\grad \Phi(\x_{t-1})\|^2 + (\tfrac{\eta_\x}{2} + 2\eta_\x^2\kappa\ell)\|\grad\Phi(\x_{t-1}) - \gradx f(\x_{t-1}, \y_{t-1})\|^2. 
\end{equation*}
The iterates $\{\x_t\}_{t \geq 1}$ generated by Algorithm~\ref{Algorithm:TTSGDA} satisfy
\begin{eqnarray*}
\lefteqn{\EE[\Phi(\x_t)] \leq \EE[\Phi(\x_{t-1})] - (\tfrac{\eta_\x}{2} - 2\eta_\x^2\kappa\ell)\EE[\|\grad \Phi(\x_{t-1})\|^2]} \\
& & + (\tfrac{\eta_\x}{2} + 2\eta_\x^2\kappa\ell)\EE[\|\grad\Phi(\x_{t-1}) - \gradx f(\x_{t-1}, \y_{t-1})\|^2] + \tfrac{\eta_\x^2\kappa\ell\sigma^2}{M}.
\end{eqnarray*}
\end{lemma}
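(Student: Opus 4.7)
The plan is to view TTGDA as inexact gradient descent on $\Phi$ and apply the standard smoothness/descent lemma. By Lemma~\ref{Lemma:smooth}, Assumption~\ref{Assumption:nsc-smooth} implies $\Phi$ is $(2\kappa\ell)$-smooth with $\grad\Phi(\x)=\gradx f(\x,\y^\star(\x))$. Hence the quadratic upper bound gives
\begin{equation*}
\Phi(\x_t)\;\leq\;\Phi(\x_{t-1})+\langle\grad\Phi(\x_{t-1}),\x_t-\x_{t-1}\rangle+\kappa\ell\|\x_t-\x_{t-1}\|^2.
\end{equation*}
Substituting the TTGDA update $\x_t-\x_{t-1}=-\eta_\x\,\gradx f(\x_{t-1},\y_{t-1})$ and writing $\g=\gradx f(\x_{t-1},\y_{t-1})$, $\grad=\grad\Phi(\x_{t-1})$, $\e=\grad-\g$, everything is expressed through $\|\grad\|^2$ and $\|\e\|^2$.

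\textbf{Decomposition.} First I would rewrite the inner product as $\langle\grad,\g\rangle=\|\grad\|^2-\langle\grad,\e\rangle$ and bound the cross term by Young's inequality, $\langle\grad,\e\rangle\leq\tfrac12\|\grad\|^2+\tfrac12\|\e\|^2$. This yields
\begin{equation*}
-\eta_\x\langle\grad,\g\rangle\;\leq\;-\tfrac{\eta_\x}{2}\|\grad\|^2+\tfrac{\eta_\x}{2}\|\e\|^2.
\end{equation*}
Next I would bound $\|\g\|^2=\|\grad-\e\|^2\leq 2\|\grad\|^2+2\|\e\|^2$ so the smoothness quadratic contributes $\kappa\ell\eta_\x^2(2\|\grad\|^2+2\|\e\|^2)$. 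Combining the two gives exactly the stated inequality with coefficients $\bigl(\tfrac{\eta_\x}{2}-2\eta_\x^2\kappa\ell\bigr)$ on $\|\grad\|^2$ and $\bigl(\tfrac{\eta_\x}{2}+2\eta_\x^2\kappa\ell\bigr)$ on $\|\e\|^2$.

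\textbf{Stochastic version.} For Algorithm~\ref{Algorithm:TTSGDA}, the update is $\x_t-\x_{t-1}=-\eta_\x\stocgx^{t-1}$. Taking conditional expectation given $(\x_{t-1},\y_{t-1})$ in the descent inequality, I would use Lemma~\ref{Lemma:SG-unbiased} to replace $\EE[\stocgx^{t-1}]$ by $\g$ in the inner-product term, and to bound $\EE[\|\stocgx^{t-1}\|^2]\leq\|\g\|^2+\sigma^2/M$ in the quadratic term. This produces exactly the deterministic right-hand side plus the extra variance contribution $\kappa\ell\eta_\x^2\sigma^2/M$. Taking total expectation and applying the same Young/triangle split to $\EE[\|\g\|^2]$ via $\EE[\|\grad-\g\|^2]$ finishes the argument.

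\textbf{Main obstacle.} There is no real obstacle: the derivation is entirely algebraic bookkeeping once Lemma~\ref{Lemma:smooth} supplies smoothness of $\Phi$ and Lemma~\ref{Lemma:SG-unbiased} supplies unbiasedness and variance control. The only subtle point is to be careful that $\grad\Phi$ is evaluated at $\x_{t-1}$ (not at $(\x_{t-1},\y^\star(\x_{t-1}))$) so that the error term $\e=\grad\Phi(\x_{t-1})-\gradx f(\x_{t-1},\y_{t-1})$ correctly quantifies how far $\y_{t-1}$ is from $\y^\star(\x_{t-1})$. This is what makes the lemma useful downstream: combining it with the one-step contraction of $\delta_t=\|\y^\star(\x_t)-\y_t\|^2$ will allow $\sum_t\|\e_t\|^2$ to be absorbed into $\sum_t\|\grad\Phi(\x_t)\|^2$.
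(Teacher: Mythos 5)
Your proposal is correct and follows essentially the same route as the paper: the $(2\kappa\ell)$-smoothness quadratic bound from Lemma~\ref{Lemma:smooth}, Young's inequality on the cross term, the bound $\|\gradx f(\x_{t-1},\y_{t-1})\|^2 \leq 2\|\grad\Phi(\x_{t-1})\|^2 + 2\|\e\|^2$, and for TTSGDA the conditional-expectation/bias-variance step via Lemma~\ref{Lemma:SG-unbiased} giving the extra $\eta_\x^2\kappa\ell\sigma^2/M$ term. No gaps.
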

\begin{proof}
We first prove the inequality for Algorithm~\ref{Algorithm:TTGDA}. Since $\Phi(\cdot)$ is $(2\kappa\ell)$-smooth, we have
\begin{equation*}
\Phi(\x_t) - \Phi(\x_{t-1}) - (\x_t - \x_{t-1})^\top\nabla\Phi(\x_{t-1}) \leq \kappa\ell\|\x_t - \x_{t-1}\|^2. 
\end{equation*}
Since $\x_t - \x_{t-1} = - \eta_\x \gradx f(\x_{t-1}, \y_{t-1})$, we have
\begin{equation*}
\Phi(\x_t) \leq \Phi(\x_{t-1})  - \eta_\x (\gradx f(\x_{t-1}, \y_{t-1}))^\top\nabla\Phi(\x_{t-1}) + \eta_\x^2\kappa\ell\|\gradx f(\x_{t-1}, \y_{t-1})\|^2
\end{equation*}
Using Young's inequality yields
\begin{equation*}
\Phi(\x_t) \leq \Phi(\x_{t-1}) + \eta_\x^2\kappa\ell\|\gradx f(\x_{t-1}, \y_{t-1})\|^2 + \tfrac{\eta_\x}{2}(\|\nabla\Phi(\x_{t-1}) - \gradx f(\x_{t-1}, \y_{t-1})\|^2 - \|\nabla\Phi(\x_{t-1})\|^2). 
\end{equation*}
By the Cauchy-Schwartz inequality, we have
\begin{equation*}
\|\gradx f(\x_{t-1}, \y_{t-1})\|^2 \leq 2(\|\nabla\Phi(\x_{t-1}) - \gradx f(\x_{t-1}, \y_{t-1})\|^2 + \|\nabla\Phi(\x_{t-1})\|^2).
\end{equation*}
Putting these pieces yields the inequality for Algorithm~\ref{Algorithm:TTGDA}. 

We then prove the inequality for Algorithm~\ref{Algorithm:TTSGDA}. By using the same argument, we have
\begin{equation*}
\Phi(\x_t) \leq \Phi(\x_{t-1})  - \eta_\x\|\nabla\Phi(\x_{t-1})\|^2 + \eta_\x^2\kappa\ell\|\stocgx^{t-1}\|^2 + \eta_\x(\nabla\Phi(\x_{t-1}) - \stocgx^{t-1})^\top\nabla\Phi(\x_{t-1}).
\end{equation*}
Conditioned on the iterate $(\x_{t-1}, \y_{t-1})$, we take the expectation of both sides of the above inequality (where the expectation is taken over the randomness in the selection of samples $\{\xi_i^{t-1}\}_{i=1}^M$) and use Lemma~\ref{Lemma:SG-unbiased} to yield
\begin{eqnarray*}
\lefteqn{\EE[\Phi(\x_t) \mid \x_{t-1}, \y_{t-1}] \leq \Phi(\x_{t-1})  - \eta_\x\|\nabla\Phi(\x_{t-1})\|^2 + \tfrac{\eta_\x^2\kappa\ell\sigma^2}{M}} \\ 
& & + \eta_\x^2\kappa\ell\|\gradx f(\x_{t-1}, \y_{t-1})\|^2 + \eta_\x(\nabla\Phi(\x_{t-1}) - \gradx f(\x_{t-1}, \y_{t-1}))^\top \nabla\Phi(\x_{t-1}). 
\end{eqnarray*}
By using the same argument, we have
\begin{eqnarray*}
\lefteqn{\EE[\Phi(\x_t) \mid \x_{t-1}, \y_{t-1}] \leq \Phi(\x_{t-1}) - (\tfrac{\eta_\x}{2} - 2\eta_\x^2\kappa\ell) \|\grad \Phi(\x_{t-1})\|^2} \\ 
& & + (\tfrac{\eta_\x}{2} + 2\eta_\x^2\kappa\ell)\|\grad\Phi(\x_{t-1}) - \gradx f(\x_{t-1}, \y_{t-1})\|^2 + \tfrac{\eta_\x^2\kappa\ell\sigma^2}{M}. 
\end{eqnarray*}
We take the expectation of both sides of the above inequality (where the expectation is taken over the randomness in the selection of all previous samples). This yields the inequality for Algorithm~\ref{Algorithm:TTSGDA}.
\end{proof}
In the subsequent analysis, we define $\delta_t = \|\y^\star(\x_t) - \y_t\|^2$ and $\delta_t = \EE[\|\y^\star(\x_t) - \y_t\|^2]$ for the iterates generated by Algorithms~\ref{Algorithm:TTGDA} and~\ref{Algorithm:TTSGDA} respectively. 
\begin{lemma}\label{Lemma:nsc-smooth-neighbor}
The iterates $\{\x_t\}_{t \geq 1}$ generated by Algorithm~\ref{Algorithm:TTGDA} satisfy
\begin{equation*}
\delta_t \leq (1-\tfrac{1}{2\kappa}+4\kappa^3\ell^2\eta_\x^2)\delta_{t-1} + 4\kappa^3\eta_\x^2\|\grad\Phi(\x_{t-1})\|^2.  
\end{equation*}
The iterates $\{\x_t\}_{t \geq 1}$ generated by Algorithm~\ref{Algorithm:TTSGDA} satisfy
\begin{equation*}
\delta_t \leq (1-\tfrac{1}{2\kappa}+4\kappa^3\ell^2\eta_\x^2)\delta_{t-1} + 4\kappa^3\eta_\x^2\EE[\|\grad\Phi(\x_{t-1})\|^2] + \tfrac{2\eta_\x^2\sigma^2\kappa^3}{M} + \tfrac{\sigma^2}{\ell^2 M}.   
\end{equation*}
\end{lemma}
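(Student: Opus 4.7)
The plan is to track $\delta_t = \|\y^\star(\x_t) - \y_t\|^2$ by splitting the two sources of error: how much $\y_t$ moves toward the current target $\y^\star(\x_{t-1})$ via one projected gradient ascent step, and how much the target itself drifts from $\y^\star(\x_{t-1})$ to $\y^\star(\x_t)$. Young's inequality with a parameter $\alpha>0$ to be tuned gives
\begin{equation*}
\delta_t \leq (1+\alpha)\|\y^\star(\x_{t-1}) - \y_t\|^2 + (1+1/\alpha)\|\y^\star(\x_t) - \y^\star(\x_{t-1})\|^2.
\end{equation*}

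For the drift term, Lemma~\ref{Lemma:smooth} says $\y^\star(\cdot)$ is $\kappa$-Lipschitz, so $\|\y^\star(\x_t) - \y^\star(\x_{t-1})\|^2 \leq \kappa^2\|\x_t - \x_{t-1}\|^2 = \kappa^2\eta_\x^2\|\gradx f(\x_{t-1}, \y_{t-1})\|^2$. Since $\grad\Phi(\x_{t-1}) = \gradx f(\x_{t-1}, \y^\star(\x_{t-1}))$ and $f$ is $\ell$-smooth, a Cauchy--Schwarz splitting gives $\|\gradx f(\x_{t-1}, \y_{t-1})\|^2 \leq 2\|\grad\Phi(\x_{t-1})\|^2 + 2\ell^2\delta_{t-1}$. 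For the contraction term, one step of projected gradient ascent with $\eta_\y = 1/\ell$ on the $\mu$-strongly concave, $\ell$-smooth function $f(\x_{t-1},\cdot)$ yields $\|\y^\star(\x_{t-1}) - \y_t\|^2 \leq (1-1/\kappa)\delta_{t-1}$; this is the standard contraction derived from expanding the projection using nonexpansiveness, applying strong concavity to the inner product, and using the smoothness/descent inequality to absorb the quadratic $\eta_\y^2\|\grady f\|^2$. The critical choice is $\alpha = 1/(2\kappa-1)$, which yields $(1+\alpha)(1-1/\kappa) = 2(\kappa-1)/(2\kappa-1) \leq 1 - 1/(2\kappa)$ and $1+1/\alpha = 2\kappa$. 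Combining everything produces the stated inequality with coefficients $4\kappa^3 \ell^2 \eta_\x^2$ on $\delta_{t-1}$ and $4\kappa^3\eta_\x^2$ on $\|\grad\Phi(\x_{t-1})\|^2$.

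For TTSGDA the argument is parallel but in conditional expectation. The drift term picks up a variance contribution, $\EE[\|\x_t - \x_{t-1}\|^2 \mid \cdot] = \eta_\x^2 \EE[\|\stocgx^{t-1}\|^2 \mid \cdot] \leq \eta_\x^2(\|\gradx f(\x_{t-1},\y_{t-1})\|^2 + \sigma^2/M)$ via Lemma~\ref{Lemma:SG-unbiased}, producing the extra term $2\kappa^3 \eta_\x^2 \sigma^2 / M$. For the contraction term, expanding $\|\y_t - \y^\star(\x_{t-1})\|^2$ through the projection using nonexpansiveness and taking conditional expectation gives $\EE[\|\y_t - \y^\star(\x_{t-1})\|^2 \mid \cdot] \leq (1-1/\kappa)\|\y_{t-1}-\y^\star(\x_{t-1})\|^2 + \eta_\y^2 \sigma^2/M = (1-1/\kappa)\|\y_{t-1}-\y^\star(\x_{t-1})\|^2 + \sigma^2/(\ell^2 M)$, since the stochastic noise is unbiased and orthogonal to the deterministic part after conditioning, and the deterministic portion is handled exactly as in the noiseless case. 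Taking full expectation and applying Young with the same $\alpha = 1/(2\kappa-1)$ yields the second inequality; the $\sigma^2/(\ell^2 M)$ term is passed through with factor $1+\alpha = 2\kappa/(2\kappa-1)$, which for $\kappa \geq 1$ is bounded by a small constant that is absorbed into the stated constant $1$ (modulo harmless numerical slack).

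The main obstacle is conceptual rather than computational: $\y^\star(\x_t)$ is a moving target, so a naive one-step contraction of $\|\y_t - \y^\star(\x_{t-1})\|^2$ is insufficient and must be coupled with control of the drift $\|\y^\star(\x_t)-\y^\star(\x_{t-1})\|$. The $\kappa$-Lipschitz property (from Lemma~\ref{Lemma:smooth}) is what makes this closure possible, and the delicate choice $\alpha = 1/(2\kappa-1)$ is exactly what balances contraction against drift to yield the advertised factor $1 - 1/(2\kappa)$. Once this structural choice is made, the remaining work is routine algebraic bookkeeping, with the stochastic case requiring only care to keep the independent noise contributions from $\stocgx$ and $\stocgy$ tracked separately.
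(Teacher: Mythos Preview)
Your proposal is correct and follows essentially the same approach as the paper: split $\delta_t$ via Young's inequality into a contraction term $\|\y^\star(\x_{t-1})-\y_t\|^2$ (handled by one projected gradient ascent step with $\eta_\y=1/\ell$) and a drift term $\|\y^\star(\x_t)-\y^\star(\x_{t-1})\|^2$ (handled by the $\kappa$-Lipschitzness of $\y^\star$ from Lemma~\ref{Lemma:smooth}), then bound $\|\gradx f(\x_{t-1},\y_{t-1})\|^2$ by $2\|\grad\Phi(\x_{t-1})\|^2+2\ell^2\delta_{t-1}$ via smoothness. The only cosmetic difference is the Young parameter: the paper takes $\alpha=1/(2\kappa-2)$, which makes $(1+\alpha)(1-1/\kappa)=1-1/(2\kappa)$ exactly and then relaxes $1+1/\alpha=2\kappa-1\leq 2\kappa$, whereas you take $\alpha=1/(2\kappa-1)$, which makes $1+1/\alpha=2\kappa$ exactly and then relaxes $(1+\alpha)(1-1/\kappa)\leq 1-1/(2\kappa)$; both land on the stated constants. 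Your observation that the $\sigma^2/(\ell^2 M)$ term technically carries a factor $1+\alpha>1$ is accurate and applies equally to the paper's own derivation; the slack is harmless downstream.
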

\begin{proof}
We first prove the inequality for Algorithm~\ref{Algorithm:TTGDA}. Since $\eta_\y = \frac{1}{\ell}$, we have
\begin{equation*}
(\y - \y_t)^\top\left(\y_t - \y_{t-1} - \tfrac{1}{\ell} \grady f(\x_{t-1}, \y_{t-1})\right) \geq 0, \quad \textnormal{for all } \y \in \YCal. 
\end{equation*}
Rearranging the above inequality with $\y = \y^\star(\x_{t-1})$ yields 
\begin{equation*}
\ell(\y^\star(\x_{t-1}) - \y_t)^\top(\y_t - \y_{t-1}) \geq (\y^\star(\x_{t-1}) - \y_t)^\top \grady f(\x_{t-1}, \y_{t-1}). 
\end{equation*}
Since the function $f(\x_{t-1}, \cdot)$ is $\ell$-smooth and $\mu$-strongly concave, we have
\begin{equation*}
(\y^\star(\x_{t-1}) - \y_t)^\top \grady f(\x_{t-1}, \y_{t-1}) \geq \tfrac{\mu}{2}\|\\y^\star(\x_{t-1}) - \y_{t-1}\|^2 - \tfrac{\ell}{2}\|\y_t - \y_{t-1}\|^2. 
\end{equation*}
In addition, we have
\begin{equation*}
(\y^\star(\x_{t-1}) - \y_t)^\top(\y_t - \y_{t-1}) = \tfrac{1}{2}(\|\y^\star(\x_{t-1}) - \y_{t-1}\|^2 - \|\y^\star(\x_{t-1}) - \y_t\|^2 - \|\y_t - \y_{t-1}\|^2).  
\end{equation*}
Putting these pieces together yields
\begin{equation}\label{inequality:nsc-smooth-neighbor-first}
\|\y^\star(\x_{t-1}) - \y_t\|^2 \leq (1 - \tfrac{1}{\kappa})\delta_{t-1}. 
\end{equation}
Using Young's inequality and the fact that the function $\y^\star(\cdot)$ is $\kappa$-Lipschitz, we have
\begin{eqnarray*}
\delta_t & \leq & (1 + \tfrac{1}{2\kappa - 2})\|\y^\star(\x_{t-1}) - \y_t\|^2 + (2\kappa - 1)\|\y^\star(\x_t) - \y^\star(\x_{t-1})\|^2 \\
& \leq & \tfrac{2\kappa-1}{2\kappa-2}\|\y^\star(\x_{t-1}) - \y_t\|^2 + 2\kappa\|\y^\star(\x_t) - \y^\star(\x_{t-1})\|^2 \\ 
& \overset{\textnormal{Eq.~\eqref{inequality:nsc-smooth-neighbor-first}}}{\leq} & (1-\tfrac{1}{2\kappa})\delta_{t-1} + 2\kappa\|\y^\star(\x_t) - \y^\star(\x_{t-1})\|^2 \\
& \leq & (1-\tfrac{1}{2\kappa})\delta_{t-1} + 2\kappa^3\|\x_t - \x_{t-1}\|^2. 
\end{eqnarray*}
Furthermore, we have
\begin{equation*}
\|\x_t - \x_{t-1}\|^2 = \eta_\x^2\|\gradx f(\x_{t-1}, \y_{t-1})\|^2 \leq 2\eta_\x^2\left(\ell^2 \delta_{t-1} + \|\grad \Phi(\x_{t-1})\|^2\right). 
\end{equation*}
Putting these pieces together yields the inequality for Algorithm~\ref{Algorithm:TTGDA}. 

We now prove the results for Algorithm~\ref{Algorithm:TTSGDA}. Applying the same argument used for proving Eq.~\eqref{inequality:nsc-smooth-neighbor-first}, we have
\begin{equation*}
\EE[\|\y^\star(\x_{t-1}) - \y_t\|^2] \leq (1 - \tfrac{1}{\kappa})\delta_{t-1} + \tfrac{\sigma^2}{\ell^2 M}, 
\end{equation*}
which further implies
\begin{equation*}
\delta_t \leq (1 - \tfrac{1}{2\kappa})\delta_{t-1} + 2\kappa^3\EE[\|\x_t - \x_{t-1}\|^2] + \tfrac{\sigma^2}{\ell^2 M}.
\end{equation*}
Furthermore, we have
\begin{equation*}
\EE[\|\x_t - \x_{t-1}\|^2] = \eta_\x^2 \EE[\|\stocgx^{t-1}\|^2] \leq 2\eta_\x^2\left(\ell^2 \delta_{t-1} + \EE[\|\grad \Phi(\x_{t-1})\|^2]\right) + \tfrac{\eta_\x^2\sigma^2}{M}.  
\end{equation*}
Putting these pieces together yields the inequality for Algorithm~\ref{Algorithm:TTSGDA}. 
\end{proof}
\begin{lemma}\label{Lemma:nsc-smooth-obj}
The iterates $\{\x_t\}_{t \geq 1}$ generated by Algorithm~\ref{Algorithm:TTGDA} satisfy
\begin{equation*}
\Phi(\x_t) \leq \Phi(\x_{t-1}) - \tfrac{7\eta_\x}{16}\|\grad \Phi(\x_{t-1})\|^2 + \tfrac{9\eta_{\x}\ell^2\delta_{t-1}}{16}. 
\end{equation*}
The iterates $\{\x_t\}_{t \geq 1}$ generated by Algorithm~\ref{Algorithm:TTSGDA} satisfy
\begin{equation*}
\EE[\Phi(\x_t)] \leq \EE[\Phi(\x_{t-1})] - \tfrac{7\eta_\x}{16} \EE[\|\grad \Phi(\x_{t-1})\|^2] + \tfrac{9\eta_\x\ell^2\delta_{t-1}}{16} + \tfrac{\eta_\x^2\kappa\ell\sigma^2}{M}. 
\end{equation*}
\end{lemma}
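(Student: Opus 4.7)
The plan is to derive this bound as a direct corollary of Lemma~\ref{Lemma:nsc-smooth-descent}, by controlling the gradient-approximation error $\|\grad\Phi(\x_{t-1}) - \gradx f(\x_{t-1},\y_{t-1})\|^2$ in terms of $\delta_{t-1}$ and then plugging in the specific value $\eta_\x = 1/(16(\kappa+1)^2\ell)$ to absorb the $O(\eta_\x^2)$ terms into the leading $O(\eta_\x)$ coefficients.

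First, I would recall from Lemma~\ref{Lemma:smooth} (under Assumption~\ref{Assumption:nsc-smooth}) that $\Phi$ is differentiable with $\grad\Phi(\x) = \gradx f(\x,\y^\star(\x))$. Combining this with the $\ell$-smoothness of $f$ yields the key envelope-error bound
\begin{equation*}
\|\grad\Phi(\x_{t-1}) - \gradx f(\x_{t-1},\y_{t-1})\|^2 = \|\gradx f(\x_{t-1},\y^\star(\x_{t-1})) - \gradx f(\x_{t-1},\y_{t-1})\|^2 \leq \ell^2\|\y^\star(\x_{t-1}) - \y_{t-1}\|^2 = \ell^2\delta_{t-1}.
\end{equation*}
In the stochastic case, the same inequality holds pointwise and passes through the expectation unchanged, producing $\ell^2\delta_{t-1}$ after taking expectations (recall that $\delta_{t-1}$ is defined as $\EE[\|\y^\star(\x_{t-1})-\y_{t-1}\|^2]$ in that regime).

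Next, I would substitute this bound into the descent inequality of Lemma~\ref{Lemma:nsc-smooth-descent} and verify the numerical coefficients. With $\eta_\x = 1/(16(\kappa+1)^2\ell)$ one computes $2\eta_\x^2\kappa\ell = \eta_\x \cdot \kappa/(8(\kappa+1)^2)$. Since $\kappa \geq 1$ implies $\kappa/(\kappa+1)^2 \leq 1/4$, we get $2\eta_\x^2\kappa\ell \leq \eta_\x/32$, hence
\begin{equation*}
\tfrac{\eta_\x}{2} - 2\eta_\x^2\kappa\ell \;\geq\; \tfrac{\eta_\x}{2} - \tfrac{\eta_\x}{32} \;\geq\; \tfrac{7\eta_\x}{16}, \qquad \tfrac{\eta_\x}{2} + 2\eta_\x^2\kappa\ell \;\leq\; \tfrac{\eta_\x}{2} + \tfrac{\eta_\x}{32} \;\leq\; \tfrac{9\eta_\x}{16}.
\end{equation*}
Putting these two coefficient bounds into Lemma~\ref{Lemma:nsc-smooth-descent} together with the envelope-error bound immediately yields the deterministic inequality for Algorithm~\ref{Algorithm:TTGDA}.

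For Algorithm~\ref{Algorithm:TTSGDA}, I would repeat exactly the same substitution on the second conclusion of Lemma~\ref{Lemma:nsc-smooth-descent}. The additional variance term $\eta_\x^2\kappa\ell\sigma^2/M$ there is already in the desired form and simply passes through, giving the stated bound. There is no genuine obstacle here: the lemma is essentially an algebraic consolidation of Lemma~\ref{Lemma:nsc-smooth-descent} with the Lipschitz-gradient estimate. The only thing to be careful about is verifying that the coefficient $2\eta_\x^2\kappa\ell$ is dominated by $\eta_\x/32$, which is the reason the particular constant $1/(16(\kappa+1)^2\ell)$ was chosen for $\eta_\x$.
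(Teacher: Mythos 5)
Your proposal is correct and follows essentially the same route as the paper: bound $\|\grad\Phi(\x_{t-1}) - \gradx f(\x_{t-1},\y_{t-1})\|^2 \leq \ell^2\delta_{t-1}$ via $\grad\Phi(\cdot) = \gradx f(\cdot,\y^\star(\cdot))$ and $\ell$-smoothness, then plug into Lemma~\ref{Lemma:nsc-smooth-descent} after checking that $\eta_\x = \tfrac{1}{16(\kappa+1)^2\ell}$ gives $\tfrac{7\eta_\x}{16} \leq \tfrac{\eta_\x}{2} - 2\eta_\x^2\kappa\ell$ and $\tfrac{\eta_\x}{2} + 2\eta_\x^2\kappa\ell \leq \tfrac{9\eta_\x}{16}$. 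Your explicit verification of the stepsize inequalities (via $\kappa/(\kappa+1)^2 \leq 1/4$) is slightly more detailed than the paper's, but the argument is the same.
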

\begin{proof}
Since $\eta_\x = \frac{1}{16(\kappa+1)^2\ell}$ in Algorithms~\ref{Algorithm:TTGDA} and~\ref{Algorithm:TTSGDA}, we have
\begin{equation}\label{inequality:nsc-smooth-stepsize}
\tfrac{7\eta_\x}{16} \leq \tfrac{\eta_\x}{2} - 2\eta_\x^2\kappa\ell \leq \tfrac{\eta_\x}{2} + 2\eta_\x^2\kappa\ell \leq \tfrac{9\eta_\x}{16}.  
\end{equation}
We first prove the results for Algorithm~\ref{Algorithm:TTGDA}. By combining Eq.~\eqref{inequality:nsc-smooth-stepsize} with the first inequality in Lemma~\ref{Lemma:nsc-smooth-descent}, we have
\begin{equation*}
\Phi(\x_t) \leq \Phi(\x_{t-1}) - \tfrac{7\eta_\x}{16}\|\grad \Phi(\x_{t-1})\|^2 + \tfrac{9\eta_\x}{16}\|\grad\Phi(\x_{t-1}) - \gradx f(\x_{t-1}, \y_{t-1})\|^2. 
\end{equation*}
Since $\grad \Phi(\x_{t-1}) = \gradx f(\x_{t-1}, \y^\star(\x_{t-1}))$, we have
\begin{equation*}
\|\grad\Phi(\x_{t-1}) - \gradx f(\x_{t-1}, \y_{t-1})\|^2 \leq \ell^2\|\y^\star(\x_{t-1}) - \y_{t-1}\|^2 = \ell^2\delta_{t-1}. 
\end{equation*}
Putting these pieces together yields the inequality for Algorithm~\ref{Algorithm:TTGDA}. 

We then prove the results for Algorithm~\ref{Algorithm:TTSGDA}. By combining Eq.~\eqref{inequality:nsc-smooth-stepsize} with the second inequality in Lemma~\ref{Lemma:nsc-smooth-descent}, we have
\begin{equation*}
\EE[\Phi(\x_t)] \leq \EE[\Phi(\x_{t-1})] - \tfrac{7\eta_\x}{16}\EE[\|\grad \Phi(\x_{t-1})\|^2] + \tfrac{9\eta_\x}{16}\EE[\|\grad\Phi(\x_{t-1}) - \gradx f(\x_{t-1}, \y_{t-1})\|^2] + \tfrac{\eta_\x^2\kappa\ell\sigma^2}{M}. 
\end{equation*}
Since $\grad \Phi(\x_{t-1}) = \gradx f(\x_{t-1}, \y^\star(\x_{t-1}))$, we have
\begin{equation*}
\EE[\|\grad\Phi(\x_{t-1}) - \gradx f(\x_{t-1}, \y_{t-1})\|^2] \leq \ell^2\EE[\|\y^\star(\x_{t-1}) - \y_{t-1}\|^2] = \ell^2\delta_{t-1}. 
\end{equation*}
Putting these pieces together yields the inequality for Algorithm~\ref{Algorithm:TTSGDA}. 
\end{proof}

\paragraph{Proof of Theorem~\ref{Thm:nsc-smooth}.} For simplicity, we define $\gamma = 1 - \frac{1}{2\kappa} + 4\kappa^3\ell^2\eta_\x^2$. 

We first prove the results for Algorithm~\ref{Algorithm:TTGDA}. Since $\delta_t = \|\y^\star(\x_t) - \y_t\|^2$, we have $\delta_t \leq D^2$. Performing the first inequality in Lemma~\ref{Lemma:nsc-smooth-neighbor} recursively and using $\delta_0 \leq D^2$ yields
\begin{equation}\label{inequality:nsc-smooth-first}
\delta_t \leq \gamma^t D^2 + 4\kappa^3\eta_\x^2\left(\sum_{j=0}^{t-1} \gamma^{t-1-j} \|\grad \Phi(\x_j)\|^2 \right). 
\end{equation} 
Combining Eq.~\eqref{inequality:nsc-smooth-first} with the first inequality in Lemma~\ref{Lemma:nsc-smooth-obj} yields
\begin{equation}\label{inequality:nsc-smooth-second}
\Phi(\x_t) \leq \Phi(\x_{t-1}) - \tfrac{7\eta_\x}{16} \|\grad \Phi(\x_{t-1})\|^2 + \tfrac{9\eta_\x\ell^2\gamma^{t-1} D^2}{16} + \tfrac{9\eta_\x^3\ell^2\kappa^3}{4}\left(\sum_{j=0}^{t-2} \gamma^{t-2-j} \|\grad \Phi(\x_j)\|^2 \right).
\end{equation}
Summing Eq.~\eqref{inequality:nsc-smooth-second} over $t=1, 2, \ldots, T+1$ and rearranging the terms yields
\begin{equation*}
\Phi(\x_{T+1}) \leq \Phi(\x_0) - \tfrac{7\eta_{\x}}{16}\left(\sum_{t=0}^T \|\grad \Phi(\x_t)\|^2\right) + \tfrac{9\eta_{\x}\ell^2 D^2}{16}\left(\sum_{t=0}^T \gamma^t \right) + \tfrac{9\eta_\x^3\ell^2\kappa^3}{4}\left(\sum_{t=1}^{T+1}\sum_{j=0}^{t-2} \gamma^{t-2-j}\|\grad \Phi(\x_j)\|^2\right).
\end{equation*}
Since $\eta_{\x} = \frac{1}{16(\kappa+1)^2\ell}$, we have $\gamma \leq 1 - \frac{1}{4\kappa}$ and $\frac{9\eta_\x^3\ell^2\kappa^3}{4} \leq \frac{9\eta_\x}{1024\kappa}$. This yields $\sum_{t=0}^T \gamma^t \leq 4\kappa$ and
\begin{equation*}
\sum_{t=1}^{T+1}\sum_{j=0}^{t-2} \gamma^{t-2-j} \|\grad \Phi(\x_j)\|^2 \leq 4\kappa\left(\sum_{t=0}^T \|\grad \Phi(\x_t)\|^2\right) 
\end{equation*}
Putting these pieces together yields
\begin{equation*}
\Phi(\x_{T+1}) \leq \Phi(\x_0) - \tfrac{103\eta_\x}{256}\left(\sum_{t=0}^T \|\grad \Phi(\x_t)\|^2\right) + \tfrac{9\eta_{\x}\kappa\ell^2 D^2}{4}. 
\end{equation*}
By the definition of $\Delta_\Phi$, we have
\begin{equation*}
\tfrac{1}{T+1}\left(\sum_{t=0}^T \|\grad \Phi(\x_t)\|^2\right) \leq \tfrac{256(\Phi(\x_0) - \Phi(\x_{T+1}))}{103\eta_\x(T+1)} + \tfrac{576\kappa\ell^2D^2}{103(T+1)} \leq \tfrac{128\kappa^2 \ell \Delta_\Phi + 5\kappa\ell^2 D^2}{T+1}. 
\end{equation*}
This implies that the number of gradient evaluations required by Algorithm \ref{Algorithm:TTGDA} to return an $\epsilon$-stationary point is
\begin{equation*}
O\left(\tfrac{\kappa^2\ell\Delta_\Phi + \kappa\ell^2 D^2}{\epsilon^2}\right). 
\end{equation*}
We then prove the results for Algorithm~\ref{Algorithm:TTSGDA}. Applying the same argument used for analyzing Algorithm~\ref{Algorithm:TTGDA} but with the second inequalities in Lemmas~\ref{Lemma:nsc-smooth-neighbor} and~\ref{Lemma:nsc-smooth-obj}, we have 
\begin{eqnarray*}
\tfrac{1}{T+1}\left(\sum_{t=0}^T \EE[\|\grad \Phi(\x_t)\|^2]\right) & \leq & \tfrac{256(\Phi(\x_0) - \EE\left[\Phi(\x_{T+1})\right])}{103\eta_\x(T+1)} + \tfrac{576\kappa\ell^2D^2}{103(T+1)} + \tfrac{2304\kappa\sigma^2}{103M}\\
& \leq & \tfrac{128\kappa^2 \ell\Delta_\Phi + 5\kappa\ell^2 D^2}{T+1} + \tfrac{24\kappa\sigma^2}{M}. 
\end{eqnarray*}
This implies that the number of stochastic gradient evaluations required by Algorithm \ref{Algorithm:TTSGDA} to return an $\epsilon$-stationary point is
\begin{equation*}
O\left(\tfrac{\kappa^2\ell\Delta_\Phi + \kappa\ell^2 D^2}{\epsilon^2}\max\left\{1, \tfrac{\kappa\sigma^2}{\epsilon^2}\right\}\right). 
\end{equation*}
This completes the proof. 

\section{Smooth and Nonconvex-Concave Setting}
We provide the proof for Theorem~\ref{Thm:nc-smooth}. Throughout this subsection, we set
\begin{equation*}
\eta_\x^t \equiv \eta_\x = \min\{\tfrac{\epsilon^2}{80\ell L^2}, \tfrac{\epsilon^4}{4096\ell^3 L^2 D^2}\}, \quad \eta_\y^t \equiv \eta_\y = \tfrac{1}{\ell}, 
\end{equation*}
for Algorithm~\ref{Algorithm:TTGDA}, and 
\begin{equation*}
\eta_\x^t \equiv \eta_\x = \min\left\{\tfrac{\epsilon^2}{80\ell(L^2 + \sigma^2)}, \tfrac{\epsilon^4}{8192\ell^3 (L^2+\sigma^2)D^2}, \tfrac{\epsilon^6}{131072\ell^3 (L^2+\sigma^2) D^2\sigma^2}\right\}, \quad \eta_\y^t \equiv \eta_\y = \min\left\{\tfrac{1}{2\ell}, \tfrac{\epsilon^2}{32\ell\sigma^2}\right\}, 
\end{equation*}
for Algorithm~\ref{Algorithm:TTSGDA}. We define $\Delta_t = \Phi(\x_t) - f(\x_t, \y_t)$ and $\Delta_t = \EE[\Phi(\x_t) - f(\x_t, \y_t)]$ for the iterates generated by Algorithms~\ref{Algorithm:TTGDA} and~\ref{Algorithm:TTSGDA} respectively. 
\begin{lemma}\label{Lemma:nc-smooth-descent}
The iterates $\{\x_t\}_{t \geq 1}$ generated by Algorithm~\ref{Algorithm:TTGDA} satisfy
\begin{equation*}
\Phi_{1/2\ell}(\x_t) \leq \Phi_{1/2\ell}(\x_{t-1}) + 2\eta_\x\ell\Delta_{t-1} - \tfrac{1}{4}\eta_\x\|\grad \Phi_{1/2\ell}(\x_{t-1})\|^2 + \eta_\x^2 \ell L^2. 
\end{equation*}
The iterates $\{\x_t\}_{t \geq 1}$ generated by Algorithm~\ref{Algorithm:TTSGDA} satisfy
\begin{equation*}
\EE[\Phi_{1/2\ell}(\x_t)] \leq \EE[\Phi_{1/2\ell}(\x_{t-1})] + 2\eta_\x\ell\Delta_{t-1} - \tfrac{1}{4}\eta_\x\EE[\|\grad \Phi_{1/2\ell}(\x_{t-1})\|^2] + \eta_\x^2 \ell(L^2 + \sigma^2).
\end{equation*}
\end{lemma}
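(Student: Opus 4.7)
The plan is to view TTGDA as an inexact subgradient step for the weakly convex envelope and exploit the standard three-point identity around the proximal point $\hat{\x}_{t-1} := \prox_{\Phi/2\ell}(\x_{t-1})$. By Lemma~\ref{Lemma:ME-smooth}, $\Phi_{1/2\ell}$ is well-defined, $\ell$-smooth, and satisfies $\grad\Phi_{1/2\ell}(\x_{t-1}) = 2\ell(\x_{t-1}-\hat{\x}_{t-1})$, so $\ell^2\|\hat{\x}_{t-1}-\x_{t-1}\|^2 = \tfrac{1}{4}\|\grad\Phi_{1/2\ell}(\x_{t-1})\|^2$. The envelope inequality gives
\[
\Phi_{1/2\ell}(\x_t) \;\leq\; \Phi(\hat{\x}_{t-1}) + \ell\|\hat{\x}_{t-1}-\x_t\|^2, \qquad \Phi_{1/2\ell}(\x_{t-1}) \;=\; \Phi(\hat{\x}_{t-1}) + \ell\|\hat{\x}_{t-1}-\x_{t-1}\|^2,
\]
so after subtraction I would expand $\|\hat{\x}_{t-1}-\x_t\|^2$ via the update $\x_t-\x_{t-1} = -\eta_\x \gradx f(\x_{t-1},\y_{t-1})$, producing a cross-term $2\ell\eta_\x(\hat{\x}_{t-1}-\x_{t-1})^\top \gradx f(\x_{t-1},\y_{t-1})$ and a quadratic term $\ell\eta_\x^2\|\gradx f(\x_{t-1},\y_{t-1})\|^2 \leq \ell\eta_\x^2 L^2$ by $L$-Lipschitzness of $f$ in $\x$.

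The real work is controlling the cross-term. First I would use that $f(\cdot,\y_{t-1})$ is $\ell$-smooth hence $\ell$-weakly convex to write
\[
(\hat{\x}_{t-1}-\x_{t-1})^\top \gradx f(\x_{t-1},\y_{t-1}) \;\leq\; f(\hat{\x}_{t-1},\y_{t-1}) - f(\x_{t-1},\y_{t-1}) + \tfrac{\ell}{2}\|\hat{\x}_{t-1}-\x_{t-1}\|^2.
\]
Then bound $f(\hat{\x}_{t-1},\y_{t-1}) \leq \Phi(\hat{\x}_{t-1})$, insert and subtract $\Phi(\x_{t-1})$ to produce $\Delta_{t-1} = \Phi(\x_{t-1})-f(\x_{t-1},\y_{t-1})$, and finally invoke the definition of $\hat{\x}_{t-1}$ to obtain $\Phi(\hat{\x}_{t-1})-\Phi(\x_{t-1}) \leq -\ell\|\hat{\x}_{t-1}-\x_{t-1}\|^2$. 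Combined, these inequalities collapse to
\[
(\hat{\x}_{t-1}-\x_{t-1})^\top \gradx f(\x_{t-1},\y_{t-1}) \;\leq\; -\tfrac{\ell}{2}\|\hat{\x}_{t-1}-\x_{t-1}\|^2 + \Delta_{t-1}.
\]
Multiplying by $2\ell\eta_\x$ and substituting $\ell^2\|\hat{\x}_{t-1}-\x_{t-1}\|^2 = \tfrac{1}{4}\|\grad\Phi_{1/2\ell}(\x_{t-1})\|^2$ delivers the deterministic claim after adding back the $\ell\eta_\x^2 L^2$ term.

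For the stochastic statement I would repeat the argument with $\stocgx^{t-1}$ in place of $\gradx f(\x_{t-1},\y_{t-1})$ and then take conditional expectation given $(\x_{t-1},\y_{t-1})$. The cross-term handles itself since $\EE[\stocgx^{t-1}\mid \x_{t-1},\y_{t-1}] = \gradx f(\x_{t-1},\y_{t-1})$ by Lemma~\ref{Lemma:SG-unbiased} (recall $M=1$ in Theorem~\ref{Thm:nc-smooth}), while the squared-norm term splits via the bias-variance decomposition into $\|\gradx f(\x_{t-1},\y_{t-1})\|^2 + \EE[\|\stocgx^{t-1}-\gradx f(\x_{t-1},\y_{t-1})\|^2] \leq L^2 + \sigma^2$. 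Taking total expectation and noting that $\hat{\x}_{t-1}$ is measurable with respect to $\x_{t-1}$ gives the second displayed inequality.

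The only subtlety I expect is making sure the telescoping identity and the optimality inequality for $\hat{\x}_{t-1}$ are combined with the right signs so that $\Delta_{t-1}$ appears cleanly and $\tfrac{\ell}{2}\|\hat{\x}_{t-1}-\x_{t-1}\|^2$ (from weak convexity) is exactly absorbed, leaving the coefficient $-\tfrac{1}{4}\eta_\x$ in front of $\|\grad\Phi_{1/2\ell}(\x_{t-1})\|^2$; the rest is bookkeeping.
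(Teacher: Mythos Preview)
Your proposal is correct and follows essentially the same route as the paper's proof: both start from $\Phi_{1/2\ell}(\x_t)\le \Phi(\hat{\x}_{t-1})+\ell\|\hat{\x}_{t-1}-\x_t\|^2$, expand via the update to isolate the cross-term $2\eta_\x\ell\langle\hat{\x}_{t-1}-\x_{t-1},\gradx f(\x_{t-1},\y_{t-1})\rangle$ and the $\eta_\x^2\ell L^2$ remainder, then bound the cross-term using $\ell$-weak convexity of $f(\cdot,\y_{t-1})$, the inequality $f(\hat{\x}_{t-1},\y_{t-1})\le\Phi(\hat{\x}_{t-1})$, and the prox optimality $\Phi(\hat{\x}_{t-1})+\ell\|\hat{\x}_{t-1}-\x_{t-1}\|^2\le\Phi(\x_{t-1})$ to produce $\Delta_{t-1}-\tfrac{1}{8\ell}\|\grad\Phi_{1/2\ell}(\x_{t-1})\|^2$. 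The stochastic argument via conditional expectation and bias--variance decomposition with $M=1$ also matches the paper exactly.
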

\begin{proof}
We first prove the results for Algorithm~\ref{Algorithm:TTGDA}. Letting $\hat{\x}_{t-1} = \prox_{\Phi/2\ell}(\x_{t-1})$, we have
\begin{equation*}
\Phi_{1/2\ell}(\x_t) \leq \Phi(\hat{\x}_{t-1}) + \ell\|\hat{\x}_{t-1} - \x_t\|^2. 
\end{equation*}
Since the function $f(\cdot, \y)$ is $L$-Lipschitz for each $\y \in \YCal$, we have $\|\gradx f(\x, \y)\| \leq L$ for each pair of $\x$ and $\y \in \YCal$. This yields 
\begin{equation*}
\|\hat{\x}_{t-1} - \x_t\|^2 \leq \|\hat{\x}_{t-1} - \x_{t-1}\|^2 + 2\eta_\x \langle\hat{\x}_{t-1} - \x_{t-1}, \gradx f(\x_{t-1}, \y_{t-1})\rangle + \eta_\x^2 L^2. 
\end{equation*}
Combining the above two inequalities yields 
\begin{equation}\label{inequality:nc-smooth-descent-first}
\Phi_{1/2\ell}(\x_t) \leq \Phi_{1/2\ell}(\x_{t-1}) + 2\eta_\x \ell\langle\hat{\x}_{t-1} - \x_{t-1}, \gradx f(\x_{t-1}, \y_{t-1})\rangle + \eta_\x^2 \ell L^2.
\end{equation}
Since the function $f(\cdot, \y_{t-1})$ is $\ell$-smooth, we have
\begin{equation}\label{inequality:nc-smooth-descent-second}
\langle\hat{\x}_{t-1} - \x_{t-1}, \gradx f(\x_{t-1}, \y_{t-1})\rangle \leq f(\hat{\x}_{t-1}, \y_{t-1}) - f(\x_{t-1}, \y_{t-1}) + \tfrac{\ell}{2}\|\hat{\x}_{t-1} - \x_{t-1}\|^2. 
\end{equation}
Since $\Phi(\x) = \max_{\y \in \YCal} f(\x, \y)$ and $\hat{\x}_{t-1} = \prox_{\Phi/2\ell}(\x_{t-1})$, we have
\begin{equation*}
f(\hat{\x}_{t-1}, \y_{t-1}) + \ell\|\hat{\x}_{t-1} - \x_{t-1}\|^2 \leq \Phi(\hat{\x}_{t-1}) + \ell\|\hat{\x}_{t-1} - \x_{t-1}\|^2 \leq \Phi(\x_{t-1}). 
\end{equation*}
Since $\|\grad\Phi_{1/2\ell}(\x_{t-1})\| = 2\ell\|\hat{\x}_{t-1} - \x_{t-1}\|$, we have
\begin{equation}\label{inequality:nc-smooth-descent-third}
f(\hat{\x}_{t-1}, \y_{t-1}) - f(\x_{t-1}, \y_{t-1}) + \tfrac{\ell}{2}\|\hat{\x}_{t-1} - \x_{t-1}\|^2 \leq \Phi(\x_{t-1}) - f(\x_{t-1}, \y_{t-1}) - \tfrac{1}{8\ell}\|\grad\Phi_{1/2\ell}(\x_{t-1})\|^2.
\end{equation}
Plugging Eq.~\eqref{inequality:nc-smooth-descent-second} and Eq.~\eqref{inequality:nc-smooth-descent-third} into Eq.~\eqref{inequality:nc-smooth-descent-first} yields the inequality for Algorithm~\ref{Algorithm:TTGDA}.  

We then prove the results for Algorithm~\ref{Algorithm:TTSGDA}. By using the same argument, we have
\begin{eqnarray*}
\|\hat{\x}_{t-1} - \x_t\|^2 & = & \|\hat{\x}_{t-1} - \x_{t-1}\|^2 + 2\eta_\x \langle \hat{\x}_{t-1} - \x_{t-1}, \stocgx^{t-1}\rangle + \eta_\x^2\|\stocgx^{t-1} - \gradx f(\x_{t-1}, \y_{t-1})\|^2 \\
& & + 2\eta_\x^2\langle \stocgx^{t-1} - \gradx f(\x_{t-1}, \y_{t-1}), \gradx f(\x_{t-1}, \y_{t-1})\rangle +  \eta_\x^2L^2. 
\end{eqnarray*}
Conditioned on the iterate $(\x_{t-1}, \y_{t-1})$, we take the expectation of both sides of the above inequality (where the expectation is taken over the randomness in the selection of samples $\{\xi_i^{t-1}\}_{i=1}^M$) and use Lemma~\ref{Lemma:SG-unbiased} with $M=1$ to yield
\begin{equation*}
\EE[\|\hat{\x}_{t-1} - \x_t\|^2 \mid \x_{t-1}, \y_{t-1}] \leq \|\hat{\x}_{t-1} - \x_{t-1}\|^2 + 2\eta_\x\langle\hat{\x}_{t-1} - \x_{t-1}, \gradx f(\x_{t-1}, \y_{t-1})\rangle + \eta_\x^2 (L^2+\sigma^2). 
\end{equation*}
We take the expectation of both sides of the above inequality (where the expectation is taken over the randomness in the selection of all previous samples). This yields
\begin{equation*}
\EE[\|\hat{\x}_{t-1} - \x_t\|^2] \leq \EE[\|\hat{\x}_{t-1} - \x_{t-1}\|^2] + 2\eta_\x \EE[\langle \hat{\x}_{t-1} - \x_{t-1}, \gradx f(\x_{t-1}, \y_{t-1})\rangle] + \eta_\x^2(L^2 + \sigma^2). 
\end{equation*}
Applying the same argument used for proving Eq.~\eqref{inequality:nc-smooth-descent-first}, we have
\begin{equation}\label{inequality:nc-smooth-descent-fourth}
\EE[\Phi_{1/2\ell}(\x_t)] \leq \EE[\Phi_{1/2\ell}(\x_{t-1})] + 2\eta_\x\ell\EE[\langle \hat{\x}_{t-1} - \x_{t-1}, \gradx f(\x_{t-1}, \y_{t-1})\rangle] + \eta_\x^2\ell(L^2 + \sigma^2). 
\end{equation}
Plugging Eq.~\eqref{inequality:nc-smooth-descent-second} and Eq.~\eqref{inequality:nc-smooth-descent-third} into Eq.~\eqref{inequality:nc-smooth-descent-fourth} yields the inequality for Algorithm~\ref{Algorithm:TTSGDA}.  
\end{proof}
\begin{lemma}\label{Lemma:nc-smooth-neighbor}
The iterates $\{\x_t\}_{t \geq 1}$ generated by Algorithm~\ref{Algorithm:TTGDA} satisfy for $\forall s \leq t-1$ that
\begin{equation*}
\Delta_{t-1} \leq \eta_\x L^2(2t-2s-1) + \tfrac{\ell}{2}(\|\y_{t-1} - \y^\star(\x_s)\|^2 - \|\y_t - \y^\star(\x_s)\|^2) + (f(\x_t, \y_t) - f(\x_{t-1}, \y_{t-1})). 
\end{equation*}
The iterates $\{\x_t\}_{t \geq 1}$ generated by Algorithm~\ref{Algorithm:TTSGDA} satisfy for $\forall s \leq t-1$ that
\begin{eqnarray*}
\Delta_{t-1} & \leq & \eta_\x (L^2+\sigma^2)(2t-2s-1) + \tfrac{1}{2\eta_\y}(\EE[\|\y_{t-1} - \y^\star(\x_s)\|^2] - \EE[\|\y_t - \y^\star(\x_s)\|^2]) \\
& & + \EE[f(\x_t, \y_t) - f(\x_{t-1}, \y_{t-1})] + \eta_\y\sigma^2.    
\end{eqnarray*}
\end{lemma}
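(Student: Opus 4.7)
My plan is to expand $\Delta_{t-1}$ by inserting the pivot value $f(\x_{t-1}, \y^\star(\x_s))$:
\begin{equation*}
\Delta_{t-1} = \underbrace{[\Phi(\x_{t-1}) - f(\x_{t-1}, \y^\star(\x_s))]}_{\textnormal{(I)}} + \underbrace{[f(\x_{t-1}, \y^\star(\x_s)) - f(\x_{t-1}, \y_{t-1})]}_{\textnormal{(II)}}.
\end{equation*}
Term (I) captures the drift of the max-value between indices $s$ and $t-1$ in the $\x$-coordinate, while (II) is a concave-maximization suboptimality at the frozen iterate $\x_{t-1}$ against the fixed comparator $\y^\star(\x_s)$, which will eventually produce the squared-distance telescope in the statement. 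The extra $f(\x_t, \y_t) - f(\x_{t-1}, \y_{t-1})$ will emerge only at the last step, when I swap the $\x$-argument of an intermediate $f(\x_{t-1}, \y_t)$ for $\x_t$.

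For (I) I invoke Lemma~\ref{Lemma:smooth} (which gives that $\Phi$ is $L$-Lipschitz under Assumption~\ref{Assumption:nc-smooth}) together with the $L$-Lipschitz continuity of $f(\cdot, \y^\star(\x_s))$, writing $\textnormal{(I)} = [\Phi(\x_{t-1}) - \Phi(\x_s)] + [f(\x_s, \y^\star(\x_s)) - f(\x_{t-1}, \y^\star(\x_s))] \leq 2L\|\x_{t-1} - \x_s\|$; combined with $\|\x_j - \x_{j-1}\| \leq \eta_\x L$ (from $\|\gradx f\| \leq L$), the triangle inequality yields $\textnormal{(I)} \leq 2\eta_\x L^2(t-1-s)$. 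For (II) concavity in $\y$ gives $\textnormal{(II)} \leq \langle\grady f(\x_{t-1}, \y_{t-1}), \y^\star(\x_s) - \y_{t-1}\rangle$, and the first-order optimality of the projection $\y_t = \proj_\YCal(\y_{t-1} + \eta_\y \grady f(\x_{t-1}, \y_{t-1}))$ tested against $\y^\star(\x_s) \in \YCal$ produces the three-point relation
\begin{equation*}
\eta_\y\langle\grady f, \y^\star(\x_s) - \y_{t-1}\rangle \leq \tfrac{1}{2}\bigl(\|\y_{t-1} - \y^\star(\x_s)\|^2 - \|\y_t - \y^\star(\x_s)\|^2 - \|\y_t - \y_{t-1}\|^2\bigr) + \eta_\y\langle\grady f, \y_t - \y_{t-1}\rangle.
\end{equation*}
The $\ell$-smooth descent bound $\langle\grady f, \y_t - \y_{t-1}\rangle \leq f(\x_{t-1}, \y_t) - f(\x_{t-1}, \y_{t-1}) + (\ell/2)\|\y_t - \y_{t-1}\|^2$ exactly cancels the squared step when $\eta_\y = 1/\ell$. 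Dividing by $\eta_\y$ turns the quadratic prefactor into $\ell/2$, and the Lipschitz bridge $f(\x_{t-1}, \y_t) - f(\x_t, \y_t) \leq L\|\x_{t-1} - \x_t\| \leq \eta_\x L^2$ replaces the nontelescoping $f(\x_{t-1}, \y_t)$ by $f(\x_t, \y_t)$. Summing the $\x$-drift contributions yields $2\eta_\x L^2(t-1-s) + \eta_\x L^2 = \eta_\x L^2(2t-2s-1)$, matching the deterministic claim.

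For TTSGDA I take conditional expectations given the history up to $(\x_{t-1}, \y_{t-1})$ and replay the same projection and smoothness steps with $\stocgy^{t-1}$ in place of $\grady f$. This generates a cross term $\eta_\y\langle\stocgy^{t-1} - \grady f(\x_{t-1}, \y_{t-1}), \y_t - \y_{t-1}\rangle$, which I handle by Young's inequality with parameter $c = 2\eta_\y$: the resulting noise piece $\eta_\y^2\|\stocgy^{t-1} - \grady f\|^2$ has conditional expectation at most $\eta_\y^2\sigma^2$ (by Lemma~\ref{Lemma:SG-unbiased} with $M = 1$), and after dividing the projection identity by $\eta_\y$ this contributes exactly the stated $\eta_\y\sigma^2$. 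On the $\x$-side I use $\EE[\|\stocgx^{t-1}\|^2] \leq L^2+\sigma^2$, Jensen's inequality, and the AM-GM estimate $2L\sqrt{L^2+\sigma^2} \leq 2L^2+\sigma^2$ to upgrade the deterministic $\eta_\x L^2(2t-2s-1)$ drift into $\eta_\x(L^2+\sigma^2)(2t-2s-1)$.

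The main obstacle is the calibration of the stochastic cross term. The Young parameter $c = 2\eta_\y$ and the stepsize restriction $\eta_\y \leq 1/(2\ell)$ must match exactly: the residual coefficient on $\|\y_t - \y_{t-1}\|^2$ after dividing by $\eta_\y$ is $\ell/2 + 1/(4\eta_\y) - 1/(2\eta_\y) = \ell/2 - 1/(4\eta_\y)$, which is nonpositive precisely when $\eta_\y \leq 1/(2\ell)$, so the term can be dropped. A looser stepsize or a different Young parameter would either leave an uncontrolled $\|\y_t - \y_{t-1}\|^2$ term or degrade the noise summand to the wrong order.
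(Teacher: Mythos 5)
Your proposal is correct and follows essentially the same route as the paper's proof: split $\Delta_{t-1}$ using the pivot $f(\x_{t-1},\y^\star(\x_s))$, bound the $\x$-drift by $L$-Lipschitzness and $\|\x_k-\x_{k-1}\|\le\eta_\x L$ (resp.\ $\EE\|\stocgx^k\|\le\sqrt{L^2+\sigma^2}$), control the $\y$-suboptimality through the projection three-point inequality combined with concavity and $\ell$-smoothness (with the exact cancellation at $\eta_\y=1/\ell$, and Young's inequality plus $\eta_\y\le 1/(2\ell)$ and the variance bound $\sigma^2$ in the stochastic case), and finish with the Lipschitz bridge $f(\x_{t-1},\y_t)\to f(\x_t,\y_t)$, which reproduces the paper's constants $\eta_\x L^2(2t-2s-1)$, $\tfrac{\ell}{2}$ (resp.\ $\tfrac{1}{2\eta_\y}$), and $\eta_\y\sigma^2$. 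The only differences are cosmetic bookkeeping (a two-term decomposition versus the paper's $\mathbf{A}+\mathbf{B}+\mathbf{C}$ plus telescoping term, and bounding the suboptimality at $\y_{t-1}$ via concavity rather than at $\y_t$ directly), so no gap to report.
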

\begin{proof}
We first prove the results for Algorithm~\ref{Algorithm:TTGDA}. By the definition of $\Delta_{t-1}$, we have
\begin{equation*}
\Delta_{t-1} \leq (f(\x_t, \y_t) - f(\x_{t-1}, \y_{t-1})) + \textbf{A} + \textbf{B} + \textbf{C}, 
\end{equation*}
where 
\begin{eqnarray*}
\textbf{A} & = & f(\x_{t-1}, \y_t) - f(\x_t, \y_t), \\ 
\textbf{B} & = & f(\x_{t-1}, \y^\star(\x_s)) - f(\x_{t-1}, \y_t), \\ 
\textbf{C} & = & f(\x_{t-1}, \y^\star(\x_{t-1})) - f(\x_{t-1}, \y^\star(\x_s)). 
\end{eqnarray*}
Since $f(\cdot, \y)$ is $L$-Lipschitz for each $\y \in \YCal$, we have $\|\gradx f(\x, \y)\| \leq L$ for each pair of $\x$ and $\y \in \YCal$. This yields  
\begin{equation*}
\textbf{A} \leq L\|\x_{t-1} - \x_t\| = \eta_\x L\|\gradx f(\x_{t-1}, \y_{t-1})\| \leq \eta_\x L^2. 
\end{equation*}
By using the same argument and that $f(\x_s, \y^\star(\x_s)) \geq f(\x_s, \y)$, we have
\begin{eqnarray*}
\textbf{C} & = & f(\x_{t-1}, \y^\star(\x_{t-1})) - f(\x_s, \y^\star(\x_{t-1})) + f(\x_s, \y^\star(\x_{t-1})) - f(\x_{t-1}, \y^\star(\x_s)) \\
& \leq & f(\x_{t-1}, \y^\star(\x_{t-1})) - f(\x_s, \y^\star(\x_{t-1})) + f(\x_s, \y^\star(\x_s)) - f(\x_{t-1}, \y^\star(\x_s)) \\
& \leq & 2L\|\x_{t-1} - \x_s\| \leq 2\eta_\x L \left(\sum_{k=s}^{t-1} \|\gradx f(\x_k, \y_k)\|\right) \leq 2\eta_\x L^2(t-s-1). 
\end{eqnarray*}
It suffices to bound the term $\textbf{B}$. Since $\eta_\y = \frac{1}{\ell}$, we have
\begin{equation*}
(\y - \y_t)^\top(\y_t - \y_{t-1} - \tfrac{1}{\ell}\grady f(\x_{t-1}, \y_{t-1})) \geq 0, \quad \textnormal{for all } \y \in \YCal. 
\end{equation*}
Rearranging the above inequality yields
\begin{equation*}
(\y - \y_t)^\top\grady f(\x_{t-1}, \y_{t-1}) \leq \tfrac{\ell}{2}\left(\|\y - \y_{t-1}\|^2 - \|\y - \y_t\|^2 - \|\y_t - \y_{t-1}\|^2\right). 
\end{equation*}
Since the function $f(\x_{t-1}, \cdot)$ is concave and $\ell$-smooth, we have
\begin{eqnarray*}
\lefteqn{(\y - \y_t)^\top\grady f(\x_{t-1}, \y_{t-1}) = (\y - \y_{t-1})^\top\grady f(\x_{t-1}, \y_{t-1}) + (\y_{t-1} - \y_t)^\top\grady f(\x_{t-1}, \y_{t-1})} \\
& & \geq  f(\x_{t-1}, \y) - f(\x_{t-1}, \y_{t-1}) + f(\x_{t-1}, \y_{t-1}) - f(\x_{t-1}, \y_t) - \tfrac{\ell}{2}\|\y_t - \y_{t-1}\|^2. 
\end{eqnarray*}
Putting these pieces together yields 
\begin{equation}\label{inequality:nc-smooth-neighbor-first}
f(\x_{t-1}, \y) - f(\x_{t-1}, \y_t) \leq \tfrac{\ell}{2}(\|\y - \y_{t-1}\|^2 - \|\y - \y_t\|^2). 
\end{equation}
Since $\y^\star(\x_s) \in \YCal$ for $s \leq t-1$, Eq.~\eqref{inequality:nc-smooth-neighbor-first} implies
\begin{equation*}
\textbf{B} \leq \tfrac{\ell}{2}(\|\y^\star(\x_s) - \y_{t-1}\|^2 - \|\y^\star(\x_s) - \y_t\|^2). 
\end{equation*}
Putting these pieces together yields the inequality for Algorithm~\ref{Algorithm:TTGDA}. 

We then prove the results for Algorithm~\ref{Algorithm:TTSGDA}. By the definition of $\Delta_{t-1}$, we have
\begin{equation*}
\Delta_{t-1} \leq \EE[f(\x_t, \y_t) - f(\x_{t-1}, \y_{t-1})] + \textbf{A} + \textbf{B} + \textbf{C}, 
\end{equation*}
where 
\begin{eqnarray*}
\textbf{A} & = & \EE[f(\x_{t-1}, \y_t) - f(\x_t, \y_t)], \\ 
\textbf{B} & = & \EE[f(\x_{t-1}, \y^\star(\x_s)) - f(\x_{t-1}, \y_t)], \\ 
\textbf{C} & = & \EE[f(\x_{t-1}, \y^\star(\x_{t-1})) - f(\x_{t-1}, \y^\star(\x_s))]. 
\end{eqnarray*}
Applying the same argument used for analyzing Algorithm~\ref{Algorithm:TTGDA}, we have
\begin{equation*}
\textbf{A} + \textbf{C} \leq \eta_\x L\left(\EE[\|\stocgx^{t-1}\|] + 2\left(\sum_{k=s}^{t-1} \EE[\|\stocgx^k\|]\right)\right). 
\end{equation*}
For $\forall k \in \{s, s+1, \ldots, t-1\}$, we have
\begin{equation*}
\|\stocgx^k\|^2 = \|\gradx f(\x_k, \y_k)\|^2 + 2\langle \stocgx^k - \gradx f(\x_k, \y_k), \gradx f(\x_k, \y_k)\rangle + \|\stocgx^k - \gradx f(\x_k, \y_k)\|^2.   
\end{equation*}
Conditioned on the iterate $(\x_k, \y_k)$, we take the expectation of both sides of this equation (where the expectation is taken over the randomness in the selection of samples $\{\xi_i^k\}_{i=1}^M$) and use Lemma~\ref{Lemma:SG-unbiased} with $M=1$ to yield
\begin{equation*}
\EE[\|\stocgx^k\|^2 \mid \x_k, \y_k] \leq \|\gradx f(\x_k, \y_k)\|^2 + \sigma^2 \leq L^2 + \sigma^2. 
\end{equation*}
We take the expectation of both sides of the above inequality (where the expectation is taken over the randomness in the selection of all previous samples). This yields
\begin{equation*}
\EE[\|\stocgx^k\|^2] \leq L^2 + \sigma^2. 
\end{equation*}
Putting these pieces together yields 
\begin{equation*}
\textbf{A} + \textbf{C} \leq \eta_\x (L^2+\sigma^2)(2t-2s-1). 
\end{equation*}
It suffices to bound the term $\textbf{B}$. By using the same argument, we have
\begin{equation*}
\eta_\y(\y - \y_t)^\top \stocgy^{t-1} \leq \tfrac{1}{2}\left(\|\y - \y_{t-1}\|^2 - \|\y - \y_t\|^2 - \|\y_t - \y_{t-1}\|^2\right). 
\end{equation*}
Using the Young's inequality, we have
\begin{eqnarray*}
\lefteqn{\eta_\y(\y - \y_t)^\top \stocgy^{t-1} \geq \eta_\y(\y - \y_{t-1})^\top \stocgy^{t-1}} \\
& & + \eta_\y(\y_{t-1} - \y_t)^\top \grady f(\x_{t-1}, \y_{t-1}) - \tfrac{1}{4}\|\y_t - \y_{t-1}\|^2 - \eta_\y^2\|\stocgy^{t-1} - \grady f(\x_{t-1}, \y_{t-1})\|^2.
\end{eqnarray*}
Conditioned on the iterate $(\x_{t-1}, \y_{t-1})$, we take the expectation of both sides of the above inequality (where the expectation is taken over the randomness in the selection of samples $\{\xi_i^k\}_{i=1}^M$) and use Lemma~\ref{Lemma:SG-unbiased} with $M=1$ to yield
\begin{eqnarray*}
\lefteqn{(\y - \y_{t-1})^\top \grady f(\x_{t-1}, \y_{t-1}) + \EE[(\y_{t-1} - \y_t)^\top \grady f(\x_{t-1}, \y_{t-1}) \mid \x_{t-1}, \y_{t-1}]} \\
& \leq &  \tfrac{1}{2\eta_\y}\left(\|\y - \y_{t-1}\|^2 - \EE[\|\y - \y_t\|^2 \mid \x_{t-1}, \y_{t-1}]\right) - \tfrac{1}{4\eta_\y}\EE[\|\y_t - \y_{t-1}\|^2 \mid \x_{t-1}, \y_{t-1}] + \eta_\y\sigma^2. 
\end{eqnarray*}
We take the expectation of both sides of the above inequality (where the expectation is taken over the randomness in the selection of all previous samples). This yields
\begin{equation*}
\EE[(\y - \y_t)^\top \grady f(\x_{t-1}, \y_{t-1})] \leq \tfrac{1}{2\eta_\y}\left(\EE[\|\y - \y_{t-1}\|^2] - \EE[\|\y - \y_t\|^2]\right) - \tfrac{1}{4\eta_\y}\EE[\|\y_t - \y_{t-1}\|^2] + \eta_\y\sigma^2. 
\end{equation*}
Since the function $f(\x_{t-1}, \cdot)$ is concave and $\ell$-smooth, we have
\begin{equation*}
(\y - \y_t)^\top\grady f(\x_{t-1}, \y_{t-1}) \geq  f(\x_{t-1}, \y) - f(\x_{t-1}, \y_t) - \tfrac{\ell}{2}\|\y_t - \y_{t-1}\|^2. 
\end{equation*}
Since $\eta_\y \leq \frac{1}{2\ell}$, we have
\begin{equation}\label{inequality:nc-smooth-neighbor-second}
\EE[f(\x_{t-1}, \y) - f(\x_{t-1}, \y_t)] \leq \tfrac{1}{2\eta_\y}(\EE[\|\y - \y_{t-1}\|^2] - \EE[\|\y - \y_t\|^2]) + \eta_\y\sigma^2. 
\end{equation}
Since $\y^\star(\x_s) \in \YCal$ for $s \leq t-1$, Eq.~\eqref{inequality:nc-smooth-neighbor-second} implies
\begin{equation*}
\textbf{B} \leq \tfrac{1}{2\eta_\y}(\EE[\|\y^\star(\x_s) - \y_{t-1}\|^2] - \EE[\|\y^\star(\x_s) - \y_t\|^2]) + \eta_\y\sigma^2. 
\end{equation*}
Putting these pieces together yields the inequality for Algorithm~\ref{Algorithm:TTSGDA}. 
\end{proof}
Without loss of generality, we let $B \leq T+1$ satisfy that $\frac{T+1}{B}$ is an integer. The following lemma bounds $\frac{1}{T+1}(\sum_{t=0}^T \Delta_t)$ for Algorithm~\ref{Algorithm:TTGDA} and~\ref{Algorithm:TTSGDA} using a novel localization technique. 
\begin{lemma}\label{Lemma:nc-smooth-obj}
The iterates $\{\x_t\}_{t \geq 1}$ generated by Algorithm~\ref{Algorithm:TTGDA} satisfy
\begin{equation*}
\tfrac{1}{T+1}\left(\sum_{t=0}^T \Delta_t\right) \leq \eta_\x (B+1) L^2 + \tfrac{\ell D^2}{2B} + \tfrac{\Delta_0}{T+1}. 
\end{equation*}
The iterates $\{\x_t\}_{t \geq 1}$ generated by Algorithm~\ref{Algorithm:TTSGDA} satisfy
\begin{equation*}
\tfrac{1}{T+1}\left(\sum_{t=0}^T \Delta_t\right) \leq \eta_\x (B+1) (L^2+\sigma^2) + \tfrac{D^2}{2B\eta_\y} + \eta_\y\sigma^2 + \tfrac{\Delta_0}{T+1}. 
\end{equation*}
\end{lemma}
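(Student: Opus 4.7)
}
The plan is to exploit the free anchor $s$ in Lemma~\ref{Lemma:nc-smooth-neighbor} via a block structure on the iteration index. Partition $\{0,1,\ldots,T\}$ into $\frac{T+1}{B}$ consecutive blocks of length $B$, where block $j$ consists of $t\in\{jB,jB+1,\ldots,(j+1)B-1\}$. For every $t$ in block $j$, I will invoke the shifted form of Lemma~\ref{Lemma:nc-smooth-neighbor} (which bounds $\Delta_t$ for $s\leq t$) with the \emph{common} anchor $s_t=jB$. Summing the resulting inequality over $t$ in the block, the $\eta_\x L^2(2(t+1)-2jB-1)$ terms collapse to $\eta_\x L^2\sum_{k=0}^{B-1}(2k+1)=\eta_\x L^2 B^2$; the $\y$-distance differences telescope to $\tfrac{\ell}{2}(\|\y_{jB}-\y^\star(\x_{jB})\|^2-\|\y_{(j+1)B}-\y^\star(\x_{jB})\|^2)\leq \tfrac{\ell D^2}{2}$ using only the diameter bound on $\YCal$; and the $f$-differences telescope to $f(\x_{(j+1)B},\y_{(j+1)B})-f(\x_{jB},\y_{jB})$.

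Summing over the $\tfrac{T+1}{B}$ blocks, the first two aggregate cleanly into $(T+1)\eta_\x L^2 B$ and $\tfrac{(T+1)\ell D^2}{2B}$, while the $f$-increments telescope one more time across blocks into the single residual $f(\x_{T+1},\y_{T+1})-f(\x_0,\y_0)$. The main obstacle is to handle this residual, since nothing guarantees a priori that it is nonpositive. I will rewrite it as $f(\x_{T+1},\y_{T+1})-f(\x_0,\y_0)=[\Phi(\x_{T+1})-\Phi(\x_0)]+[\Delta_0-\Delta_{T+1}]$, drop the nonnegative term $\Delta_{T+1}\geq0$, and then exploit the $L$-Lipschitz continuity of $\Phi$ (established in Lemma~\ref{Lemma:smooth}) together with the slow motion of the $\x$-iterates. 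Namely, $\Phi(\x_{T+1})-\Phi(\x_0)\leq L\|\x_{T+1}-\x_0\|\leq L\sum_{t=0}^{T}\|\x_{t+1}-\x_t\|\leq L(T+1)\eta_\x L=(T+1)\eta_\x L^2$, because each TTGDA step satisfies $\|\x_{t+1}-\x_t\|=\eta_\x\|\gradx f(\x_t,\y_t)\|\leq \eta_\x L$. This extra $(T+1)\eta_\x L^2$ is precisely what promotes the coefficient $B$ to $B+1$ in the stated bound. Dividing by $T+1$ then yields the deterministic claim.

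For TTSGDA, the argument is structurally identical in expectation, with three adjustments drawn from the second half of Lemma~\ref{Lemma:nc-smooth-neighbor}: replace $L^2$ by $L^2+\sigma^2$ wherever an $\|\gradx f\|^2$-type bound appears, replace $\tfrac{\ell}{2}$ by $\tfrac{1}{2\eta_\y}$ in the telescoping $\y$-distance term (since here $\eta_\y$ is no longer pinned to $1/\ell$), and carry along the per-iteration noise floor $\eta_\y\sigma^2$. The only nontrivial piece is the Lipschitz bound on the $\Phi$-residual: in expectation $\EE\|\x_{t+1}-\x_t\|\leq \eta_\x\EE\|\stocgx^t\|\leq \eta_\x\sqrt{L^2+\sigma^2}$ by Jensen, Lemma~\ref{Lemma:SG-unbiased} and $\|\gradx f\|\leq L$; then $L\sqrt{L^2+\sigma^2}\leq L^2+\sigma^2$ by AM-GM, so the residual contributes at most $(T+1)\eta_\x(L^2+\sigma^2)$, again upgrading $B$ to $B+1$. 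Aggregating all pieces and dividing by $T+1$ yields the stochastic claim.

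The single delicate step is the Lipschitz/slow-motion bound on $\Phi(\x_{T+1})-\Phi(\x_0)$; the rest is block-wise algebra and telescoping. I expect no other technical difficulty, and the same partitioning argument will generalize in a transparent way to the nonsmooth setting of Theorems~\ref{Thm:nsc-nonsmooth} and~\ref{Thm:nc-nonsmooth} once Lemma~\ref{Lemma:nonsmooth} is in hand.
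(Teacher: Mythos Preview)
Your proposal is correct and follows essentially the same approach as the paper: the same block partition with common anchor $s=jB$, the same telescoping of the $\y$-distance and $f$-value terms, and the same Lipschitz/slow-motion bound on the residual. The only cosmetic difference is that the paper decomposes the residual as $f(\x_{T+1},\y_{T+1})-f(\x_0,\y_{T+1})+f(\x_0,\y_{T+1})-f(\x_0,\y_0)\leq \eta_\x L^2(T+1)+\Delta_0$, while you route through $[\Phi(\x_{T+1})-\Phi(\x_0)]+[\Delta_0-\Delta_{T+1}]$; both yield the same bound.
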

\begin{proof}
We first prove the results for Algorithm~\ref{Algorithm:TTGDA}. Indeed, we can divide $\{\Delta_t\}_{t=0}^T$ into several blocks in which each block contains at most $B$ terms, given by
\begin{equation*}
\{\Delta_t\}_{t=0}^{B-1}, \{\Delta_t\}_{t=B}^{2B-1}, \ldots, \{\Delta_t\}_{T-B+1}^{T}. 
\end{equation*}
Then, we have
\begin{equation}\label{inequality:nc-smooth-obj-first}
\tfrac{1}{T+1}\left(\sum_{t=0}^T \Delta_t\right) \leq \tfrac{B}{T+1}\left(\sum_{j=0}^{\frac{T+1}{B}-1} \left(\tfrac{1}{B}\sum_{t= jB}^{(j+1)B-1} \Delta_t\right)\right). 
\end{equation}
Further, letting $s = 0$ in the first inequality in Lemma~\ref{Lemma:nc-smooth-neighbor} yields
\begin{equation*}
\sum_{t = 0}^{B-1} \Delta_t \leq \eta_\x L^2 B^2 + \tfrac{\ell D^2}{2} + (f(\x_B, \y_B) - f(\x_0, \y_0)).
\end{equation*}
Similarly, letting $s = jB$ yields, for $1 \leq j \leq \frac{T+1}{B} - 1$, 
\begin{equation*}
\sum_{t= jB}^{(j+1)B-1} \Delta_t \leq \eta_\x L^2 B^2 + \tfrac{\ell D^2}{2} + (f(\x_{jB+B}, \y_{jB+B}) - f(\x_{jB}, \y_{jB})). 
\end{equation*}
Plugging the above two inequalities into Eq.~\eqref{inequality:nc-smooth-obj-first} yields 
\begin{equation}\label{inequality:nc-smooth-obj-second}
\tfrac{1}{T+1}\left(\sum_{t=0}^T \Delta_t\right) \leq \eta_\x L^2B + \tfrac{\ell D^2}{2B} + \tfrac{f(\x_{T+1}, \y_{T+1}) - f(\x_0, \y_0)}{T+1}. 
\end{equation}
Since $f(\cdot, \y)$ is $L$-Lipschitz for each $\y \in \YCal$, we have $\|\gradx f(\x, \y)\| \leq L$ for each pair of $\x$ and $\y \in \YCal$. This yields  
\begin{eqnarray*}
f(\x_{T+1}, \y_{T+1}) - f(\x_0, \y_0) & = & f(\x_{T+1}, \y_{T+1}) - f(\x_0, \y_{T+1}) + f(\x_0, \y_{T+1}) - f(\x_0, \y_0) \\
& \leq & \eta_\x L^2(T+1) + \Delta_0. 
\end{eqnarray*}
Plugging this inequality into Eq.~\eqref{inequality:nc-smooth-obj-second} yields the inequality for Algorithm~\ref{Algorithm:TTGDA}. 

We then prove the results for Algorithm~\ref{Algorithm:TTSGDA}. Applying the same argument used for analyzing Algorithm~\ref{Algorithm:TTGDA} but with the second inequality in Lemma~\ref{Lemma:nc-smooth-neighbor}, we can obtain the inequality for Algorithm~\ref{Algorithm:TTSGDA}. 
\end{proof}

\paragraph{Proof of Theorem~\ref{Thm:nc-smooth}.} We first prove the results for Algorithm~\ref{Algorithm:TTGDA}. Summing the first inequality in Lemma~\ref{Lemma:nc-smooth-descent} over $t = 1, \ldots, T+1$ yields
\begin{equation*}
\Phi_{1/2\ell}(\x_{T+1}) \leq \Phi_{1/2\ell}(\x_0) + 2\eta_\x\ell\left(\sum_{t=0}^T \Delta_t\right) - \tfrac{\eta_\x}{4} \left(\sum_{t=0}^T \|\grad \Phi_{1/2\ell}(\x_t)\|^2\right) + \eta_\x^2 \ell L^2 (T+1). 
\end{equation*}
Combining the above inequality with the first inequality in Lemma~\ref{Lemma:nc-smooth-obj} yields
\begin{eqnarray*}
\Phi_{1/2\ell}(\x_{T+1}) & \leq & \Phi_{1/2\ell}(\x_0) + 2\eta_\x\ell(T+1)(\eta_\x L^2(B+1) + \tfrac{\ell D^2}{2B}) + 2\eta_\x\ell\Delta_0 \\
& & - \tfrac{\eta_\x}{4}\left(\sum_{t=0}^T \|\grad \Phi_{1/2\ell}(\x_t)\|^2\right) + \eta_\x^2 \ell L^2 (T+1). 
\end{eqnarray*}
By the definition of the function $\Phi_{1/2\ell}(\cdot)$, we have
\begin{equation*}
\begin{array}{rclcl}
\Phi_{1/2\ell}(\x_0) & = & \min_\w \Phi(\w) + \ell\|\w - \x_0\|^2 & \leq & \Phi(\x_0), \\
\Phi_{1/2\ell}(\x_{T+1}) & = & \min_\w \Phi(\w) + \ell\|\w - \x_{T+1}\|^2 & \geq & \min_\x \Phi(\x), 
\end{array}
\end{equation*}
which implies $\Phi_{1/2\ell}(\x_0) - \Phi_{1/2\ell}(\x_{T+1}) \leq \Delta_\Phi$. Putting these pieces together yields
\begin{equation*}
\tfrac{1}{T+1}\left(\sum_{t=0}^T \|\grad \Phi_{1/2\ell}(\x_t)\|^2\right) \leq \tfrac{4\Delta_\Phi}{\eta_\x (T+1)} + 8\ell(\eta_\x(B+1)L^2 + \tfrac{\ell D^2}{2B}) + \tfrac{8\ell\Delta_0}{T+1} + 4\eta_\x\ell L^2. 
\end{equation*}
Letting $B = \left\lfloor\frac{D}{2L}\sqrt{\frac{\ell}{\eta_\x}}\right\rfloor + 1$, we have
\begin{equation*}
\tfrac{1}{T+1}\left(\sum_{t=0}^T \|\grad \Phi_{1/2\ell}(\x_t)\|^2\right) \leq \tfrac{4\Delta_\Phi}{\eta_\x (T+1)} + \tfrac{8\ell\Delta_0}{T+1} + 16\ell LD\sqrt{\ell\eta_\x} + 20\eta_\x\ell L^2. 
\end{equation*}
By the definition of $\eta_\x$, we have
\begin{equation*}
\tfrac{1}{T+1}\left(\sum_{t=0}^T \|\grad \Phi_{1/2\ell}(\x_t)\|^2\right) \leq \tfrac{4\Delta_\Phi}{\eta_\x(T+1)} + \tfrac{8\ell\Delta_0}{T+1} + \tfrac{\epsilon^2}{2}. 
\end{equation*}
This implies that the number of gradient evaluations required by Algorithm \ref{Algorithm:TTGDA} to return an $\epsilon$-stationary point is
\begin{equation*}
O\left(\left(\tfrac{\ell L^2\Delta_\Phi}{\epsilon^4} + \tfrac{\ell\Delta_0}{\epsilon^2}\right)\max\left\{1, \tfrac{\ell^2D^2}{\epsilon^2}\right\}\right).  
\end{equation*}
We then prove the results for Algorithm~\ref{Algorithm:TTSGDA}. Applying the same argument used for analyzing Algorithm~\ref{Algorithm:TTGDA} but with the second inequalities in Lemmas~\ref{Lemma:nc-smooth-descent} and~\ref{Lemma:nc-smooth-obj}, we have
\begin{eqnarray*}
\tfrac{1}{T+1}\left(\sum_{t=0}^T \EE[\|\grad \Phi_{1/2\ell}(\x_t)\|^2]\right) & \leq & \tfrac{4\Delta_\Phi}{\eta_\x (T+1)} + 8\ell(\eta_\x (L^2+\sigma^2) (B+1) + \tfrac{D^2}{2B\eta_\y} + \eta_\y \sigma^2) \\
& & + \tfrac{8\ell\Delta_0}{T+1} + 4\eta_\x\ell(L^2 + \sigma^2). 
\end{eqnarray*}
Letting $B = \left\lfloor\frac{D}{2}\sqrt{\frac{1}{\eta_\x\eta_\y (L^2+\sigma^2)}}\right\rfloor + 1$, we have
\begin{equation*}
\tfrac{1}{T+1}\left(\sum_{t=0}^T \EE[\|\grad \Phi_{1/2\ell}(\x_t)\|^2]\right) \leq \tfrac{4\Delta_\Phi}{\eta_\x (T+1)} + \tfrac{8\ell\Delta_0}{T+1} + 16\ell D\sqrt{\tfrac{\eta_\x (L^2+\sigma^2)}{\eta_\y}} + 8\eta_\y\ell\sigma^2 + 20\eta_\x\ell(L^2 + \sigma^2). 
\end{equation*}
By the definition of $\eta_\x$ and $\eta_\y$, we have
\begin{equation*}
\tfrac{1}{T+1}\left(\sum_{t=0}^T \EE[\|\grad \Phi_{1/2\ell}(\x_t)\|^2] \right) \leq \tfrac{4\Delta_\Phi}{\eta_\x (T+1)} + \tfrac{8\ell\Delta_0}{T+1} + \tfrac{3\epsilon^2}{4}. 
\end{equation*}
This implies that the number of stochastic gradient evaluations required by Algorithm \ref{Algorithm:TTSGDA} to return an $\epsilon$-stationary point is
\begin{equation*}
O\left(\left(\tfrac{\ell\left(L^2 + \sigma^2\right)\widehat{\Delta}_\Phi}{\epsilon^4} + \tfrac{\ell\Delta_0}{\epsilon^2}\right)\max\left\{1, \tfrac{\ell^2D^2}{\epsilon^2}, \tfrac{\ell^2 D^2\sigma^2}{\epsilon^4}\right\}\right). 
\end{equation*}
This completes the proof. 

\section{Nonsmooth and Nonconvex-Strongly-Concave Setting}
We provide the proof for Theorem~\ref{Thm:nsc-nonsmooth}. Throughout this subsection, we set
\begin{equation*}
\eta_\x^t \equiv \eta_\x = \min\left\{\tfrac{\epsilon^2}{48\rho L^2}, \tfrac{\mu\epsilon^4}{4096\rho^2 L^4}, \tfrac{\mu\epsilon^4}{4096\rho^2 L^4 \log^2(1 + 4096\rho^2 L^4\mu^{-2}\epsilon^{-4})}\right\}, 
\end{equation*}
and
\begin{equation}\label{rule:stepsize-nonsmooth-app}
\eta_\y^t = \left\{
\begin{array}{cl}
\tfrac{1}{\mu t}, & \textnormal{if } 1 \leq t \leq B, \\
\tfrac{1}{\mu(t-B)}, & \textnormal{else if } B+1 \leq t \leq 2B, \\
\vdots & \vdots \\
\tfrac{1}{\mu(t-jB)}, & \textnormal{else if } jB+1 \leq t \leq (j+1)B, \\
\vdots & \vdots \\
\end{array}
\right.  \quad \textnormal{for } B = \left\lfloor\sqrt{\frac{1}{\mu \eta_\x}}\right\rfloor + 1,  
\end{equation}
for Algorithm~\ref{Algorithm:TTGDA}, and 
\begin{equation*}
\eta_\x^t \equiv \eta_\x = \min\left\{\tfrac{\epsilon^2}{48\rho(L^2 + \sigma^2)}, \tfrac{\mu\epsilon^4}{4096\rho^2 (L^2 + \sigma^2)^2}, \tfrac{\mu\epsilon^4}{4096\rho^2 (L^2 + \sigma^2)^2\log^2(1 + 4096\rho^2(L^2 + \sigma^2)^2 \mu^{-2}\epsilon^{-4})}\right\}, 
\end{equation*}
and $\eta_\y^t$ in Eq.~\eqref{rule:stepsize-nonsmooth-app} for Algorithm~\ref{Algorithm:TTSGDA}. We define $\Delta_t = \Phi(\x_t) - f(\x_t, \y_t)$ and $\Delta_t = \EE[\Phi(\x_t) - f(\x_t, \y_t)]$ for the iterates generated by Algorithms~\ref{Algorithm:TTGDA} and~\ref{Algorithm:TTSGDA} respectively.
\begin{lemma}\label{Lemma:nonsmooth-descent}
The iterates $\{\x_t\}_{t \geq 1}$ generated by Algorithm~\ref{Algorithm:TTGDA} satisfy
\begin{equation*}
\Phi_{1/2\rho}(\x_t) \leq \Phi_{1/2\rho}(\x_{t-1}) + 2\eta_\x\rho\Delta_{t-1} - \tfrac{1}{4}\eta_\x\|\grad \Phi_{1/2\rho}(\x_{t-1})\|^2 + \eta_\x^2 \rho L^2. 
\end{equation*}
The iterates $\{\x_t\}_{t \geq 1}$ generated by Algorithm~\ref{Algorithm:TTSGDA} satisfy
\begin{equation*}
\EE[\Phi_{1/2\rho}(\x_t)] \leq \EE[\Phi_{1/2\rho}(\x_{t-1})] + 2\eta_\x\rho\Delta_{t-1} - \tfrac{1}{4}\eta_\x\EE[\|\grad \Phi_{1/2\rho}(\x_{t-1})\|^2] + \eta_\x^2 \rho(L^2 + \sigma^2).
\end{equation*}
\end{lemma}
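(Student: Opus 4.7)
The structure mirrors the proof of Lemma~\ref{Lemma:nc-smooth-descent} almost verbatim: the only change is that the smoothness of $f(\cdot,\y)$ used there is replaced by the $\rho$-weak convexity of $f(\cdot,\y)$ assumed here, and the gradient $\gradx f$ is replaced by a subgradient $\gx^{t-1} \in \subgx f(\x_{t-1}, \y_{t-1})$. The $L$-Lipschitz assumption guarantees $\|\gx^{t-1}\| \leq L$, which plays the same role as the bound $\|\gradx f\| \leq L$ played in the smooth proof.

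First, I set $\hat{\x}_{t-1} = \prox_{\Phi/2\rho}(\x_{t-1})$, which is well-defined by Lemma~\ref{Lemma:ME-nonsmooth}. Then by definition of the Moreau envelope,
\begin{equation*}
\Phi_{1/2\rho}(\x_t) \leq \Phi(\hat{\x}_{t-1}) + \rho\|\hat{\x}_{t-1} - \x_t\|^2.
\end{equation*}
Expanding $\|\hat{\x}_{t-1} - \x_t\|^2$ using $\x_t = \x_{t-1} - \eta_\x \gx^{t-1}$ and the bound $\|\gx^{t-1}\|^2 \leq L^2$ gives
\begin{equation*}
\|\hat{\x}_{t-1} - \x_t\|^2 \leq \|\hat{\x}_{t-1} - \x_{t-1}\|^2 + 2\eta_\x\langle \hat{\x}_{t-1} - \x_{t-1}, \gx^{t-1}\rangle + \eta_\x^2 L^2,
\end{equation*}
which, combined with $\Phi_{1/2\rho}(\x_{t-1}) = \Phi(\hat{\x}_{t-1}) + \rho\|\hat{\x}_{t-1} - \x_{t-1}\|^2$, produces the analogue of Eq.~\eqref{inequality:nc-smooth-descent-first} with $\ell$ replaced by $\rho$.

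Next, I use $\rho$-weak convexity of $f(\cdot,\y_{t-1})$ (i.e., convexity of $f(\cdot,\y_{t-1}) + \tfrac{\rho}{2}\|\cdot\|^2$) to obtain
\begin{equation*}
\langle \hat{\x}_{t-1} - \x_{t-1}, \gx^{t-1}\rangle \leq f(\hat{\x}_{t-1}, \y_{t-1}) - f(\x_{t-1}, \y_{t-1}) + \tfrac{\rho}{2}\|\hat{\x}_{t-1} - \x_{t-1}\|^2,
\end{equation*}
which is exactly the analogue of Eq.~\eqref{inequality:nc-smooth-descent-second}. The definition of $\hat{\x}_{t-1}$ gives $f(\hat{\x}_{t-1}, \y_{t-1}) + \rho\|\hat{\x}_{t-1} - \x_{t-1}\|^2 \leq \Phi(\x_{t-1})$, and $\|\grad\Phi_{1/2\rho}(\x_{t-1})\| = 2\rho\|\hat{\x}_{t-1} - \x_{t-1}\|$ from Lemma~\ref{Lemma:ME-nonsmooth}. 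Plugging these in yields the deterministic bound.

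For the stochastic version, I replace $\gx^{t-1}$ by $\stocgx^{t-1}$ and condition on $(\x_{t-1}, \y_{t-1})$. Writing $\stocgx^{t-1} = \gx^{t-1} + (\stocgx^{t-1} - \gx^{t-1})$ where $\gx^{t-1} = \EE[\stocgx^{t-1} \mid \x_{t-1}, \y_{t-1}]$, the cross term vanishes in expectation by Lemma~\ref{Lemma:SG-unbiased}, and the squared-norm term contributes at most $\eta_\x^2(L^2 + \sigma^2)$ since $\EE[\|\stocgx^{t-1}\|^2 \mid \x_{t-1}, \y_{t-1}] \leq L^2 + \sigma^2$. Taking an outer expectation and repeating the same three-step chain gives the stochastic bound. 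There is no genuine obstacle here because nothing in the smooth proof actually required $\ell$-smoothness of $f(\cdot, \y)$; it only used a convexity-type inequality that $\rho$-weak convexity supplies identically.
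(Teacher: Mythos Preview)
Your proposal is correct and follows essentially the same approach as the paper's proof, which explicitly says it applies ``the same argument used for proving Lemma~\ref{Lemma:nc-smooth-descent}'' with $\ell$-smoothness replaced by $\rho$-weak convexity and $\gradx f$ replaced by a subgradient $\gx^{t-1}\in\subgx f(\x_{t-1},\y_{t-1})$. Your observation that the smooth proof only used weak convexity (not full smoothness) in Eq.~\eqref{inequality:nc-smooth-descent-second} is exactly the point, and your handling of the stochastic case via Lemma~\ref{Lemma:SG-unbiased} matches the paper as well.
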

\begin{proof}
We first prove the results for Algorithm~\ref{Algorithm:TTGDA}. Applying the same argument used for proving Lemma~\ref{Lemma:nc-smooth-descent}, we have
\begin{equation}\label{inequality:nonsmooth-descent-first}
\Phi_{1/2\rho}(\x_t) \leq \Phi_{1/2\rho}(\x_{t-1}) + 2\eta_\x \rho\langle\hat{\x}_{t-1} - \x_{t-1}, \gx^{t-1}\rangle + \eta_\x^2\rho L^2.
\end{equation}
Since the function $f(\cdot, \y_{t-1})$ is $\rho$-weakly convex and $\gx^{t-1} \in \subgx f(\x_{t-1}, \y_{t-1})$, we have
\begin{equation}\label{inequality:nonsmooth-descent-second}
\langle\hat{\x}_{t-1} - \x_{t-1}, \gx^{t-1}\rangle \leq f(\hat{\x}_{t-1}, \y_{t-1}) - f(\x_{t-1}, \y_{t-1}) + \tfrac{\rho}{2}\|\hat{\x}_{t-1} - \x_{t-1}\|^2. 
\end{equation}
Applying the same argument used for proving Lemma~\ref{Lemma:nc-smooth-descent}, we have
\begin{equation}\label{inequality:nonsmooth-descent-third}
f(\hat{\x}_{t-1}, \y_{t-1}) - f(\x_{t-1}, \y_{t-1}) + \tfrac{\rho}{2}\|\hat{\x}_{t-1} - \x_{t-1}\|^2 \leq \Phi(\x_{t-1}) - f(\x_{t-1}, \y_{t-1}) - \tfrac{1}{8\rho}\|\grad\Phi_{1/2\rho}(\x_{t-1})\|^2.
\end{equation}
Plugging Eq.~\eqref{inequality:nonsmooth-descent-second} and Eq.~\eqref{inequality:nonsmooth-descent-third} into Eq.~\eqref{inequality:nonsmooth-descent-first} yields the inequality for Algorithm~\ref{Algorithm:TTGDA}. 

We then prove the results for Algorithm~\ref{Algorithm:TTSGDA}. Applying the same argument used for proving Lemma~\ref{Lemma:nc-smooth-descent}, we have
\begin{equation*}
\EE[\Phi_{1/2\rho}(\x_t)] \leq \EE[\Phi_{1/2\rho}(\x_{t-1})] + 2\eta_\x\rho\EE[\langle \hat{\x}_{t-1} - \x_{t-1}, \EE[\stocgx^{t-1} \mid \x_{t-1}, \y_{t-1}]\rangle] + \eta_\x^2\rho(L^2 + \sigma^2). 
\end{equation*}
Since $\EE[\stocgx^{t-1} \mid \x_{t-1}, \y_{t-1}] \in \subgx f(\x_{t-1}, \y_{t-1})$, we have 
\begin{equation*}
\EE[\langle \hat{\x}_{t-1} - \x_{t-1}, \EE[\stocgx^{t-1} \mid \x_{t-1}, \y_{t-1}]\rangle] \leq \EE[f(\hat{\x}_{t-1}, \y_{t-1})] - \EE[f(\x_{t-1}, \y_{t-1})] + \tfrac{\rho}{2}\EE[\|\hat{\x}_{t-1} - \x_{t-1}\|^2]. 
\end{equation*}
Combining two above inequalities with Eq.~\eqref{inequality:nonsmooth-descent-third} yields the inequality for Algorithm~\ref{Algorithm:TTSGDA}. 
\end{proof}
\begin{lemma}\label{Lemma:nsc-nonsmooth-neighbor}
The iterates $\{\x_t\}_{t \geq 1}$ generated by Algorithm~\ref{Algorithm:TTGDA} satisfy for $\forall s \leq t-1$ that
\begin{equation*}
\Delta_{t-1} \leq 2\eta_\x L^2(t-s-1) + (\tfrac{1}{2\eta_\y^{t-1}} - \tfrac{\mu}{2})\|\y_{t-1} - \y^\star(\x_s)\|^2 - \tfrac{1}{2\eta_\y^{t-1}}\|\y_t - \y^\star(\x_s)\|^2 + \tfrac{1}{2}\eta_\y^{t-1}L^2. 
\end{equation*}
The iterates $\{\x_t\}_{t \geq 1}$ generated by Algorithm~\ref{Algorithm:TTSGDA} satisfy for $\forall s \leq t-1$ that
\begin{equation*}
\Delta_{t-1} \leq 2\eta_\x (L^2+\sigma^2)(t-s-1) + (\tfrac{1}{2\eta_\y^{t-1}} - \tfrac{\mu}{2})\EE[\|\y_{t-1} - \y^\star(\x_s)\|^2] - \tfrac{1}{2\eta_\y^{t-1}}\EE[\|\y_t - \y^\star(\x_s)\|^2] + \eta_\y^{t-1}(L^2 + \sigma^2).     
\end{equation*}
\end{lemma}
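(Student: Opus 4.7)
The plan is to mimic the proof of Lemma~\ref{Lemma:nc-smooth-neighbor} while exploiting $\mu$-strong concavity in $\y$ to absorb a linear term, and invoking the $L$-Lipschitz continuity of $\Phi$ (Lemma~\ref{Lemma:nonsmooth}) to compensate for the loss of smoothness of $f(\cdot,\y)$. The central decomposition is
\begin{equation*}
\Delta_{t-1} = \bigl[\Phi(\x_{t-1}) - f(\x_{t-1},\y^\star(\x_s))\bigr] + \bigl[f(\x_{t-1},\y^\star(\x_s)) - f(\x_{t-1},\y_{t-1})\bigr],
\end{equation*}
and I would estimate the two brackets separately.

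For the first bracket, I would insert $\pm\Phi(\x_s) = \pm f(\x_s,\y^\star(\x_s))$ and bound each of the two differences by $L\|\x_{t-1}-\x_s\|$---the first using $L$-Lipschitzness of $\Phi$ and the second using $L$-Lipschitzness of $f(\cdot,\y^\star(\x_s))$. Telescoping $\|\x_{t-1}-\x_s\| \leq \eta_\x\sum_{k=s}^{t-2}\|\gx^k\|$ together with $\|\gx^k\|\leq L$ (since $f$ is $L$-Lipschitz, every Clarke subgradient has norm at most $L$) then produces the $2\eta_\x L^2(t-s-1)$ term.

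For the second bracket, non-expansiveness of $\proj_\YCal$ applied to the update $\y_t=\proj_\YCal(\y_{t-1}+\eta_\y^{t-1}\gy^{t-1})$ with test vector $\y^\star(\x_s)\in\YCal$ gives
\begin{equation*}
2\eta_\y^{t-1}\langle\gy^{t-1},\y^\star(\x_s)-\y_{t-1}\rangle \leq \|\y_{t-1}-\y^\star(\x_s)\|^2 - \|\y_t-\y^\star(\x_s)\|^2 + (\eta_\y^{t-1})^2 L^2.
\end{equation*}
The $\mu$-strong concavity of $f(\x_{t-1},\cdot)$ applied at $\gy^{t-1}\in\subgy f(\x_{t-1},\y_{t-1})$ lower-bounds the inner product on the left by $f(\x_{t-1},\y^\star(\x_s)) - f(\x_{t-1},\y_{t-1}) + \tfrac{\mu}{2}\|\y^\star(\x_s)-\y_{t-1}\|^2$. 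Dividing by $2\eta_\y^{t-1}$ and rearranging yields precisely the remaining $(\tfrac{1}{2\eta_\y^{t-1}}-\tfrac{\mu}{2})\|\y_{t-1}-\y^\star(\x_s)\|^2 - \tfrac{1}{2\eta_\y^{t-1}}\|\y_t-\y^\star(\x_s)\|^2 + \tfrac{1}{2}\eta_\y^{t-1}L^2$ portion, completing the TTGDA case.

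For TTSGDA, I would repeat the argument with $\stocgx^k$ and $\stocgy^{t-1}$ and take expectations throughout. Lemma~\ref{Lemma:SG-unbiased} combined with Jensen gives $\EE\|\stocgx^k\| \leq \sqrt{L^2+\sigma^2}$, and the crude estimate $L\sqrt{L^2+\sigma^2}\leq L^2+\sigma^2$ delivers the $2\eta_\x(L^2+\sigma^2)(t-s-1)$ term. For the ascent step I would condition on the full history $\FCal_{t-1}$ up to iteration $t-1$, under which $\y^\star(\x_s)$ is measurable and $\EE[\stocgy^{t-1}\mid\FCal_{t-1}]$ is a valid subgradient of $f(\x_{t-1},\cdot)$ at $\y_{t-1}$; after applying strong concavity and using $\EE[\|\stocgy^{t-1}\|^2\mid\FCal_{t-1}]\leq L^2+\sigma^2$, one takes the outer expectation. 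The only genuinely delicate point is the measurability of $\y^\star(\x_s)$, but since $\x_s$ for $s\leq t-1$ is $\FCal_{t-1}$-measurable and $\y^\star$ is a deterministic function of its argument under Assumption~\ref{Assumption:nsc-nonsmooth}, the conditioning argument is routine and the remainder is arithmetic with slack constants chosen so the final estimate matches the stated $\eta_\y^{t-1}(L^2+\sigma^2)$ term.
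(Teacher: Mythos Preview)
Your proposal is correct and follows essentially the same approach as the paper: the same two-term decomposition of $\Delta_{t-1}$, the same Lipschitz telescoping for the first bracket (the paper routes through $f(\x_s,\y^\star(\x_{t-1}))\leq f(\x_s,\y^\star(\x_s))$ rather than invoking Lipschitzness of $\Phi$ directly, but these are equivalent), and the same projection-plus-strong-concavity argument for the second bracket (the paper uses the projection optimality condition and then Young's inequality where you use nonexpansiveness, again yielding the identical bound). Your filtration-based conditioning in the stochastic case is in fact slightly cleaner than the paper's ``conditioned on $(\x_{t-1},\y_{t-1})$'' phrasing, since measurability of $\y^\star(\x_s)$ genuinely requires the full history.
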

\begin{proof}
We first prove the results for Algorithm~\ref{Algorithm:TTGDA}. By the definition of $\Delta_{t-1}$, we have
\begin{equation*}
\Delta_{t-1} \leq \textbf{A} + \textbf{B}, 
\end{equation*}
where 
\begin{eqnarray*}
\textbf{A} & = & f(\x_{t-1}, \y^\star(\x_{t-1})) - f(\x_{t-1}, \y^\star(\x_s)), \\
\textbf{B} & = & f(\x_{t-1}, \y^\star(\x_s)) - f(\x_{t-1}, \y_{t-1}). 
\end{eqnarray*}
Since $f(\cdot, \y)$ is $L$-Lipschitz for each $\y \in \YCal$, we have $\|\g\| \leq L$ for all $\g \in \subgx f(\x, \y)$ and each pair of $\x$ and $\y \in \YCal$. Applying the same argument used for proving Lemma~\ref{Lemma:nc-smooth-neighbor} yields
\begin{equation*}
\textbf{A} \leq 2\eta_\x L^2(t-s-1). 
\end{equation*}
It suffices to bound the term $\textbf{B}$. Indeed, we have
\begin{equation*}
(\y - \y_t)^\top\gy^{t-1} \leq \tfrac{1}{2\eta_\y^{t-1}}\left(\|\y - \y_{t-1}\|^2 - \|\y - \y_t\|^2 - \|\y_t - \y_{t-1}\|^2\right), \quad \textnormal{for all } \y \in \YCal. 
\end{equation*}
Since the function $f(\x_{t-1}, \cdot)$ is $\mu$-strongly concave and $\gy^{t-1} \in \subgy f(\x_{t-1}, \y_{t-1})$, we have
\begin{equation*}
(\y - \y_{t-1})^\top\gy^{t-1} \geq f(\x_{t-1}, \y) - f(\x_{t-1}, \y_{t-1}) + \tfrac{\mu}{2}\|\y - \y_{t-1}\|^2. 
\end{equation*}
Since the function $f(\x_{t-1}, \cdot)$ is $L$-Lipschitz, we have $\|\gy^{t-1}\| \leq L$. Combining this fact with Young's inequality yields
\begin{equation*}
(\y_{t-1} - \y_t)^\top\gy^{t-1} \geq -\tfrac{1}{2\eta_\y^{t-1}}\|\y_t - \y_{t-1}\|^2 - \tfrac{1}{2}\eta_\y^{t-1}L^2. 
\end{equation*}
Putting these pieces together yields 
\begin{equation}\label{inequality:nsc-nonsmooth-neighbor-first}
f(\x_{t-1}, \y) - f(\x_{t-1}, \y_{t-1}) \leq \tfrac{1}{2\eta_\y^{t-1}}(\|\y - \y_{t-1}\|^2 - \|\y - \y_t\|^2) - \tfrac{\mu}{2}\|\y - \y_{t-1}\|^2 + \tfrac{1}{2}\eta_\y^{t-1}L^2. 
\end{equation}
Since $\y^\star(\x_s) \in \YCal$ for $s \leq t-1$, Eq.~\eqref{inequality:nsc-nonsmooth-neighbor-first} implies
\begin{equation*}
\textbf{B} \leq \tfrac{1}{2\eta_\y^{t-1}}(\|\y - \y_{t-1}\|^2 - \|\y - \y_t\|^2) - \tfrac{\mu}{2}\|\y - \y_{t-1}\|^2 + \tfrac{1}{2}\eta_\y^{t-1}L^2. 
\end{equation*}
Putting these pieces together yields the inequality for Algorithm~\ref{Algorithm:TTGDA}. 

We then prove the results for Algorithm~\ref{Algorithm:TTSGDA}. By the definition of $\Delta_{t-1}$, we have
\begin{equation*}
\Delta_{t-1} \leq \textbf{A} + \textbf{B}, 
\end{equation*}
where 
\begin{eqnarray*}
\textbf{A} & = & \EE[f(\x_{t-1}, \y^\star(\x_{t-1})) - f(\x_{t-1}, \y^\star(\x_s))], \\ 
\textbf{B} & = & \EE[f(\x_{t-1}, \y^\star(\x_s)) - f(\x_{t-1}, \y_{t-1})]. 
\end{eqnarray*}
Applying the same argument used for proving Lemma~\ref{Lemma:nc-smooth-neighbor} yields
\begin{equation*}
\textbf{A} \leq 2\eta_\x (L^2 + \sigma^2)(t-s-1). 
\end{equation*}
It suffices to bound the term $\textbf{B}$. Indeed, we have
\begin{equation*}
(\y - \y_t)^\top \stocgy^{t-1} \leq \tfrac{1}{2\eta_\y^{t-1}}\left(\|\y - \y_{t-1}\|^2 - \|\y - \y_t\|^2 - \|\y_t - \y_{t-1}\|^2\right), \quad \textnormal{for all } \y \in \YCal. 
\end{equation*}
Using Young's inequality, we have
\begin{eqnarray*}
\lefteqn{(\y - \y_t)^\top \stocgy^{t-1} \geq (\y - \y_{t-1})^\top \stocgy^{t-1}} \\
& & - \tfrac{1}{2\eta_\y^{t-1}}\|\y_t - \y_{t-1}\|^2 - \eta_\y^{t-1}\|\EE[\stocgy^{t-1} \mid \x_{t-1}, \y_{t-1}]\|^2 - \eta_\y^{t-1}\|\stocgy^{t-1} - \EE[\stocgy^{t-1} \mid \x_{t-1}, \y_{t-1}]\|^2.
\end{eqnarray*}
Conditioned on the iterate $(\x_{t-1}, \y_{t-1})$, we take the expectation of both sides of the above inequality (where the expectation is taken over the randomness in the selection of samples $\{\xi_i^k\}_{i=1}^M$) and use Lemma~\ref{Lemma:SG-unbiased} with $M=1$ to yield
\begin{eqnarray*}
\lefteqn{(\y - \y_{t-1})^\top \EE[\stocgy^{t-1} \mid \x_{t-1}, \y_{t-1}]} \\
& \leq &  \tfrac{1}{2\eta_\y^{t-1}}\left(\|\y - \y_{t-1}\|^2 - \EE[\|\y - \y_t\|^2 \mid \x_{t-1}, \y_{t-1}]\right) + \eta_\y^{t-1}\|\EE[\stocgy^{t-1} \mid \x_{t-1}, \y_{t-1}]\|^2 + \eta_\y^{t-1}\sigma^2. 
\end{eqnarray*}
Note that $\EE[\stocgy^{t-1} \mid \x_{t-1}, \y_{t-1}] \in \subgy f(\x_{t-1}, \y_{t-1})$ (cf. Lemma~\ref{Lemma:SG-unbiased}). Thus, by using the same argument used for analyzing Algorithm~\ref{Algorithm:TTGDA}, we have
\begin{eqnarray*}
f(\x_{t-1}, \y) - f(\x_{t-1}, \y_{t-1}) & \leq & \tfrac{1}{2\eta_\y^{t-1}}\left(\|\y - \y_{t-1}\|^2 - \EE[\|\y - \y_t\|^2 \mid \x_{t-1}, \y_{t-1}]\right) \\ 
& & - \tfrac{\mu}{2}\|\y - \y_{t-1}\|^2 + \eta_\y^{t-1}(L^2+\sigma^2). 
\end{eqnarray*}
We take the expectation of both sides of the above inequality (where the expectation is taken over the randomness in the selection of all previous samples). This yields
\begin{equation}\label{inequality:nsc-nonsmooth-neighbor-second}
\EE[f(\x_{t-1}, \y) - f(\x_{t-1}, \y_{t-1})] \leq (\tfrac{1}{2\eta_\y^{t-1}}-\tfrac{\mu}{2})\EE[\|\y - \y_{t-1}\|^2] - \tfrac{1}{2\eta_\y^{t-1}}\EE[\|\y - \y_t\|^2] + \eta_\y^{t-1}(L^2 + \sigma^2). 
\end{equation}
Since $\y^\star(\x_s) \in \YCal$ for $s \leq t-1$, Eq.~\eqref{inequality:nsc-nonsmooth-neighbor-second} implies
\begin{equation*}
\textbf{B} \leq (\tfrac{1}{2\eta_\y^{t-1}}-\tfrac{\mu}{2})\EE[\|\y^\star(\x_s) - \y_{t-1}\|^2] - \tfrac{1}{2\eta_\y^{t-1}}\EE[\|\y^\star(\x_s) - \y_t\|^2] + \eta_\y^{t-1}(L^2 + \sigma^2). 
\end{equation*}
Putting these pieces together yields the inequality for Algorithm~\ref{Algorithm:TTSGDA}. 
\end{proof}
Without loss of generality, we let $B \leq T+1$ satisfy that $\frac{T+1}{B}$ is an integer. The following lemma bounds $\frac{1}{T+1}(\sum_{t=0}^T \Delta_t)$ for Algorithm~\ref{Algorithm:TTGDA} and~\ref{Algorithm:TTSGDA} using the same localization technique.  
\begin{lemma}\label{Lemma:nsc-nonsmooth-obj}
The iterates $\{\x_t\}_{t \geq 1}$ generated by Algorithm~\ref{Algorithm:TTGDA} satisfy
\begin{equation*}
\tfrac{1}{T+1}\left(\sum_{t=0}^T \Delta_t\right) \leq \eta_\x BL^2 + \tfrac{L^2(1+\log(B))}{2\mu B}. 
\end{equation*}
The iterates $\{\x_t\}_{t \geq 1}$ generated by Algorithm~\ref{Algorithm:TTSGDA} satisfy
\begin{equation*}
\tfrac{1}{T+1}\left(\sum_{t=0}^T \Delta_t\right) \leq \eta_\x B(L^2+\sigma^2) + \tfrac{(L^2 + \sigma^2)(1+\log(B))}{2\mu B}. 
\end{equation*}
\end{lemma}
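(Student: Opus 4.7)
The strategy mirrors the localization argument of Lemma~\ref{Lemma:nc-smooth-obj} but is tailored to the restarted stepsize schedule~\eqref{rule:stepsize-nonsmooth-app}. I partition the iterates into consecutive epochs of length $B$: block $j$ consists of the indices $k$ with $jB+1 \leq k \leq (j+1)B$, over which the freshly restarted schedule reads $\eta_\y^k = \tfrac{1}{\mu(k-jB)}$. Assume for convenience that $T+1$ is a multiple of $B$, so there are $N = (T+1)/B$ epochs; the stray $\Delta_0$ and any boundary fraction are absorbed by the same argument and contribute only lower-order terms after division by $T+1$. Inside block $j$ I apply Lemma~\ref{Lemma:nsc-nonsmooth-neighbor} to every $\Delta_k$ with the fixed anchor $s = jB$; the admissibility condition $s \leq k$ holds trivially on the block.

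The key mechanism is the telescoping of the quadratic distance-to-anchor terms. Because $\y^\star(\x_{jB})$ is frozen throughout the epoch, the pair $\bigl(\tfrac{1}{2\eta_\y^k}-\tfrac{\mu}{2}\bigr)\|\y_k - \y^\star(\x_{jB})\|^2 - \tfrac{1}{2\eta_\y^k}\|\y_{k+1} - \y^\star(\x_{jB})\|^2$ supplied by the lemma is tuned exactly: under $\eta_\y^k = \tfrac{1}{\mu(k-jB)}$ one verifies $\tfrac{1}{2\eta_\y^k}-\tfrac{\mu}{2} = \tfrac{1}{2\eta_\y^{k-1}}$ on the interior of the block, while the leading coefficient at $k = jB+1$ vanishes. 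Consequently the quadratic sum over the block collapses to $-\tfrac{\mu B}{2}\|\y_{(j+1)B+1} - \y^\star(\x_{jB})\|^2 \leq 0$ and can be discarded.

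What is left is the sum of the Lipschitz-in-$\x$ terms $2\eta_\x L^2(k-jB)$ and of the gradient terms $\tfrac{1}{2}\eta_\y^k L^2$. Summing each over a block gives $2\eta_\x L^2\sum_{m=1}^B m = \eta_\x L^2 B(B+1)$ and $\tfrac{L^2}{2\mu}\sum_{m=1}^B \tfrac{1}{m} \leq \tfrac{L^2(1+\log B)}{2\mu}$ respectively, using the standard harmonic bound. Adding, summing over the $N$ blocks, and dividing by $T+1 = NB$ yields the claimed bound $\eta_\x BL^2 + \tfrac{L^2(1+\log B)}{2\mu B}$ (absorbing $B+1 \leq 2B$ into an absolute constant). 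For Algorithm~\ref{Algorithm:TTSGDA} the same argument runs in expectation using the stochastic clause of Lemma~\ref{Lemma:nsc-nonsmooth-neighbor}; by Lemma~\ref{Lemma:SG-unbiased} the only change is to replace $L^2$ by $L^2 + \sigma^2$ throughout.

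The main obstacle, and the reason the theorem requires the adaptive restart-and-decay $\eta_\y^t$, is engineering precisely this telescope. A constant stepsize within the block would forfeit the harmonic $\log B$ factor, because the terminal coefficient $\tfrac{\mu B}{2}$ could not absorb an $O(B)$-scale discrepancy; an unrestarted monotone decreasing schedule would lose the anchor invariance that freezes $\y^\star(\x_s)$ across the epoch. The schedule~\eqref{rule:stepsize-nonsmooth-app} is the compromise that preserves both properties and allows the non-telescoped residuals to collapse to the two clean terms that appear in the statement.
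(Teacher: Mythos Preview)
Your proposal is essentially correct and matches the paper's approach, which simply refers back to the blocking argument of Lemma~\ref{Lemma:nc-smooth-obj}. You have correctly identified the key mechanism that the paper leaves implicit: the restarted schedule $\eta_\y^k = \tfrac{1}{\mu(k-jB)}$ makes the coefficient $\tfrac{1}{2\eta_\y^k}-\tfrac{\mu}{2}$ equal to $\tfrac{1}{2\eta_\y^{k-1}}$ on the interior of each block and vanish at the block's first step, so the quadratic distance terms telescope away with no $D^2$ residue.

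Two minor points. First, anchoring at $s=jB+1$ (the first index of the block) rather than $s=jB$ gives the Lipschitz sum $2\eta_\x L^2\sum_{m=0}^{B-1} m = \eta_\x L^2 B(B-1)$, which after division by $B$ yields the stated $\eta_\x BL^2$ without needing to absorb a factor of~$2$; the telescope works identically since the leading coefficient is still zero. Second, in the stochastic clause of Lemma~\ref{Lemma:nsc-nonsmooth-neighbor} the last term is $\eta_\y^{t-1}(L^2+\sigma^2)$ with coefficient~$1$ rather than~$\tfrac12$, so ``replace $L^2$ by $L^2+\sigma^2$ throughout'' is not quite literal, though it only perturbs the constant in front of $\tfrac{1+\log B}{\mu B}$.
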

\begin{proof}
Applying the same argument used for proving Lemma~\ref{Lemma:nc-smooth-obj} but with the two inequalities in Lemma~\ref{Lemma:nsc-nonsmooth-neighbor}, we can obtain the inequalities for Algorithm~\ref{Algorithm:TTGDA} and~\ref{Algorithm:TTSGDA}. 
\end{proof}

\paragraph{Proof of Theorem~\ref{Thm:nsc-nonsmooth}.} We first prove the results for Algorithm~\ref{Algorithm:TTGDA}. Applying the same argument used for proving Theorem~\ref{Thm:nc-smooth} but with the first inequalities in Lemmas~\ref{Lemma:nonsmooth-descent} and~\ref{Lemma:nsc-nonsmooth-obj}, we have
\begin{equation*}
\tfrac{1}{T+1}\left(\sum_{t=0}^T \|\grad \Phi_{1/2\rho}(\x_t)\|^2\right) \leq \tfrac{4\Delta_\Phi}{\eta_\x (T+1)} + 8\rho L^2(\eta_\x B + \tfrac{1 + \log(B)}{2\mu B}) + 4\eta_\x\rho L^2. 
\end{equation*}
Letting $B = \left\lfloor\sqrt{\frac{1}{\mu \eta_\x}}\right\rfloor + 1$, we have
\begin{equation*}
\tfrac{1}{T+1}\left(\sum_{t=0}^T \|\grad \Phi_{1/2\rho}(\x_t)\|^2\right) \leq \tfrac{4\Delta_\Phi}{\eta_\x (T+1)} + 16\rho L^2 \sqrt{\tfrac{\eta_\x}{\mu}} + 4\rho L^2 \sqrt{\tfrac{\eta_\x}{\mu}}\log\left(1 + \tfrac{1}{\mu \eta_\x}\right) + 12\eta_\x\rho L^2. 
\end{equation*}
By the definition of $\eta_\x$, we have
\begin{equation*}
\tfrac{1}{T+1}\left(\sum_{t=0}^T \|\grad \Phi_{1/2\rho}(\x_t)\|^2 \right) \leq \tfrac{4\Delta_\Phi}{\eta_\x (T+1)} + \tfrac{3\epsilon^2}{4}. 
\end{equation*}
This implies that the number of gradient evaluations required by Algorithm \ref{Algorithm:TTGDA} to return an $\epsilon$-stationary point is
\begin{equation*}
O\left(\tfrac{\rho L^2\Delta_\Phi}{\epsilon^4}\max\left\{1, \tfrac{\rho L^2}{\mu\epsilon^2}, \tfrac{\rho L^2}{\mu\epsilon^2}\log^2\left(1 + \tfrac{\rho^2 L^4}{\mu^2 \epsilon^4}\right)\right\}\right). 
\end{equation*}
We then prove the results for Algorithm~\ref{Algorithm:TTSGDA}. Applying the same argument used for analyzing Algorithm~\ref{Algorithm:TTGDA} but with the second inequalities in Lemmas~\ref{Lemma:nonsmooth-descent} and~\ref{Lemma:nsc-nonsmooth-obj} and using $B = \lfloor\sqrt{\frac{1}{\mu \eta_\x}}\rfloor + 1$ and the definition of $\eta_\x$, we have
\begin{equation*}
\tfrac{1}{T+1}\left(\sum_{t=0}^T \EE[\|\grad \Phi_{1/2\rho}(\x_t)\|^2]\right) \leq \tfrac{4\Delta_\Phi}{\eta_\x(T+1)} + \tfrac{3\epsilon^2}{4}. 
\end{equation*}
This implies that the number of stochastic gradient evaluations required by Algorithm \ref{Algorithm:TTSGDA} to return an $\epsilon$-stationary point is
\begin{equation*}
O\left(\tfrac{\rho(L^2 + \sigma^2)\Delta_\Phi}{\epsilon^4}\max\left\{1, \tfrac{\rho(L^2 + \sigma^2)}{\mu\epsilon^2}, \tfrac{\rho(L^2 + \sigma^2)}{\mu\epsilon^2}\log^2\left(1 + \tfrac{\rho^2(L^2 + \sigma^2)^2}{\mu^2 \epsilon^4}\right)\right\}\right). 
\end{equation*}
This completes the proof. 

\section{Nonsmooth and Nonconvex-Concave Setting}
We provide the proof for Theorem~\ref{Thm:nc-nonsmooth}. Throughout this subsection, we set
\begin{equation*}
\eta_\x^t \equiv \eta_\x = \min\left\{\tfrac{\epsilon^2}{48\rho L^2}, \tfrac{\epsilon^6}{65536\rho^3 L^4 D^2}\right\}, \quad \eta_\y^t \equiv \eta_\y = \tfrac{\epsilon^2}{16\rho L^2}, 
\end{equation*}
for Algorithm~\ref{Algorithm:TTGDA}, and 
\begin{equation*}
\eta_\x^t \equiv \eta_\x = \min\left\{\tfrac{\epsilon^2}{48\rho(L^2 + \sigma^2)}, \tfrac{\epsilon^6}{131072\rho^3 (L^2+\sigma^2)^2 D^2}\right\}, \quad \eta_\y^t \equiv \eta_\y = \tfrac{\epsilon^2}{32\rho(L^2 + \sigma^2)}, 
\end{equation*}
for Algorithm~\ref{Algorithm:TTSGDA}. We define $\Delta_t = \Phi(\x_t) - f(\x_t, \y_t)$ and $\Delta_t = \EE[\Phi(\x_t) - f(\x_t, \y_t)]$ for the iterates generated by Algorithms~\ref{Algorithm:TTGDA} and~\ref{Algorithm:TTSGDA} respectively.
\begin{lemma}\label{Lemma:nc-nonsmooth-neighbor}
The iterates $\{\x_t\}_{t \geq 1}$ generated by Algorithm~\ref{Algorithm:TTGDA} satisfy for $\forall s \leq t-1$ that
\begin{equation*}
\Delta_{t-1} \leq 2\eta_\x L^2(t-s-1) + \tfrac{1}{2\eta_\y}(\|\y_{t-1} - \y^\star(\x_s)\|^2 - \|\y_t - \y^\star(\x_s)\|^2) + \tfrac{1}{2}\eta_\y L^2. 
\end{equation*}
The iterates $\{\x_t\}_{t \geq 1}$ generated by Algorithm~\ref{Algorithm:TTSGDA} satisfy for $\forall s \leq t-1$ that
\begin{equation*}
\Delta_{t-1} \leq 2\eta_\x (L^2+\sigma^2)(t-s-1) + \tfrac{1}{2\eta_\y}(\EE[\|\y_{t-1} - \y^\star(\x_s)\|^2] - \EE[\|\y_t - \y^\star(\x_s)\|^2]) + \eta_\y(L^2 + \sigma^2).     
\end{equation*}
\end{lemma}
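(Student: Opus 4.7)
The plan is to adapt the argument from Lemma~\ref{Lemma:nsc-nonsmooth-neighbor} by dropping the strong-concavity term. I would first decompose
\begin{equation*}
\Delta_{t-1} \leq \textbf{A} + \textbf{B},
\end{equation*}
where $\textbf{A} = f(\x_{t-1}, \y^\star(\x_{t-1})) - f(\x_{t-1}, \y^\star(\x_s))$ and $\textbf{B} = f(\x_{t-1}, \y^\star(\x_s)) - f(\x_{t-1}, \y_{t-1})$. For $\textbf{A}$, I would reuse verbatim the chain-of-inequalities from the proof of Lemma~\ref{Lemma:nsc-nonsmooth-neighbor}: insert $f(\x_s, \y^\star(\x_{t-1}))$ and $f(\x_s, \y^\star(\x_s))$ as intermediate terms, use $f(\x_s, \y^\star(\x_s)) \geq f(\x_s, \y^\star(\x_{t-1}))$, and invoke Assumption~\ref{Assumption:nc-nonsmooth} so every element of $\subgx f(\x, \y)$ has norm at most $L$. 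Since the updates $\x_k \to \x_{k+1}$ move by at most $\eta_\x L$ in norm for $k = s, \ldots, t-2$, this yields $\textbf{A} \leq 2\eta_\x L^2(t-s-1)$.

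For $\textbf{B}$, the key difference from the nonconvex-strongly-concave case is that I only have plain concavity of $f(\x_{t-1}, \cdot)$ and a constant stepsize $\eta_\y$. Starting from the projection inequality
\begin{equation*}
(\y - \y_t)^\top \gy^{t-1} \leq \tfrac{1}{2\eta_\y}\left(\|\y - \y_{t-1}\|^2 - \|\y - \y_t\|^2 - \|\y_t - \y_{t-1}\|^2\right), \quad \forall \, \y \in \YCal,
\end{equation*}
splitting $(\y - \y_t) = (\y - \y_{t-1}) + (\y_{t-1} - \y_t)$, applying concavity of $f(\x_{t-1}, \cdot)$ with $\gy^{t-1} \in \subgy f(\x_{t-1}, \y_{t-1})$ to the first piece, and using Young's inequality together with $\|\gy^{t-1}\| \leq L$ to absorb the cross term against the $\|\y_t - \y_{t-1}\|^2$ contribution, I obtain
\begin{equation*}
f(\x_{t-1}, \y) - f(\x_{t-1}, \y_{t-1}) \leq \tfrac{1}{2\eta_\y}(\|\y - \y_{t-1}\|^2 - \|\y - \y_t\|^2) + \tfrac{1}{2}\eta_\y L^2.
\end{equation*}
Substituting $\y = \y^\star(\x_s) \in \YCal$ (since $s \leq t-1$) bounds $\textbf{B}$ and yields the deterministic inequality.

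For Algorithm~\ref{Algorithm:TTSGDA}, the plan is to repeat the same decomposition with conditional expectations. The $\textbf{A}$-bound carries over after replacing $L^2$ with $L^2 + \sigma^2$ (by Lemma~\ref{Lemma:SG-unbiased} with $M=1$, which gives $\EE[\|\stocgx^k\|^2] \leq L^2 + \sigma^2$), exactly as in the corresponding step of Lemma~\ref{Lemma:nc-smooth-neighbor}. For $\textbf{B}$, I would apply the projection inequality to $\stocgy^{t-1}$, split $\stocgy^{t-1}$ into its conditional mean (an element of $\subgy f(\x_{t-1}, \y_{t-1})$) and a zero-mean noise, and tune the Young step so that the deterministic $\frac{1}{2}\eta_\y L^2$ contribution becomes $\eta_\y(L^2 + \sigma^2)$ after taking expectations through the tower rule. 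The only care is that the factor in front of $\eta_\y$ is $1$ rather than $\frac{1}{2}$, to absorb both $\|\EE[\stocgy^{t-1}\mid\x_{t-1},\y_{t-1}]\|^2 \leq L^2$ and the variance $\sigma^2$ from Lemma~\ref{Lemma:SG-unbiased}.

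The main obstacle is purely bookkeeping---tracking the constants so that the cross terms in Young's inequality leave the $-\tfrac{1}{2\eta_\y}\|\y_t - \y_{t-1}\|^2$ telescoping slack nonnegative, which is what allows dropping $\|\y_t - \y_{t-1}\|^2$ from the final bound. There is no new technical ingredient compared with Lemma~\ref{Lemma:nsc-nonsmooth-neighbor}; setting $\mu = 0$ and fixing $\eta_\y^{t-1} = \eta_\y$ in that proof essentially produces the deterministic claim, and the stochastic claim follows from the standard variance-splitting already used in Lemma~\ref{Lemma:nc-smooth-neighbor}.
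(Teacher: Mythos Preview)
Your proposal is correct and follows exactly the paper's approach: the paper's own proof is the single sentence ``The proof is the same as that of Lemma~\ref{Lemma:nsc-nonsmooth-neighbor} with $\mu = 0$,'' which is precisely what you outline (and in fact you spell out the details more fully than the paper does).
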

\begin{proof}
The proof is the same as that of Lemma~\ref{Lemma:nsc-nonsmooth-neighbor} with $\mu = 0$. 
\end{proof}
\begin{lemma}\label{Lemma:nc-nonsmooth-obj}
The iterates $\{\x_t\}_{t \geq 1}$ generated by Algorithm~\ref{Algorithm:TTGDA} satisfy
\begin{equation*}
\tfrac{1}{T+1}\left(\sum_{t=0}^T \Delta_t\right) \leq \eta_\x B L^2 + \tfrac{D^2}{2B\eta_\y} + \tfrac{\eta_\y L^2}{2}.  
\end{equation*}
The iterates $\{\x_t\}_{t \geq 1}$ generated by Algorithm~\ref{Algorithm:TTSGDA} satisfy
\begin{equation*}
\tfrac{1}{T+1}\left(\sum_{t=0}^T \Delta_t\right) \leq \eta_\x B (L^2+\sigma^2) + \tfrac{D^2}{2B\eta_\y} + \eta_\y(L^2 + \sigma^2). 
\end{equation*}
\end{lemma}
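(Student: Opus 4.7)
The plan is to mimic the blocking/localization strategy used in the proof of Lemma~\ref{Lemma:nc-smooth-obj}, feeding it with the per-iterate bound of Lemma~\ref{Lemma:nc-nonsmooth-neighbor} instead of its smooth counterpart. The idea is to partition the iterates into consecutive blocks of length $B$, apply the neighbor bound inside each block with the auxiliary index $s$ frozen at the block's left endpoint, and telescope the squared-distance terms so that each block contributes only $D^2$ rather than a diameter sum that grows with $t$.

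Concretely, I assume without loss of generality that $B$ divides $T+1$, so that
\begin{equation*}
\sum_{t=0}^T \Delta_t \;=\; \sum_{j=0}^{(T+1)/B - 1}\;\sum_{t = jB}^{(j+1)B - 1} \Delta_t.
\end{equation*}
Inside the block indexed by $j$, I would fix $s = jB$ and apply the first bound of Lemma~\ref{Lemma:nc-nonsmooth-neighbor} (after re-indexing $t \mapsto t+1$) to each $\Delta_t$ with $t \in \{jB, \ldots, (j+1)B-1\}$. Summing within the block, the arithmetic term $2\eta_\x L^2 (t - jB)$ contributes at most $\eta_\x L^2 B^2$; the squared-distance differences telescope, leaving only $\tfrac{1}{2\eta_\y}(\|\y_{jB} - \y^\star(\x_{jB})\|^2 - \|\y_{(j+1)B} - \y^\star(\x_{jB})\|^2) \leq D^2/(2\eta_\y)$; and the constant $\tfrac{1}{2}\eta_\y L^2$ term sums to $B\eta_\y L^2/2$. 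Aggregating across all $(T+1)/B$ blocks and dividing by $T+1$ produces exactly the first inequality of the lemma.

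The stochastic statement is handled in parallel: I would invoke the second bound of Lemma~\ref{Lemma:nc-nonsmooth-neighbor} inside each block under expectation, replace $L^2$ by $L^2 + \sigma^2$ wherever it appears, and telescope $\EE[\|\y_t - \y^\star(\x_{jB})\|^2]$ using the deterministic diameter bound $D^2$. No fresh idea is required beyond making sure that the conditional-expectation machinery from Lemma~\ref{Lemma:nsc-nonsmooth-neighbor}'s proof is respected.

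Compared with Lemma~\ref{Lemma:nc-smooth-obj}, this nonsmooth version is actually \emph{easier}: the neighbor bound in Lemma~\ref{Lemma:nc-nonsmooth-neighbor} has no residual $f(\x_t,\y_t) - f(\x_{t-1},\y_{t-1})$ term (that was an artifact of using smoothness of $f(\cdot,\y)$ to split through $f(\x_{t-1},\y_t)$), so we do not have to absorb a global boundary contribution, and the $\Delta_0/(T+1)$ summand from the smooth statement disappears. The only real obstacle is the bookkeeping around block boundaries: one must be careful not to telescope \emph{across} blocks, since the anchor $\y^\star(\x_{jB})$ changes between blocks, but telescoping only \emph{within} each block is what the lemma uses, and this is safe.
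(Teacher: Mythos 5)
Your proof is correct and takes essentially the same route as the paper: the paper's proof of this lemma simply reapplies the blocking/localization argument of Lemma~\ref{Lemma:nc-smooth-obj} to the per-iterate bounds of Lemma~\ref{Lemma:nc-nonsmooth-neighbor}, which is exactly the block decomposition with anchor $s = jB$, within-block telescoping of the $\|\y_t - \y^\star(\x_{jB})\|^2$ terms, and averaging over $(T+1)/B$ blocks that you describe. Your side observation that the boundary term $f(\x_{T+1},\y_{T+1}) - f(\x_0,\y_0)$ (and hence the $\Delta_0/(T+1)$ summand) present in the smooth case is absent here is also accurate.
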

\begin{proof}
Applying the same argument used for proving Lemma~\ref{Lemma:nc-smooth-obj} but with the two inequalities in Lemma~\ref{Lemma:nc-nonsmooth-neighbor}, we can obtain the inequalities for Algorithm~\ref{Algorithm:TTGDA} and~\ref{Algorithm:TTSGDA}. 
\end{proof}
\paragraph{Proof of Theorem~\ref{Thm:nc-nonsmooth}.} We first prove the results for Algorithm~\ref{Algorithm:TTGDA}. Applying the same argument used for proving Theorem~\ref{Thm:nc-smooth} but with the first inequalities in Lemmas~\ref{Lemma:nonsmooth-descent} and~\ref{Lemma:nc-nonsmooth-obj}, we have
\begin{equation*}
\tfrac{1}{T+1}\left(\sum_{t=0}^T \|\grad \Phi_{1/2\rho}(\x_t)\|^2\right) \leq \tfrac{4\Delta_\Phi}{\eta_\x (T+1)} + 8\rho(\eta_\x BL^2 + \tfrac{D^2}{2B\eta_\y} + \tfrac{\eta_\y L^2}{2}) + 4\eta_\x\rho L^2. 
\end{equation*}
Letting $B = \left\lfloor\frac{D}{2L}\sqrt{\frac{1}{\eta_\x\eta_\y}}\right\rfloor + 1$, we have
\begin{equation*}
\tfrac{1}{T+1}\left(\sum_{t=0}^T \|\grad \Phi_{1/2\rho}(\x_t)\|^2\right) \leq \tfrac{4\Delta_\Phi}{\eta_\x (T+1)} + 16\rho LD\sqrt{\tfrac{\eta_\x}{\eta_\y}} + 4\eta_\y\rho L^2 + 12\eta_\x\rho L^2. 
\end{equation*}
By the definition of $\eta_\x$ and $\eta_\y$, we have
\begin{equation*}
\tfrac{1}{T+1}\left(\sum_{t=0}^T \|\grad \Phi_{1/2\rho}(\x_t)\|^2 \right) \leq \tfrac{4\Delta_\Phi}{\eta_\x (T+1)} + \tfrac{3\epsilon^2}{4}. 
\end{equation*}
This implies that the number of gradient evaluations required by Algorithm \ref{Algorithm:TTGDA} to return an $\epsilon$-stationary point is
\begin{equation*}
O\left(\tfrac{\rho L^2\Delta_\Phi}{\epsilon^4}\max\left\{1, \tfrac{\rho^2 L^2D^2}{\epsilon^4}\right\}\right). 
\end{equation*}
We now prove the results for Algorithm~\ref{Algorithm:TTSGDA}. Applying the same argument used for analyzing Algorithm~\ref{Algorithm:TTGDA} but with the second inequalities in Lemmas~\ref{Lemma:nonsmooth-descent} and~\ref{Lemma:nc-nonsmooth-obj}, we have
\begin{equation*}
\tfrac{1}{T+1}\left(\sum_{t=0}^T \EE[\|\grad \Phi_{1/2\rho}(\x_t)\|^2]\right) \leq \tfrac{4\Delta_\Phi}{\eta_\x (T+1)} + 8\rho(\eta_\x B(L^2 + \sigma^2) + \tfrac{D^2}{2B\eta_\y} + \eta_\y(L^2 + \sigma^2)) + 4\eta_\x\rho(L^2 + \sigma^2). 
\end{equation*}
Letting $B = \left\lfloor \frac{D}{2}\sqrt{\frac{1}{\eta_\x\eta_\y(L^2+\sigma^2)}}\right\rfloor + 1$, we have
\begin{equation*}
\tfrac{1}{T+1}\left(\sum_{t=0}^T \EE[\|\grad \Phi_{1/2\rho}(\x_t)\|^2]\right) \leq \tfrac{4\Delta_\Phi}{\eta_\x (T+1)} + 16\rho D\sqrt{\tfrac{\eta_\x (L^2+\sigma^2)}{\eta_\y}} + 8\eta_\y\rho(L^2 + \sigma^2) + 12\eta_\x\rho(L^2 + \sigma^2). 
\end{equation*}
By the definition of $\eta_\x$ and $\eta_\y$, we have
\begin{equation*}
\tfrac{1}{T+1}\left(\sum_{t=0}^T \EE[\|\grad \Phi_{1/2\rho}(\x_t)\|^2]\right) \leq \tfrac{4\Delta_\Phi}{\eta_\x (T+1)} + \tfrac{3\epsilon^2}{4}. 
\end{equation*}
This implies that the number of stochastic gradient evaluations required by Algorithm \ref{Algorithm:TTSGDA} to return an $\epsilon$-stationary point is
\begin{equation*}
O\left(\tfrac{\rho\left(L^2 + \sigma^2\right)\Delta_\Phi}{\epsilon^4}\max\left\{1, \tfrac{\rho^2(L^2 + \sigma^2)D^2}{\epsilon^4}\right\}\right). 
\end{equation*}
This completes the proof.

\end{document}